\let\origNabla\nabla
\renewcommand*\nabla{\mathlarger\origNabla}
\theoremstyle{plain}
\newtheorem{theorem}{Theorem}[section]
\newtheorem{lemma}[theorem]{Lemma}
\theoremstyle{definition}
\newtheorem{definition}[theorem]{Definition}
\theoremstyle{remark}
\newtheorem{remark}[theorem]{Remark}
\icmltitlerunning{Mitigating Propagation Failures in PINNs using R3 Sampling}
\begin{document}

\twocolumn[
\icmltitle{Mitigating Propagation Failures in Physics-informed Neural Networks  \\ using Retain-Resample-Release (R3) Sampling}




\begin{icmlauthorlist}
\icmlauthor{Arka Daw}{xxx}
\icmlauthor{Jie Bu}{xxx}
\icmlauthor{Sifan Wang}{yyy}
\icmlauthor{Paris Perdikaris}{yyy}
\icmlauthor{Anuj Karpatne}{xxx}
\end{icmlauthorlist}

\icmlaffiliation{xxx}{Department of Computer Science, Virginia Tech, Blacksburg, Virginia, USA}
\icmlaffiliation{yyy}{University of Pennsylvania, Philadelphia, Pennsylvania, USA}

\icmlcorrespondingauthor{Arka Daw}{darka@vt.edu}

\icmlkeywords{Physics-informed Neural Networks, Partial Differential Equations (PDEs), AI for Science}
\vskip 0.3in
]




\begin{abstract}
Despite the success of physics-informed neural networks (PINNs) in approximating partial differential equations (PDEs), PINNs can sometimes fail to converge to the correct solution in problems involving complicated PDEs. This is reflected in several recent studies on characterizing the ``failure modes'' of PINNs,  although a thorough understanding of the connection between PINN failure modes and sampling strategies is missing. In this paper, we provide a novel perspective of failure modes of PINNs by hypothesizing that training PINNs relies on successful ``propagation'' of solution from initial and/or boundary condition points to interior points. We show that PINNs with poor sampling strategies can get stuck at trivial solutions if there are \textit{propagation failures}, characterized by highly imbalanced PDE residual fields. To mitigate propagation failures, we propose a novel \textit{Retain-Resample-Release sampling} (R3) algorithm that can incrementally accumulate collocation points in regions of high PDE residuals with little to no computational overhead. We provide an extension of R3 sampling to respect the principle of causality while solving time-dependent PDEs. We theoretically analyze the behavior of R3 sampling and empirically demonstrate its efficacy and efficiency in comparison with baselines on a variety of PDE problems. 
\end{abstract}

\section{Introduction}
Our understanding of physical systems in a number of domains largely relies on our ability to solve partial differential equations (PDEs), and hence, enhancing PDE solution accuracy and computational speed can yield substantial benefits.
Physics-informed neural networks (PINNs) \citep{raissi2019physics} represent a seminal line of work in deep learning for solving PDEs.
The basic idea of PINNs is to train a neural network to minimize errors w.r.t. the PDE solution provided at initial/boundary points of a spatio-temporal domain, as well as the PDE residuals observed over a sample of interior points, referred to as collocation points. 
Recent success with PINNs has shown significant promise across a broad range of scientific applications, from fluid dynamics \citep{rao2020physics, zhu2021machine}, medical imaging \citep{sahli2020physics, van2022physics}, material science \citep{shao2020pinn, lin2022development}, geophysics \citep{yang2021revisit, voytan2020wave} and climate modeling \citep{lutjens2021pce}.
Despite the success of PINNs, it is known that PINNs  sometimes fail to converge to the correct solution in problems involving complicated PDEs, as reflected in several recent studies on characterizing the ``failure modes'' of PINNs \citep{wang2020understanding, wang2022and, krishnapriyan2021characterizing}. Many of these failure modes are related to the susceptibility of PINNs in getting stuck at trivial solutions acting as poor local minima, due to the unique optimization challenges of PINNs. In particular, training PINNs is different from conventional deep learning problems as we only have access to the correct solution on the initial and/or boundary points, while for all interior points, we can only compute PDE residuals. Also, minimizing PDE residuals does not guarantee convergence to a correct solution since there are many trivial solutions of commonly observed PDEs that  show 0 residuals. While previous studies have mainly focused on modifying network architectures or balancing loss functions during PINN training, the effect of sampling collocation points on avoiding failure modes of PINNs has been largely overlooked. Although some previous approaches have explored the effect of sampling strategies on PINN training \citep{wang2022is,lu2021deepxde}, they either suffer from large computation costs or fail to converge to  correct solutions, empirically demonstrated in our results.

In this work, we present a novel perspective of failure modes of PINNs by postulating the propagation hypothesis: ``in order for PINNs to avoid converging to trivial solutions at interior points, the correct solution must be \textit{propagated} from the initial/boundary points to the interior points.'' When this propagation is hindered, PINNs can get stuck at trivial solutions that are difficult to escape, referred to as the \textit{propagation failure} mode.
This hypothesis is motivated from a similar behavior observed in numerical methods where the solution of the PDE at initial/boundary points are iteratively propagated to interior points using finite differencing schemes \citep{leveque2007finite}. 

We show that propagation failures in PINNs are characterized by highly imbalanced PDE residual fields, making it difficult to adequately represent the high residual regions in the set of collocation points at every iteration. This motivates us to develop sampling strategies that dynamically focus on collocation points from high residual regions during PINN training. This is related to the idea of {local-adaptive mesh refinement} used in FEM \citep{zienkiewicz2005finite} to refine the computational mesh in regions with high errors.

We propose a novel \emph{Retain-Resample-Release sampling} (R3) strategy that can accumulate collocation points in high PDE residual regions, thereby dynamically emphasizing on these skewed regions as we progress in training iterations. We theoretically show that R3 \textit{retains} points from high residual regions if they persist over iterations (Retain Property) and \textit{releases} points if they have been resolved by PINN training (Release Property), while maintaining non-zero representation of points \textit{resampled} from a uniform distribution over the entire domain (Resample Property). We also provide a causal extension of our proposed R3 sampling algorithm (Causal R3) that can explicitly encode the strong inductive bias of causality in propagating the solution from initial points to interior points over training iterations, when solving time-dependent PDEs. We empirically evaluate the performance of R3 in multiple benchmark PDE problems. We show the R3  and Causal R3 are able to mitigate propagation failure modes and converge to the correct solution with significantly smaller sample sizes as compared to baseline methods, while incurring negligible computational overhead. 

The novel contributions of our work are as follows: (1) We provide a novel perspective for characterizing failure modes in PINNs by postulating the ``Propagation Hypothesis’’ and empirically demonstrate how regions with high-PDE residuals lead to propagation failures in PINNs. (2) We propose a novel R3 sampling algorithm to adaptively sample collocation points in PINNs that shows superior prediction performance empirically with little to no computational overhead compared to existing methods. (3) We theoretically prove the three key properties of R3 sampling: Retain, Resample, and Release properties.

\vspace{-1ex}
\section{Background and Related Work}
\label{sec:related}
\textbf{Physics-Informed Neural Networks (PINNs).} The basic formulation of PINN \citep{raissi2017physics1} is to use a neural network $f_{\theta}(x, t)$ to infer the forward solution $u$ of a non-linear PDE:

\vspace{-5ex}

\begin{align}
     &u_t + \mathcal{N}_x[u]=0, \: x \in \mathcal{X}, t \in [0, T]; \nonumber \\
     &u(x, 0) = h(x), \: x \in \mathcal{X}; \\
     &u(x, t) = g(x, t), \: t \in [0, T], x \in \partial \mathcal{X} \nonumber
\end{align}
\vspace{-4ex}

where $x$ and $t$ are the space and time coordinates, respectively, $\mathcal{X}$ is the spatial domain, $\partial \mathcal{X}$ is the boundary of spatial domain, $T$ is the time horizon, and $\mathcal{N}_x$ is the non-linear differential operator. The PDE is enforced on the entire spatio-temporal domain ($\Omega = \mathcal{X} \times [0, T]$) on a set of collocation points $\{\mathbf{x_r}^i = (x_r^i, t_r^i)\}_{i=1}^{N_r}$ by computing the PDE residual ($\mathcal{R}(x, t)$) and the corresponding PDE Loss ($\mathcal{L}_r$) as follows:  

\vspace{-5ex}

\begin{align}
    &\mathcal{R}_{\theta}(x, t) = \frac{\partial}{\partial t}f_{\theta}(x, t) - \mathcal{N}_x[f_{\theta}(x, t)] \\ \vspace{-3ex}
    &\mathcal{L}_r(\theta) = \mathbb{E}_{\mathbf{x_r} \sim \mathcal{U}(\Omega)}[\mathcal{R}_{\theta}(\mathbf{x_r})^2] \approx  \frac{1}{N_r}\sum_{i=1}^{N_r}[\mathcal{R}_{\theta}(x^i_r, t^i_r)]^2
\end{align}
\vspace{-4ex}

where $\mathcal{L}_r$ is the expectation of the squared PDE Residuals over collocation points sampled from a uniform distribution $\mathcal{U}$. 
PINNs approximate the solution of the PDE by optimizing the following overall loss function $\mathcal{L}= \lambda_{r} \mathcal{L}_r(\theta) + \lambda_{bc} \mathcal{L}_{bc}(\theta) + \lambda_{ic} \mathcal{L}_{ic}(\theta)$,
where $\mathcal{L}_{ic}$ and $\mathcal{L}_{bc}$ are the mean squared loss on the initial and boundary data respectively, and $\lambda_{r}, \lambda_{ic}, \lambda_{bc}$ are hyperparameters that control the interplay between the different loss terms. Although PINNs can be applied to inverse problems, i.e., to estimate PDE parameters from observations, we only focus on forward problems in this paper.

\textbf{Prior Work on Characterizing Failure Modes of PINNs.} Despite the popularity of PINNs in approximating PDEs, several works have emphasized the presence of failure modes while training PINNs. One early work \citep{wang2020understanding} demonstrated that imbalance in the gradients of multiple loss terms could lead to poor convergence of PINNs, motivating the development of Adaptive PINNs. Another recent development \citep{wang2022and} made use of the Neural Tangent Kernel (NTK) theory to indicate that the different convergence rates of the loss terms can lead to training instabilities. Large values of PDE coefficients have also been connected to possible failure modes in PINNs \citep{krishnapriyan2021characterizing}. In another line of work, the tendency of PINNs to get stuck at trivial solutions due to poor initializations has been demonstrated theoretically in \cite{wong2022learning} and empirically in \cite{rohrhofer2022understanding}. In all these works, the effect of sampling collocation points on PINN failure modes has  largely been overlooked. Although some recent works have explored strategies to grow the representation of  collocation points with high residuals, either by 
modifying the sampling procedure \cite{wu2022comprehensive,lu2021deepxde,nabian2021efficient} or choosing higher-order $L^p$ norms of PDE loss \cite{wang2022is}.
In another recent line of work on Causal PINNs \citep{wang2022respecting}, it was shown that traditional approaches for training PINNs can violate the principle of causality for time-dependent PDEs. Hence, they proposed an explicit way of incorporating the causal structure in the training procedure. Further, a recent study introduces the concept of ``fixed points'' $u^*$ \citep{rohrhofer2023role} which were defined as the roots of the non-linear PDE function, i.e., $\mathcal{R}[u^*] = 0$ (trivial solution $u=0$ is a special case of fixed point). PINNs are attracted towards these ``fixed points'' during the initial training iterations, and can get trapped in local minimas, leading to premature convergence.

\section{Propagation Hypothesis} 
\label{sec:prop_hypo}

\begin{figure}[t]
    \centering
    \includegraphics[width=0.49\textwidth]{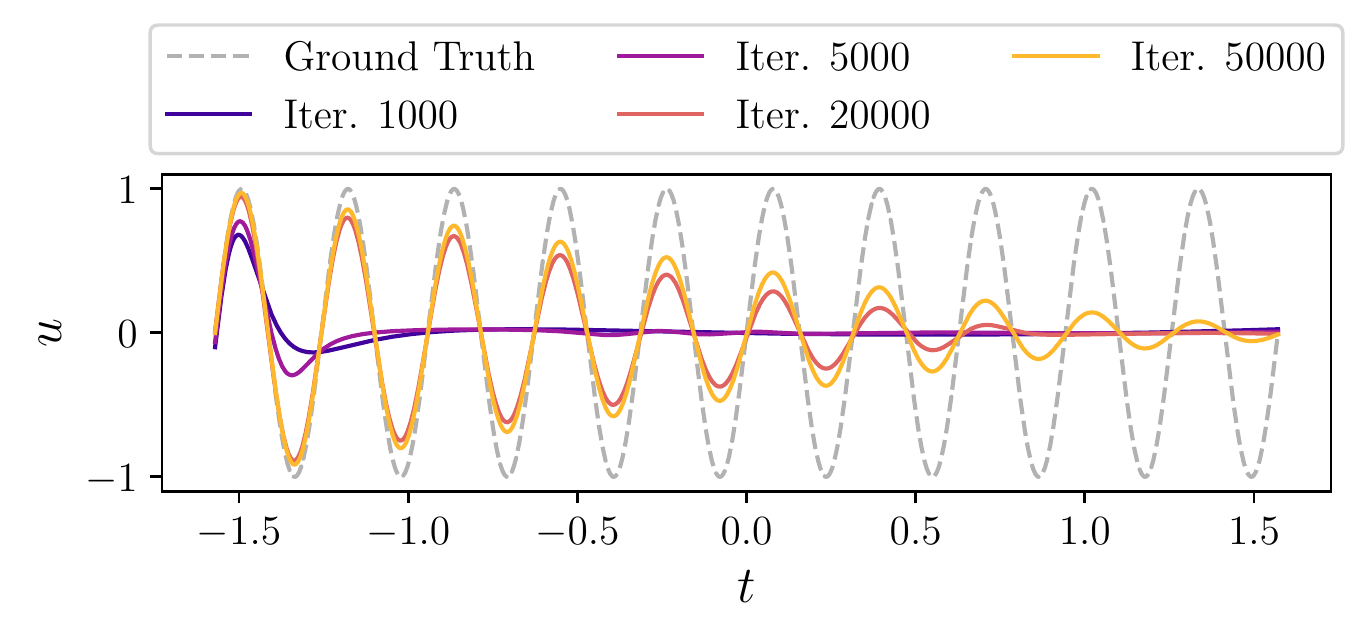}
    \vspace{-5ex}
    \caption{PINN solutions for a simple ODE: $u_{xx} + k^2 u = 0$ ($k =20$)  with the analytical solution, $u=A\sin(kx) + B\cos(kx)$. The boundary condition was set to $u(-\pi/2)=0$, and the PINN is trained with 1000 equispaced collocation points. We can see smooth \textit{propagation} of the correct solution from the boundary point at $x=0$ to interior points ($x>0$) as we increase training iterations. }
    \label{fig:propagation_simple}
    \vspace{-3ex}
\end{figure}

\vspace{-1ex}
\textbf{What is Unique About Training PINNs?} Training PINNs presents fundamentally different optimization challenges than those encountered in conventional deep learning problems. In a conventional supervised learning problem,
the correct solution for every training sample is known and the training samples are considered representative of the test samples such that the trained model can easily be extrapolated on closely situated test samples. 
However, in the context of PINNs, we only have access to the ``correct'' solution of the PDE on the initial and/or boundary points, while not having any labels for the interior points in the spatio-temporal domain $\Omega$. Note that the interior points in $\Omega$ can be quite far away from the initial/boundary points, making extrapolation difficult. Further, training PINNs involves minimizing the PDE residuals over a set of collocation points sampled from $\Omega$. However, minimizing PDE residuals alone is not sufficient to ensure convergence to the correct solution, since there may exist many trivial solutions of a PDE showing very small residuals. For example, $u(x,t) = 0$ is a trivial solution for any homogeneous PDE, which a neural network is likely to get stuck at in the absence of correct solution at initial/boundary points. Another unique aspect of training PINNs is that minimizing PDE residuals requires computing the gradients of the output w.r.t. $(x,t)$ (e.g., $u_x$ and $u_t$).
Hence, the solution at a collocation point is affected by the solutions at nearby points leading to local propagation of solutions.


\textbf{Propagation Hypothesis.} In light of the unique properties of PINNs, we postulate that in order for PINNs to converge to the ``correct'' solution, the correct solution must propagate from the initial and/or boundary points to the interior points as we progress in training iterations. We draw inspiration for this hypothesis
from a similar behavior observed in numerical methods for solving PDEs, where the solution of the PDE at initial/boundary points are iteratively propagated to interior points using finite differencing schemes \citep{leveque2007finite}. Figure \ref{fig:propagation_simple} demonstrates the propagation hypothesis of PINNs for a simple ordinary differential equation (ODE).




\textbf{Propagation \textit{Failure}: Why It Happens and How to Diagnose?}
As a corollary of the propagation hypothesis, PINNs can fail to converge to the correct solution if the solution at initial/boundary points is \textit{unable to propagate} to interior points during the training process. We call this phenomenon the ``\textit{propagation failure}'' mode of PINNs. This is likely to happen if some collocation points start converging to trivial solutions before the correct solution from initial/boundary points is able to reach them. 
Such collocation points would also propagate their trivial solutions to nearby interior points, leading to a cascading effect in the learning of trivial solutions over large regions of $\Omega$ and further hindering the propagation of the correct solution from initial/boundary points. 

To \textit{diagnose} propagation failures, note that the PDE residuals are expected to be low over both types of regions: regions that have converged to the correct solution and regions that have converged to trivial solutions. However, the boundaries of these two types of regions would show a sharp discontinuity in solutions, leading to very high PDE residuals in very narrow regions. 
A similar phenomenon is observed in numerical methods where sharp high-error regions disrupt the evolution of the PDE solution at surrounding regions, leading to cascading of errors.
We use the imbalance of high PDE residual regions as a diagnosis tool for characterizing propagation failure modes in PINNs.


 
To demonstrate propagation failure, let us consider an example PDE for the convection equation: 
$\frac{\partial u}{\partial t} + \beta \frac{\partial u}{\partial x} = 0,  u(x, 0) = h(x)$, 
where $\beta$ is the convection coefficient and $h(x)$ is the initial condition (see Appendix \ref{sec:details_pde} for details about this PDE). 
In a previous work \citep{krishnapriyan2021characterizing}, it has been shown that PINNs fail to converge for this PDE for $\beta > 10$. We experiment with two cases, $\beta = 10$ and $\beta = 50$, in Figure \ref{fig:propagation_hypothesis}. We can see that the PDE loss steadily decreases with training iterations for both these cases,  but the relative error w.r.t. the ground-truth solution only decreases for $\beta=10$, while for $\beta=50$, it remains flat. This suggests that for $\beta=50$, PINN is likely getting stuck at a trivial solution that shows low PDE residuals but high errors. To diagnose this failure mode, we plot two additional metrics in Figure \ref{fig:propagation_hypothesis} to measure the imbalance in high PDE residual regions: Fisher-Pearson's coefficient of Skewness \citep{kokoska2000crc} and Fisher's Kurtosis \citep{kokoska2000crc} (see Appendix \ref{sec:metric_details} for computation details). High Skewness indicates lack of symmetry in the distribution of PDE residuals while high Kurtosis indicates the presence of a heavy-tail. 
For $\beta=10$, we can see that both Skewness and Kurtosis are relatively small across all iterations, indicating absence of imbalance in the residual field. However, for $\beta=50$, both these metrics shoot up significantly as the training progresses, which indicates the formation of very high residuals in very narrow regions---a characteristic feature of the propagation failure mode. Figure \ref{fig:convection_heatmap} confirms that this indeed is the case by visualizing the PINN solution and PDE residual maps. Specifically, we observe that for $\beta = 50$, the PDE residual is very high at the bottom left corner, which overlaps with the region where we observe the hindrance in propagation of the PDE solution. We see similar trends of propagation failure for other values of $\beta > 10$ (see Appendix \ref{sec:prop_failure_viz}). We also illustrate in Appendix \ref{sec:prop_failure_viz_ks_equation} that regions with high error rates contribute to the propagation failures in PINNs for another complex PDE, specifically the Kuramoto-Sivashinksy Equation (chaotic regime).


\begin{figure}[ht]
    \vspace{-1ex}
    \centering
    \includegraphics[width=0.5\textwidth]{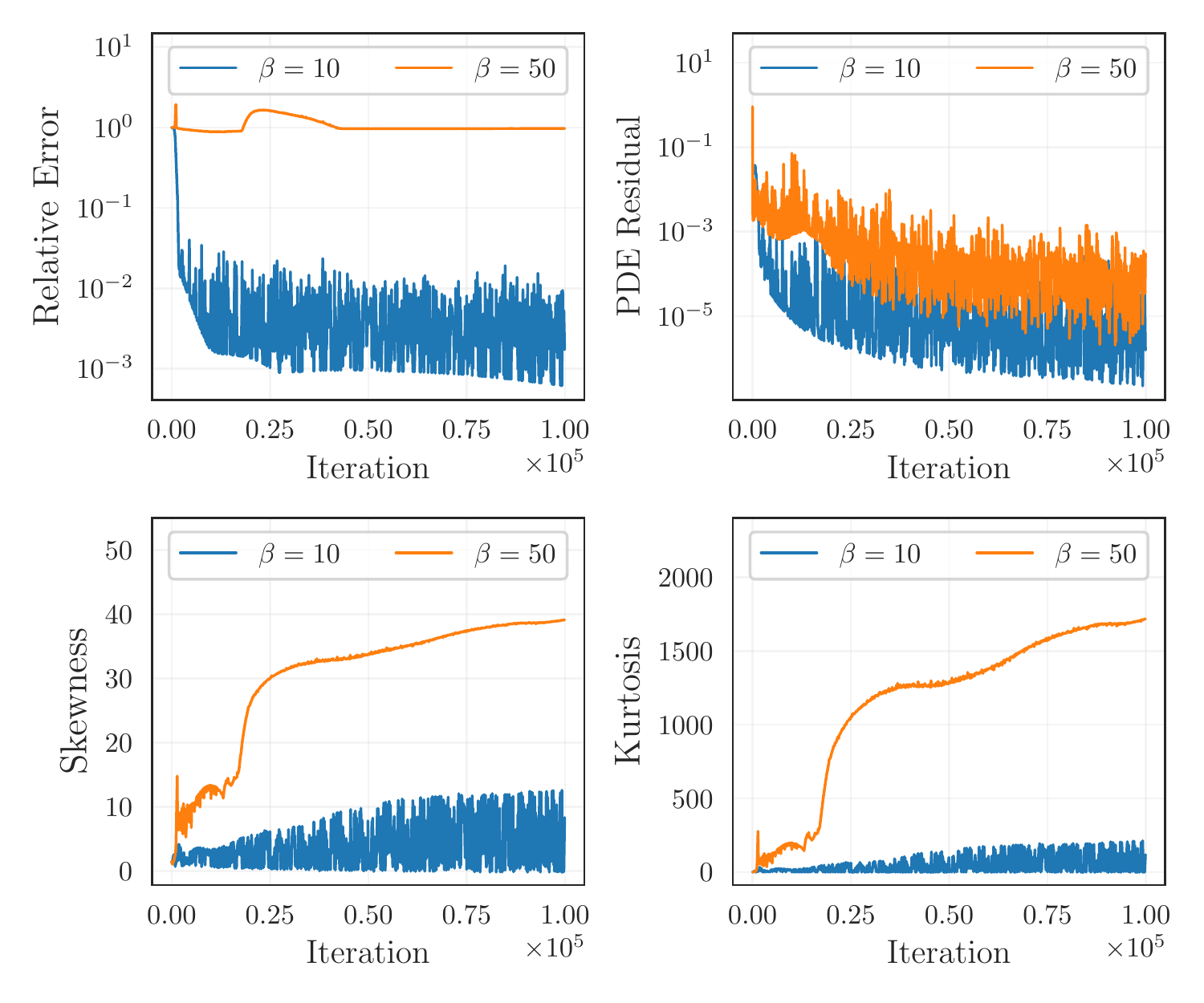}
    \vspace{-2ex}
    \caption{Demonstration of propagation failure using skewness and kurtosis while solving the convection equation with $\beta=50$.}
    \label{fig:propagation_hypothesis}
\end{figure}

\begin{figure*}[ht]

    \centering
    \includegraphics[width=1.01\textwidth]{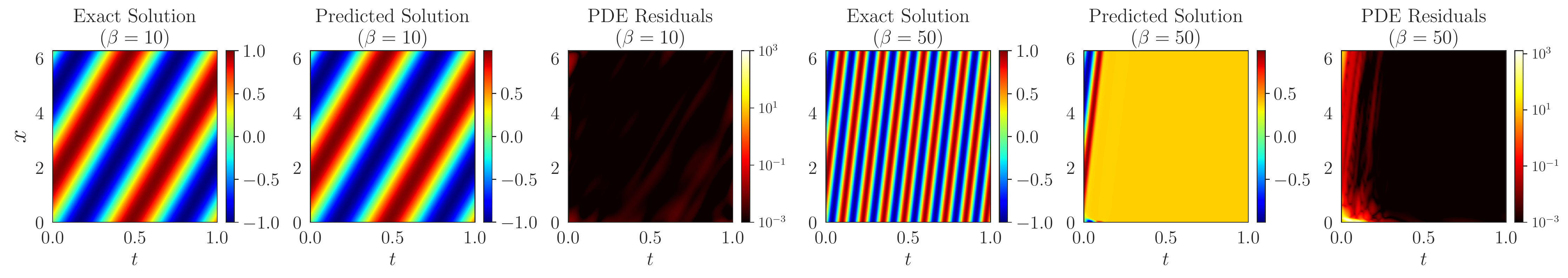}
    \vspace{-3ex}
    \caption{Demonstration of the Exact PDE solution, the predicted PDE solution, and the PDE Residual Field for the convection equation with $\beta = 10$ (first three figures) and $\beta = 50$ (last three figures) respectively.}
    \label{fig:convection_heatmap}
    \vspace{-1ex}
\end{figure*}

\begin{remark}
\emph{``Propagation failures''} in PINNs are characterized by skewed regions with very high residuals acting as ``barriers'' in the flow of information. Algorithms that can selectively focus on high residual regions during training can potentially break such barriers and thus offer some respite.
\end{remark}

\textbf{Prior Works Focussing on High Residual Regions.} Recently, a number of algorithms have been proposed to increase the importance of collocation points from high residual regions in PINN training, which can be broadly grouped into two categories.  The first category includes methods that alter the sampling strategy such that a point $\mathbf{x_r}$ is picked up as a collocation point with probability proportional to its residual, i.e.,  $p(\mathbf{x_r}) \propto |\mathcal{R}(\mathbf{x_r})|^k$, where $k \geq 1$. This includes residual-based adaptive refinement (RAR) methods \citep{lu2021deepxde} and its variants \cite{wu2022comprehensive,nabian2021efficient}, where a dense set of collocation points $\mathcal{P}_{dense}$ is maintained to approximate the continuous residual field $\mathcal{R}(\mathbf{x})$, and points with high residuals are regularly added from $\mathcal{P}_{dense}$ to the set of collocation points every $K$ iterations according to a sampling function. 

A second line of work was recently proposed in \cite{wang2022is}, where higher-order $L^p$ norms of the PDE loss (e.g., $L^\infty$) were advocated to be used in the training of PINNs in contrast to the standard practice of using $L^2$ norms, to ensure stability of learned solutions in control problems involving high-dimensional PDEs. Note that by using higher-order $L^p$ norms, we are effectively increasing the importance of collocation points from high residual regions in the PDE loss, thereby having a similar effect as increasing their representation in the sample of collocation points (see Proposition \ref{theorem:connection_lp_sampling} for more details).

\textbf{Limitations of Prior Work.}
There are two main challenges faced by the prior work described above that limit their effectiveness in mitigating propagation failures in PINNs. \\
(1) \textbf{High computational complexity:} 
Sampling methods such as RAR and its variants require using a dense set of collocation points $\mathcal{P}_{dense}$ (typically with $100\text{k}\sim1\text{M}$ points spread uniformly across the entire domain) to locate high residual regions, such that points from high residual regions can be added to the training set every $K$ iterations. This increases the computational cost in two ways. First, computing the PDE residuals on the entire dense set is very expensive. Second, the size of the training set keeps growing every $K$ iterations, further increasing the training costs at later iterations. See Appendix  \ref{sec:complexity} for a detailed analysis of the computational complexity of RAR based methods. \\
(2) \textbf{Difficulty selecting ideal values of $k$ or $p$:} While increasing the value of $k$ in sampling-based methods and $p$ in $L^p$ norm-based methods influences a greater skew in the sample set towards points with higher residuals, the optimal values of $k$ or $p$ are generally unknown for an arbitrary PDE. Additionally, choosing $L^\infty$ loss (or equivalently, only sampling collocation points with highest residuals) may not ideal for the training dynamics of PINNs, as it can lead to oscillatory behavior between different peaks of the PDE residual landscape, while forgetting to retain the solution over other regions. We empirically demonstrate in Section \ref{sec:results}, both sampling-based methods such as RAR and its variants as well as $L^\infty$ PDE loss-based methods suffer from poor performance in converging to the correct solution for complex PDE problems due to the aforementioned limitations.

\section{Proposed R3 Sampling Algorithm}
\label{sec:evo}

We propose a novel ``Retain-Resample-Release'' (R3) Sampling algorithm to overcome the limitations in prior works while effectively prioritizing the high residual regions during training to mitigate propagation failures. This algorithm has three key properties which are defined as follows: \\
(1) \textbf{\emph{Retain Property}}: To break propagation barriers, R3 focus on \textit{retaining} points from high residual regions such that the set of collocation points starts accumulating in these regions until the PINN training process eventually resolves them. This is similar to starting from $L^2$-norm and dynamically increasing the order of $L^p$ norm if high residual regions persist over iterations, in contrast to using a fixed value of $p$ as is done in prior works. \\
(2) \textbf{\emph{Resample Property}}: At every iteration, R3 ensures that the set of collocation points contains non-zero support of points resampled from a uniform distribution over the entire domain $\Omega$. This is done so that the collocation points do not collapse solely to high residual regions, which is one of the limitations of using $L^\infty$ loss. \\
(3) \textbf{\emph{Release Property}}: Upon sufficient minimization of a high residual region through PINN training, collocation points that were once accumulated from the region are \textit{released}, such that R3 can focus on minimizing other high residual regions in later iterations. Note that if the retained points are not released, the set of collocation points would keep growing, thus increasing the computational costs and biasing the sampling to these regions even at later iterations (which is one of the limitations of RAR-based methods).

Along with satisfying the above properties, the R3 algorithm also incurs little to no computational overhead in sampling collocation points from high residual regions, in contrast to prior works. Specifically, we are able to add points from high residual regions without maintaining a dense set of collocation points, $\mathcal{P}_{dense}$, and by only observing the  residuals over a small set of $N_r$ points at every iteration.

Algorithm \ref{alg:evo} shows the pseudo-code of our proposed R3 sampling strategy.  
At iteration 0, we start with an initial population $\mathcal{P}_0$ of $N_r$ points sampled from a uniform distribution. At iteration $i$, in order to update the population to the next iteration, we first construct the ``retained population'' $\mathcal{P}^r_i$ comprising of points from $\mathcal{P}_i$ falling in high residual regions. Specifically, we define the ``residual function'' $\mathcal{F}(\mathbf{x_r})$ for every collocation point $\mathbf{x_r}$ as the absolute value of the PDE residual of $\mathbf{x_r}$, i.e., $\mathcal{F}(\mathbf{x_r}) = |\mathcal{R}(\mathbf{x_r})|$. We compute $\tau_{i}$ as the expected value of residual function over all points in $\mathcal{P}_i$. Points in $\mathcal{P}_i$ with residual function values greater than $\tau_{i}$ are then considered to be part of the retained population $\mathcal{P}^r_i$, i.e.,  $\mathcal{P}^r_i \leftarrow \{\mathbf{x}_r^j: \mathcal{F}(\mathbf{x}_r^j)> \tau_{i} \}$. The remainder of collocation points in $\mathcal{P}_i$ with $\mathcal{F}(\mathbf{x}_r^j) \leq \tau_{i}$ are dropped and replaced with points re-sampled from a uniform distribution, thus constructing the ``re-sampled population'' $\mathcal{P}^s_i \leftarrow \{ \mathbf{x_r}^j: \mathbf{x_r}^j \sim \mathcal{U}(\Omega) \}$. The retained and re-sampled population are then merged to generate the population for the next iteration, $\mathcal{P}_{i+1}$. Note that the size of the population at every iteration is constant (equal to $N_r$). We schematically show the dynamics of collocation points in R3 Algorithm over training iterations in Figure \ref{fig:evosample_schematic}. 

\begin{algorithm}[ht]
\caption{Proposed R3 Sampling Algorithm For PINN}\label{alg:evo}
\small
\begin{algorithmic}[1]
\STATE Sample the initial population $\mathcal{P}_0$ of $N_r$ collocations point $\mathcal{P}_0 \leftarrow \{\mathbf{x_r}\}_{i=1}^{N_r}$ from a uniform distribution $\mathbf{x_r}^i \sim \mathcal{U}(\Omega)$, where $\Omega$ is the input domain ($\Omega = [0, T] \times \mathcal{X}$).
\FOR{i = 0 to max\_iterations - 1}
  \STATE Compute the residual function of  collocation points $\mathbf{x_r} \in \mathcal{P}_{i}$ as $\mathcal{F}(\mathbf{x_r}) = |\mathcal{R}(\mathbf{x_r})|$.
  \STATE Compute the threshold $\tau_{i} = \frac{1}{N_r}\sum_{j=1}^{N_r} \mathcal{F}(\mathbf{x_r}^j)$
  \STATE Select the retained population $\mathcal{P}^r_{i}$ such that $\mathcal{P}^r_{i} \leftarrow \{\mathbf{x_r}^j : \mathcal{F}(\mathbf{x_r}^j) > \tau_{it}\}$
  \STATE Generate the re-sampled population $\mathcal{P}^s_{i} \leftarrow \{\mathbf{x_r}^j: \mathbf{x_r}^j \sim \mathcal{U}(\Omega)\}$, s.t. $|\mathcal{P}^s_i| + |\mathcal{P}^r_i| = N_r $
  \STATE Merge the two populations $\mathcal{P}_{i+1} \leftarrow \mathcal{P}^r_{i} \cup \mathcal{P}^s_{i}$
\ENDFOR
\end{algorithmic}
\end{algorithm}

\begin{figure*}
    \centering
    \includegraphics[width=0.8\textwidth]{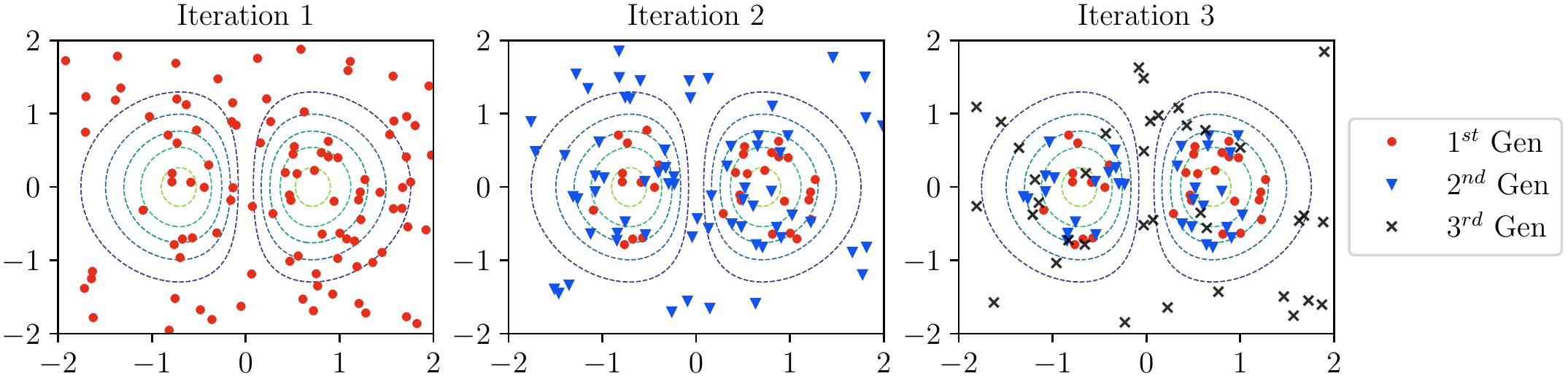}
    \caption{Schematic to describe our proposed R3 sampling algorithm, where collocation points are incrementally accumulated in regions with high PDE residuals (shown as contour lines).}
    \vspace{-3ex}
    \label{fig:evosample_schematic}
\end{figure*}


 

\subsection{Theoretical Validations of R3 Sampling properties} 
A unique feature of our proposed R3 sampling algorithm is that while prior works have only addressed a subset of the three sampling properties: Retain, Resample, and Release, we are able to satisfy all of them while being computationally efficient (i.e., without maintaining a dense set of collocation points). See Table \ref{tab:comp_baselines} in the Appendix for a comparison of our R3 sampling algorithm with prior works. In the following, we theoretically verify the ability of R3 to satisfy the three sampling properties.

(1) \textbf{Retain Property:} Theorem \ref{theorem:expectation_retained_pop} shows that for a fixed $\mathcal{F}(\mathbf{x})$ (e.g., when $\theta$ is kept constant), the expectation of the retained population in R3 sampling becomes maximum (equal to $L^\infty$) when the number of iterations approaches $\infty$.

\begin{theorem}[Accumulation  Dynamics Theorem]
\label{theorem:expectation_retained_pop}
    Let $\mathcal{F}_\theta(\mathbf{x}):\mathbb{R}^n \rightarrow \mathbb{R}^+$ be a fixed real-valued $k$-Lipschitz continuous objective function optimized using the R3 Sampling algorithm. Then, the expectation of the retained population  $\mathbb{E}_{\mathbf{x} \in \mathcal{P}^r}[\mathcal{F}(\mathbf{x})] \geq \max_\mathbf{x} \mathcal{F}(\mathbf{x}) - k\epsilon$ as iteration $i \rightarrow \infty$, for any arbitrarily small $\epsilon > 0$.
    \vspace{-1ex}
\end{theorem}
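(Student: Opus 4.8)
The plan is to track the evolution of the retained population across iterations and argue that the threshold $\tau_i = \mathbb{E}_{\mathbf{x}\in\mathcal{P}_i}[\mathcal{F}(\mathbf{x})]$ is monotonically non-decreasing, hence converges, and that in the limit the retained points concentrate near the maximizer of $\mathcal{F}$. First I would set up notation: write $M = \max_{\mathbf{x}} \mathcal{F}(\mathbf{x})$ (attained at some $\mathbf{x}^*$ since $\mathcal{F}$ is continuous on the domain, which we take to be compact), and observe that at each iteration the population $\mathcal{P}_{i+1}$ consists of the retained set $\mathcal{P}^r_i = \{\mathbf{x}\in\mathcal{P}_i : \mathcal{F}(\mathbf{x}) > \tau_i\}$ together with a freshly resampled set $\mathcal{P}^s_i$ drawn from $\mathcal{U}(\Omega)$. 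The key monotonicity observation is that every point \emph{kept} from $\mathcal{P}_i$ has $\mathcal{F}$-value strictly above $\tau_i$, so the retained subpopulation has mean strictly above $\tau_i$; the resampled points have mean $\mathbb{E}_{\mathcal{U}(\Omega)}[\mathcal{F}]$, which is itself $\geq$ ... (this is the subtle direction — see below). I would then show $\mathbb{E}[\tau_{i+1} \mid \mathcal{P}_i] \geq \tau_i$, so $\{\tau_i\}$ is a bounded (by $M$) submartingale and converges almost surely / in expectation to some limit $\tau_\infty \leq M$.

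Next I would argue that $\tau_\infty = M$. Suppose for contradiction $\tau_\infty = M - \delta$ for some $\delta > 0$. By $k$-Lipschitz continuity, there is a ball $B(\mathbf{x}^*, \delta/(2k))$ on which $\mathcal{F}(\mathbf{x}) > M - \delta/2 > \tau_\infty$, and this ball has positive $\mathcal{U}(\Omega)$-measure, so at each iteration the resampling step places points there with positive probability; such points clear the threshold and are retained, and their contribution pushes $\tau_{i+1}$ strictly above $\tau_i$ by a fixed positive amount infinitely often, contradicting convergence to $M - \delta$. Hence $\tau_i \to M$. Finally, since $\mathcal{P}^r_i$ consists precisely of points with $\mathcal{F}(\mathbf{x}) > \tau_i$, we get $\mathbb{E}_{\mathbf{x}\in\mathcal{P}^r_i}[\mathcal{F}(\mathbf{x})] > \tau_i \to M$, and combined with the trivial upper bound $\mathbb{E}_{\mathbf{x}\in\mathcal{P}^r_i}[\mathcal{F}(\mathbf{x})] \leq M$ this gives the claim; the explicit $M - k\epsilon$ form follows by taking $i$ large enough that $\tau_i > M - k\epsilon$ (indeed once $\tau_i$ is within $k\epsilon$ of $M$, or more directly from the Lipschitz argument pinning retained points within an $\epsilon$-ball of $\mathbf{x}^*$).

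I would be careful to state the argument in expectation (or almost surely, conditioning appropriately) since the resampling and hence the populations are random; the cleanest route is probably to define $a_i = \mathbb{E}[\tau_i]$ over the randomness of the algorithm, show $a_{i+1} \geq a_i$ with a strict gap bounded below whenever $a_i \leq M - \delta$, and conclude $a_i \to M$, then transfer back to the retained population. The main obstacle I anticipate is making the monotonicity $\mathbb{E}[\tau_{i+1}\mid\mathcal{P}_i] \geq \tau_i$ fully rigorous: the retained points have mean above $\tau_i$, but the \emph{resampled} points have mean $\mathbb{E}_{\mathcal{U}(\Omega)}[\mathcal{F}]$, which could in principle be below $\tau_i$ once $\tau_i$ is large, so the overall mean need not obviously increase. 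Resolving this requires a more careful accounting — either weighting by the (random) sizes $|\mathcal{P}^r_i|$ and $|\mathcal{P}^s_i|$ and showing the retained fraction stays large enough, or reformulating the target: rather than claiming $\tau_i$ itself is monotone, track the running maximum of $\mathcal{F}$-values ever retained, or the measure of the super-level set $\{\mathcal{F} > \tau_i\}$, and show that once a high-$\mathcal{F}$ point is resampled it is retained at least until the threshold rises past its value, giving a ratchet effect that forces $\tau_i \uparrow M$. I expect the published proof handles exactly this bookkeeping, likely by arguing the threshold is non-decreasing because points only leave the retained set when the threshold has already risen above them.
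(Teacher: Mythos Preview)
Your primary line---showing $\tau_i$ (or its expectation) is monotone nondecreasing and hence converges to $M$---has exactly the gap you flag: once $\tau_i$ has climbed past the uniform mean $\mathbb{E}_{\mathcal{U}(\Omega)}[\mathcal{F}]$, the resampled subpopulation pulls the next average down, and $\{\tau_i\}$ is neither monotone nor a submartingale in general. No weighting by $|\mathcal{P}^r_i|/|\mathcal{P}^s_i|$ rescues this directly, because when $\tau_i$ is near $M$ the retained set can be very small (possibly a single point), so the resampled mass dominates and $\tau_{i+1}$ can drop substantially.

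The paper does not attempt to show monotonicity of $\tau_i$ at all. Instead it runs precisely the ``ratchet'' argument you sketch at the end. It defines the $\epsilon$-maximal neighborhood $\mathcal{N}^\infty_\epsilon = \{\mathbf{x}: \mathcal{F}(\mathbf{x}) \geq M - k\epsilon\}$ (which has positive $\mathcal{U}(\Omega)$-measure by $k$-Lipschitz continuity) and proves two lemmas. The \emph{Entry Lemma}: if a point from $\mathcal{N}^\infty_\epsilon$ appears in the population at iteration $m$, it joins $\mathcal{P}^r_m$ unless already $\mathbb{E}_{\mathcal{P}^r_m}[\mathcal{F}] > M - k\epsilon$. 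The \emph{Exit Lemma}: such a point can be dropped at iteration $n>m$ only if $\mathbb{E}_{\mathcal{P}^r_n}[\mathcal{F}] \geq M - k\epsilon$. Both follow immediately from the chain $M - k\epsilon \leq \mathcal{F}(\mathbf{x})$ together with $\tau \leq \mathbb{E}_{\mathcal{P}^r}[\mathcal{F}]$. The theorem is then a contradiction: if $\mathbb{E}_{\mathcal{P}^r_i}[\mathcal{F}] < M - k\epsilon$ for all $i$, every point from $\mathcal{N}^\infty_\epsilon$ that is ever resampled is retained forever; since the resampled population is always non-empty (proved separately) and $\mathcal{N}^\infty_\epsilon$ has positive uniform measure, such points accumulate without bound in a population of fixed size $N_r$, which is impossible. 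The argument works directly with the retained-population mean, not the full-population threshold $\tau_i$, and never needs $\tau_i$ to be monotone.

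So your fallback instinct is the right one; the published proof is exactly that instinct made precise via the entry/exit lemmas and the finite-population contradiction, with no martingale or monotonicity bookkeeping on $\tau_i$.
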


The proof of Theorem \ref{theorem:expectation_retained_pop} can be found in  Appendix \ref{sec:grad_accum_proof}. 
This demonstrates the \textit{Retain property} of R3 sampling as points from high residual regions would keep accumulating in the retained population and make its expectation maximal if the residual function is kept fixed. Note that since the PINN optimizer is also minimizing the residuals at every iteration, we would not expect the residual function to be fixed unless a high residual region persists over a long number of iterations. In fact, points from a high residual region would keep on accumulating until they are resolved by the PINN optimizer and thus eventually released from $\mathcal{P}^r$. 

(2) \textbf{Resample Property:} Theorem \ref{theorem:uniform_background} states that the size of the resampled population $\mathcal{P}^s$ is always greater zero. As a result, there is always some background density of collocation points sampled from a uniform distribution, preventing the R3 Sampling algorithm from collapsing to high residual regions.

\begin{theorem}[Non-Empty Theorem]
\label{theorem:uniform_background}
    For any population $\mathcal{P}$ generated at an arbitrary iteration of R3 sampling, the re-sampled population is always non-empty, i.e., $|\mathcal{P}^s|>0$.
    \vspace{-1ex}
\end{theorem}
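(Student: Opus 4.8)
The plan is to exploit the single structural fact underlying Algorithm~\ref{alg:evo}: the threshold $\tau_i$ is the arithmetic \emph{mean} of the residual-function values over the current population $\mathcal{P}_i$, whereas a point is placed in the retained population $\mathcal{P}^r_i$ only if it \emph{strictly} exceeds that mean. So first I would fix an arbitrary iteration $i$, abbreviate $a_j := \mathcal{F}(\mathbf{x}_r^j)$ for the $N_r$ points of $\mathcal{P}_i$, and record that $\tau_i = \tfrac{1}{N_r}\sum_{j=1}^{N_r} a_j$ and $\mathcal{P}^r_i = \{\mathbf{x}_r^j : a_j > \tau_i\}$.

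Next I would show the population cannot be retained in its entirety. One clean way: let $m := \min_j a_j$; since a minimum is at most an average, $m \le \tau_i$, so the index $j^\star$ achieving $a_{j^\star}=m$ satisfies $a_{j^\star} \le \tau_i$ and hence $\mathbf{x}_r^{j^\star} \notin \mathcal{P}^r_i$. (Equivalently, by contradiction: if $a_j > \tau_i$ for every $j$, then $\tau_i = \tfrac{1}{N_r}\sum_j a_j > \tfrac{1}{N_r}\sum_j \tau_i = \tau_i$, which is absurd.) Either way $|\mathcal{P}^r_i| \le N_r - 1$. Since the algorithm fills the remaining slots by resampling so that $|\mathcal{P}^s_i| + |\mathcal{P}^r_i| = N_r$, we conclude $|\mathcal{P}^s_i| = N_r - |\mathcal{P}^r_i| \ge 1 > 0$, which is the claim, and the argument is uniform over all iterations $i$ (trivially so at $i=0$, where the whole population is resampled). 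For completeness I would flag the boundary case: if all residuals coincide, then $a_j=\tau_i$ for all $j$, so $\mathcal{P}^r_i=\emptyset$ and $\mathcal{P}^s_i=\mathcal{P}_i$ — still consistent with the statement.

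I do not expect any genuine obstacle here; the theorem is essentially the elementary observation that a finite collection of real numbers cannot all lie strictly above their own mean. The one point that actually matters is the strict inequality in the definition of $\mathcal{P}^r_i$: with ``$\ge$'' in place of ``$>$'' the conclusion would fail precisely in the all-equal-residuals case. It may also be worth stating the complementary bound $|\mathcal{P}^r_i| \ge 0$ (indeed $\mathcal{P}^r_i$ can be empty) to make explicit that R3 never ``freezes'' the population either, which is the counterpart to the Resample property asserted here.
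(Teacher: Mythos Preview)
Your proof is correct and matches the paper's approach: both rest on the elementary fact that a finite set of reals always contains an element at most equal to its mean, so the strict-inequality retain rule cannot keep all $N_r$ points, forcing $|\mathcal{P}^s_i|\ge 1$. Your write-up is in fact more detailed than the paper's (which states Property~1 of Lemma~\ref{lemma:pop_properties} in one sentence), and your remarks on the strict inequality and the all-equal boundary case are apt.
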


The proof of Theorem \ref{theorem:uniform_background} can be found in  Appendix \ref{sec:grad_accum_proof} stated as a part of Lemma \ref{lemma:pop_properties}. 

(3) \textbf{Release Property:} Let us define that for an arbitrary iteration $i$, a collocation point $\mathbf{x_r} \in \mathcal{P}_i$ is ``\emph{sufficiently minimized}'' if its residual function is less than $\tau_i$, i.e., $\mathcal{R}_\theta(\mathbf{x_r}) \leq \mathbb{E}_{\mathbf{x_r} \in \mathcal{P}_i}[\mathcal{R}_\theta(\mathbf{x_r})]=\tau_i$ (where $\tau_i$ is the threshold at the $i$-th iteration). Then, by definition, $\mathbf{x_r}$ will belong to the ``non-retained population'', i.e., it will be ``released'' from the population $\mathcal{P}_i$ and replaced by a new point re-sampled from the uniform distribution. Thus, R3 sampling satisfies the ``\emph{Release Property}'' by design.

\textbf{Connections to Numerical Methods:}
Note that R3 sampling shares a similar motivation as \textit{local-adaptive mesh refinement} methods developed for Finite Element Methods (FEM) \citep{zienkiewicz2005finite}, where the goal is to preferentially refine the computational mesh used in numerical methods based on localization of the errors. It is also related to the idea of \textit{boosting} in ensemble learning where training samples with  larger errors are assigned higher weights of being picked in the next epoch, to increasingly focus on high error regions \citep{schapire2003boosting}. 

\textbf{Connections to Evolutionary Algorithms:} Our proposed R3 sampling strategy also shares similarity with evolutionary algorithms that are used for modeling biological evolution \citep{eiben2003introduction}. The residual function $\mathcal{F}$ is equivalent to the fitness function defined for typical evolutionary algorithms. At each iteration, we retain the points with the largest fitness (i.e., points corresponding to the highest PDE residuals), which is similar to the ``survival of the fittest'' concept commonly found in evolutionary algorithms.

\subsection{Causal Extension of R3 Sampling (Causal R3)}
\label{sec:causalevo}

In problems with time-dependent PDEs, a strong prior dictating the propagation of solution  is the \textit{principle of causality}, where the solution of the PDE needs to be well-approximated at time $t$ before moving to time $t + \Delta t$. 
To incorporate this prior guidance, we present a Causal Extension of R3 (\textit{Causal R3}) that includes two modifications: (1) we develop a causal formulation of the PDE loss $\mathcal{L}_r$ that pays attention to the temporal evolution of PDE solutions over iterations, and (2) we develop a causally biased sampling scheme that respects the causal structure while sampling collocation points. We describe both these modifications in the following.




\textbf{Causal Formulation of PDE Loss.} The key idea here is to utilize a simple time-dependent gate function $g(t)$ that can explicitly enforce causality by revealing only a portion of the entire time-domain to PINN training. Specifically, we introduce a continuous gate function $g(t)= (1 - \tanh(\alpha(\tilde{t} - \gamma)))/2$, where $\gamma$ is the scalar shift parameter that controls the fraction of time that is revealed to the model, $\alpha =5$ is a constant scalar parameter that determines the steepness of the gate, and $\tilde{t}$ is the normalized time, i.e., $\tilde{t} = t/T$. 
Example causal gates for different settings of $\gamma$ are provided in Appendix  \ref{sec:causal_evo_appendix}.
We use $g(t)$ to obtain a causally-weighted PDE residual loss as
$\mathcal{L}^{g}_r(\theta) = \frac{1}{N_r}\sum_{i=1}^{N_r}[\mathcal{R}(x^i_r,t^i_r)]^2 * g(t^i_r)$.
We initially start with a small value of the shift parameter ($\gamma = -0.5$), which essentially only reveals a very small portion of the time domain, and then gradually increase $\gamma$ during training to reveal more portions of the time domain. For $\gamma\geq1.5$, the entire time domain is revealed. 

\textbf{Causally Biased Sampling.} We bias the sampling strategy in R3 such that it not only favors the selection of collocation points from high residual regions but also accounts for the causal gate values at every iteration. In particular, we modify the residual function as $\mathcal{F}(\mathbf{x_r}) = |\mathcal{R}(\mathbf{x_r})| * g(t_r)$. Figure \ref{fig:causal_evo_schematic} represents a schematic describing the causally biased R3 sampling described in Section \ref{sec:causalevo} and the causal gate $g$ that is updated every iteration.

\begin{figure*}[ht]
\centering
\subfigure[Causal Gate.]{\label{fig:causal_gate} \includegraphics[scale=0.46]{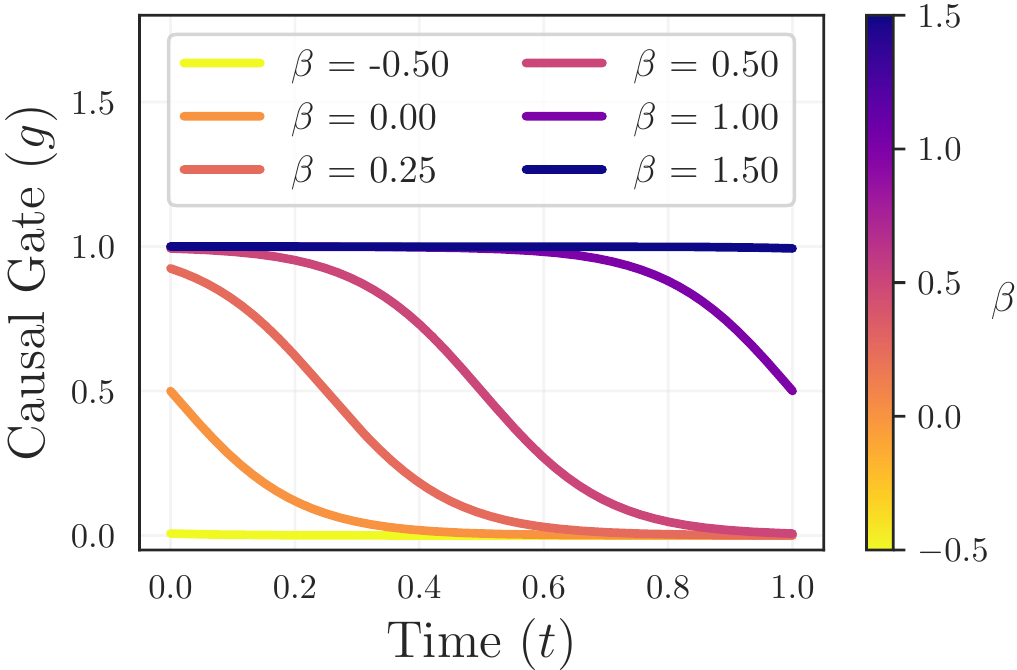}}
\subfigure[Causally R3 Sampling]{\label{fig:causal_evo_schematic} \includegraphics[scale=0.58]{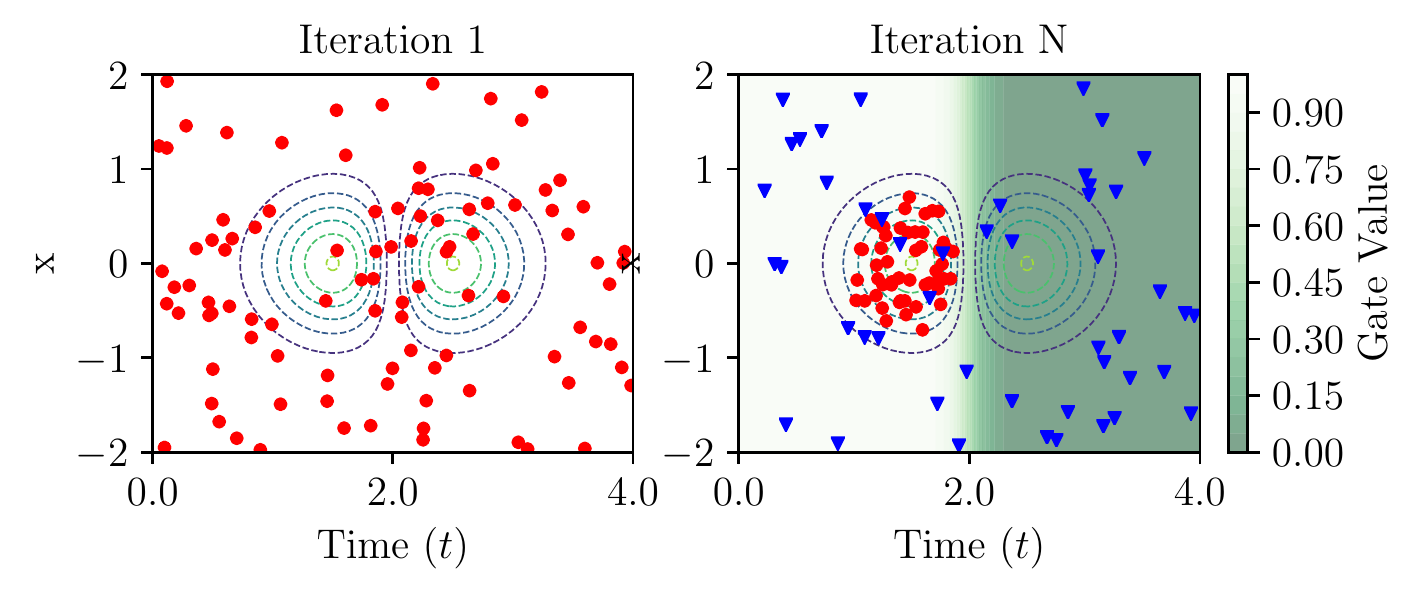}} 
\vspace{-3ex}
\caption{Causal R3 uses a time-dependent causal gate for computing PDE loss and for sampling. }
\label{fig:causal_evo}
\end{figure*}

\textbf{How to Update $\boldsymbol{\gamma}$?}  Ideally, at some iteration $i$, we would like increase $\gamma_i$ at the next iteration \textit{only if} the PDE residuals at iteration $i$ are low. Otherwise, $\gamma_i$ should remain in its place until the PDE residuals under the current gate are minimized. To achieve this behaviour, we propose the following update scheme for $\gamma$:
    $\gamma_{i+1} = \gamma_{i} + \eta_{g} e ^ {-\epsilon \mathcal{L}^{g}_r(\theta)}$,
where $\eta_{g}$ is the learning rate and $\epsilon$ denotes tolerance that controls how low the PDE loss needs to be before the gate shifts to the right. 
Since the update in $\gamma$ is inversely proportional to the causally-weighted PDE loss $\mathcal{L}^g_r$, the gate will shift slowly if the PDE residuals are large. 
Also note that increasing $\gamma$ also increases the value of $g(t)$ for all collocation points, thus increasing the causally-weighted PDE loss and slowing down gate movement. Upon convergence, $\gamma$ attains a large value such that the entire time domain is revealed.


\begin{table*}[bp]
\setlength\tabcolsep{3pt} 
\fontsize{8.1pt}{9}\selectfont
\caption{\fontsize{10pt}{11}\selectfont Relative $\mathcal{L}_2$ errors (in $\%$) of comparative methods over benchmark PDEs with $N_r = 1000$.\\}
\label{tab:results}
\begin{footnotesize}
\begin{center}
\begin{tabular}{l|cc|cc|c}\hline
                & \multicolumn{2}{c|}{Convection ($\beta = 30$)} & \multicolumn{2}{c|}{Convection ($\beta = 50$)} & Allen Cahn \\ 
Epochs.    & 100k          & 300k          & 150k          & 300k          & 200k       \\ \hline
PINN (fixed)     & $107.5\pm10.9 \%$        & $107.5\pm10.7 \%$        & $108.5\pm6.38 \%$        & $108.7\pm6.59 \%$        & $69.4\pm4.02 \%$    \\ 
PINN (dynamic) & $2.81\pm1.45 \%$        & $1.35\pm0.59 \%$        & $24.2\pm23.2 \%$        & $56.9\pm9.08 \%$        & $0.77\pm0.06 \%$     \\ 
Curr Reg \citep{krishnapriyan2021characterizing} & $63.2\pm9.89 \%$        & $2.65\pm1.44 \%$        & $48.9\pm7.44 \%$        & $31.5\pm16.6 \%$        & -          \\ 
CPINN (fixed) \cite{wang2022respecting}   & $138.8\pm11.0 \%$        & $138.8\pm11.0 \%$        & $106.5\pm10.5 \%$        & $106.5\pm10.5 \%$        & $48.7\pm19.6 \%$    \\ 
CPINN (dynamic) \cite{wang2022respecting}   & $52.2\pm43.6 \%$        & $23.8\pm45.1 \%$        & $79.0\pm5.11 \%$        & $73.2\pm8.36 \%$        & $1.5\pm0.75 \%$    \\ 
RAR-G \cite{lu2021deepxde}      &       $10.5 \pm 5.67 \%$  &  $2.66 \pm 1.41 \%$      &  $65.7 \pm 17.0 \%$       & $43.1 \pm 28.9 \%$       & $25.1 \pm 23.2 \%$    \\ 
RAD \cite{nabian2021efficient}      &   $3.35 \pm 2.02 \%$      &  $1.85 \pm 1.90 \%$      &   $66.0 \pm 1.55 \%$      &  $64.1 \pm 1.98 \%$      &   $0.78 \pm 0.05 \%$    \\ 
RAR-D  \cite{wu2022comprehensive}     &  $67.1 \pm 4.28 \%$       &  $32.0 \pm 25.8 \%$      &    $82.9 \pm 5.99 \%$     & $75.3 \pm 9.58 \%$       &   $51.6 \pm 0.41 \%$    \\
$L^\infty$       &  $66.6\pm 2.35 \%$       &   $41.2 \pm 27.9 \%$      &    $76.6 \pm 1.04 \%$     & $75.8 \pm 1.01 \%$      &   $1.65 \pm 1.36 \%$    \\ 
R3 (ours)      & $\mathbf{1.51\pm0.26} \%$        & $0.78\pm0.18 \%$        & $6.03\pm6.99 \%$        & $\mathbf{1.98\pm0.72} \%$        & $0.83\pm0.15 \%$      \\
Causal R3 (ours)       & $2.12\pm0.67 \%$        & $\mathbf{0.75\pm0.12} \%$       & $\mathbf{5.99\pm5.25} \%$        & $2.28\pm0.76 \%$       & $\mathbf{0.71\pm0.007} \%$    \\ \hline
\end{tabular}
\end{center}
\end{footnotesize}
\end{table*}





\section{Results}
\label{sec:results}

\begin{figure}[t]
    \label{fig:comparison_pinn_evo}
    \centering
    \includegraphics[width=0.49\textwidth]{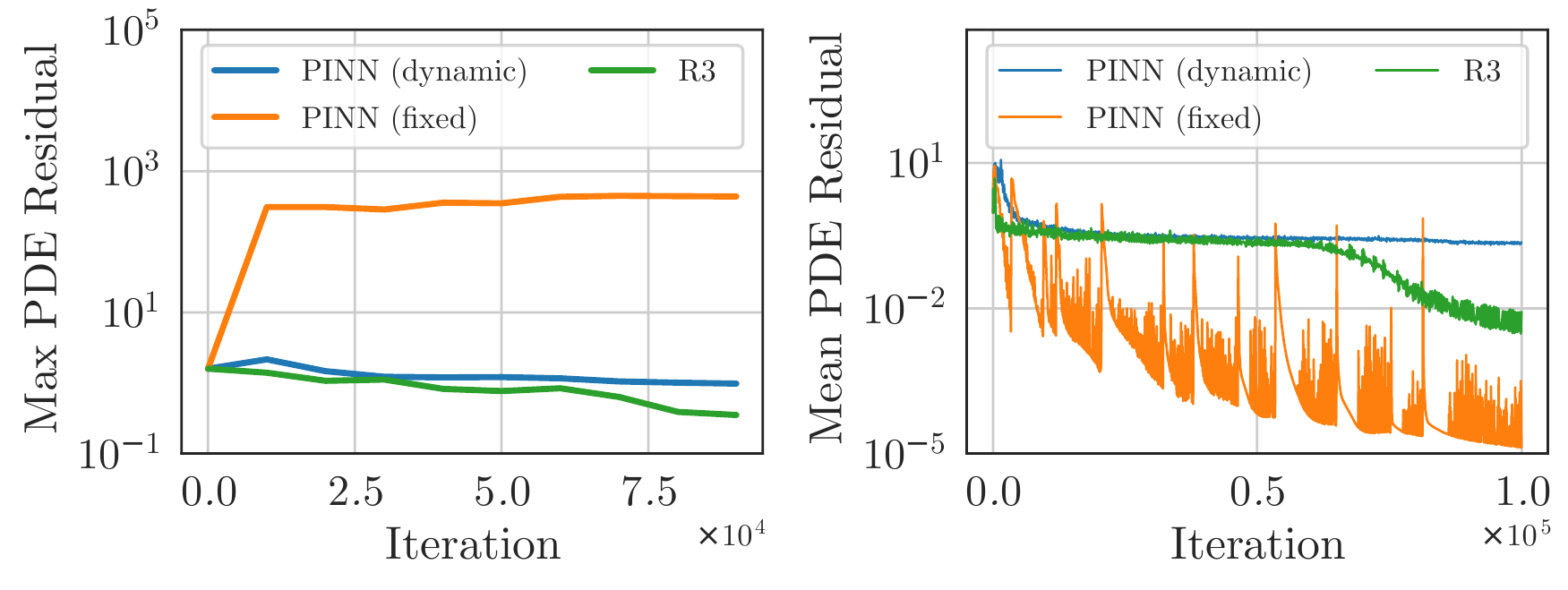}
    \vspace{-4ex}
    \caption{Comparison of Max PDE Residuals, and Mean PDE Residuals over training iterations for PINN-fixed, PINN-dynamic, and R3 Sampling for convection equation with $\beta=50$.}
    \vspace{-5ex}
\end{figure}

\textbf{Experiment Setup.} We perform experiments over three benchmark PDEs that have been used in existing literature to study failure modes of PINNs. In particular, we consider two time-dependent PDEs: convection equation (with $\beta = 30$ and $\beta = 50$) and Allen Cahn equation, and one time-independent PDE: the Eikonal equation for solving signed distance fields for varying input geometries. For PINNs and CausalPINNs \cite{wang2022respecting} we have defined two separate baselines, (1) fixed: which uses a fixed set of uniformly sampled collocation points, (2) dynamic: a simple modification where the collocation points are dynamically sampled from a uniform distribution every iteration (see Appendix  \ref{sec:analyse_dynamic} for more details). We introduce another baseline $L^\infty$ where we sample top $N_r$ collocation points at every iteration from a dense set $\mathcal{P}_{dense}$ to approximate $L^\infty$ norm. The other baselines have been listed in Table \ref{tab:results}. For every benchmark PDE, we use the same neural network architecture and hyper-parameter settings across all baselines and our proposed methods, wherever possible. Details about the PDEs, experiment setups, and hyper-parameter settings are provided in Appendix \ref{sec:hyperparam_setting}. All of our codes and datasets are available here \footnote{\url{https://github.com/arkadaw9/r3_sampling_icml2023}}.

\begin{figure}[t]
\centering
\subfigure[Auckley Function]{\label{fig:auckley_dynamiclp_norm} \includegraphics[scale=0.38]{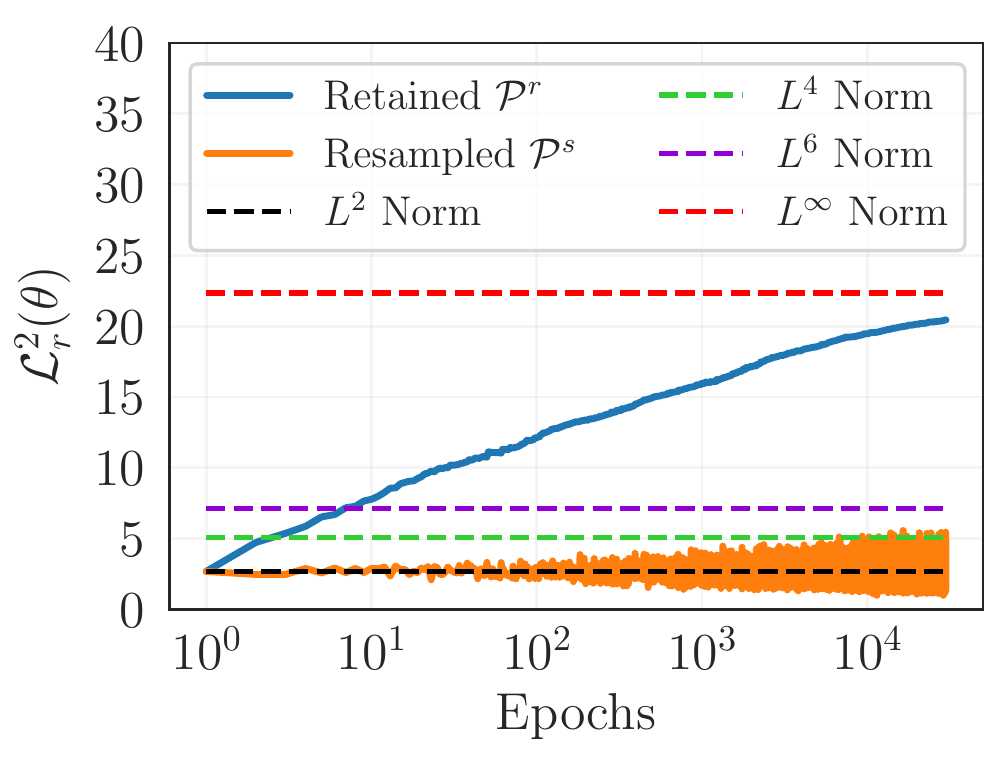}}
\subfigure[Michaelewicz Function]{\label{fig:michael_dynamiclp_norm} \includegraphics[scale=0.38]{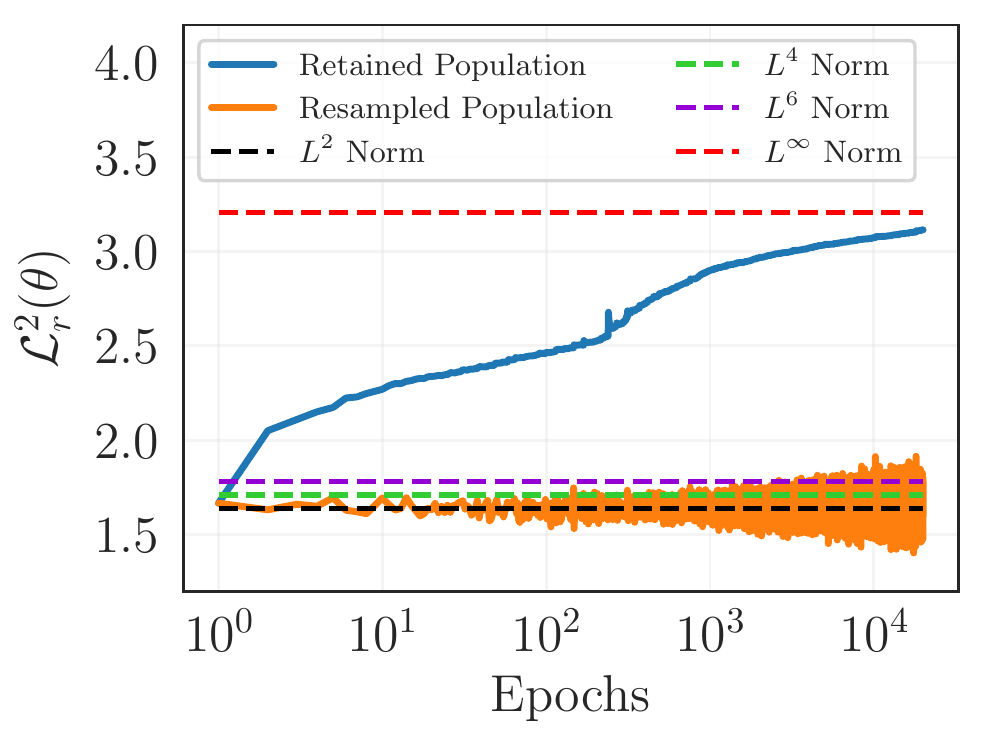}} 
\vspace{-2ex}
\caption{Demonstration of accumulation behavior of R3 sampling on two sample objective functions using the PDE loss of $\mathcal{P}^r$ going from $L^2$ to $L^\infty$ with iterations for two sample objective function.}
\label{fig:optim}
\vspace{-1ex}
\end{figure}

\textbf{Comparing Prediction Performance.} 
Table \ref{tab:results} shows the relative $\mathcal{L}_2$ errors (over 5 random seeds) of PDE solutions obtained by comparative methods w.r.t. ground-truth solutions for different time-dependent PDEs  when $N_r$ is set to 1K. We particularly chose a small value of $N_r$ to study the effect of small sample size on PINN performance (note that the original formulations of baseline methods used very high $N_r$). We can see that while PINN-fixed fails to converge for convection ($\beta=30$) and Allen Cahn equations (admitting very high errors), PINN-dynamic shows significantly lower errors. 
However, for complex PDEs such as convection ($\beta=50$), PINN-dynamic is still not able to converge to low errors. We also see that cPINN-fixed shows high errors across all PDEs when $N_r=1000$. This is likely because the small size of collocation samples are insufficient for cPINNs to converge to the correct solution. As we show later, cPINN indeed is able to converge to the correct solution when $N_r$ is large. Performing dynamic sampling with cPINN shows some reduction in errors, but it is still not sufficient for convection ($\beta=50$) case. All other baseline methods including Curr Reg, RAR-based methods, and $L^\infty$ fail to converge on most PDEs and show worse performance than even the simple baseline of PINN-dynamic. On the other hand, our proposed approaches (R3 and Causal R3) consistently show the lowest errors across all PDEs. 
Figure \ref{fig:comparison_pinn_evo} shows that R3 and PINN-dynamic are indeed able to mitigate \textit{propagation failures} for convection ($\beta=50$), by maintaining low values of max PDE residuals across all iterations, in contrast to PINN-fixed. Note that R3 is further able to reduce the mean PDE residuals better than PINN-dynamic.  
Additional visualizations of the evolution of samples in R3, Caual R3, and RAR-based methods across iterations are provided in Appendix  \ref{sec:additional_results}. Sensitivity of RAR-based methods to hyper-parameters is provided in Appendix \ref{sec:sensitivity_rar}. 

\textbf{Accumulation Behavior of R3 Sampling:} To demonstrate the ability of R3 Sampling to accumulate high residual points in the retained population $\mathcal{P}^r$ (or equivalently, focus on higher-order $L^p$ norms of PDE loss), we consider optimizing two fixed objective functions: the Auckley function and Michaelewicz Function in Figure \ref{fig:optim} (see Appendix \ref{sec:auckley} and \ref{sec:michael} for details of these function). We can see that at iteration 1, the expected loss over  $\mathcal{P}^r$ is equal to the $L^2$ norm of PDE loss over the entire domains. As training progresses, the expected loss over $\mathcal{P}^r$ quickly reaches higher-order $L^p$ norms, and approaches $L^\infty$ at very large iterations. This confirms the gradual accumulation behavior of R3 sampling as theoretically stated in Theorem \ref{theorem:expectation_retained_pop}. Addition visualizations of the dynamics of R3 sampling for a number of test optimization functions are provided in Appendix \ref{sec:test_optim_func}.

\textbf{Sampling Efficiency:} Figure \ref{fig:sample} shows the scalability of R3 sampling to smaller sample sizes of collocation points, $N_r$. Though all the baselines demonstrate similar performances when $N_r$ is large ($>10K$), only R3 and Causal R3 manage to maintain low errors even for very small values of $N_r = 100$, showing two orders of magnitude improvement in sampling efficiency. Note that the sample size $N_r$ is directly related to the compute and memory requirements of training PINNs. We also show that R3 sampling
and Causal R3 show faster convergence speed than  baseline methods for both convection and Allen Cahn equations (see Appendix \ref{sec:convergence_speed} for details). Further, in Figure \ref{fig:ks_equation_vary_Nr_main} we compare sample efficiency of CPINN, R3 sampling, and Causal R3 on the Kuramoto-Sivashinsky (KS) equation (regular case) (for details refer to Appendix \ref{sec:details_pde}). We can see that both R3 and Causal R3 show improvements over CPINN when the number of collocation points is small ($N_r = 128$). As the number of collocation points is increased, Causal R3 shows better performance than R3, as it incorporates an additional prior of causality along with satisfying the three properties of R3. Additional results on other cases of KS Equations including chaotic behavior are provided in Appendix \ref{sec:ks_results}.



\begin{figure}[ht]
\centering
\subfigure[Convection Equation \quad \quad ($\beta =50$, Iter: 300k)]{\label{fig:convection_sample} \includegraphics[scale=0.40]{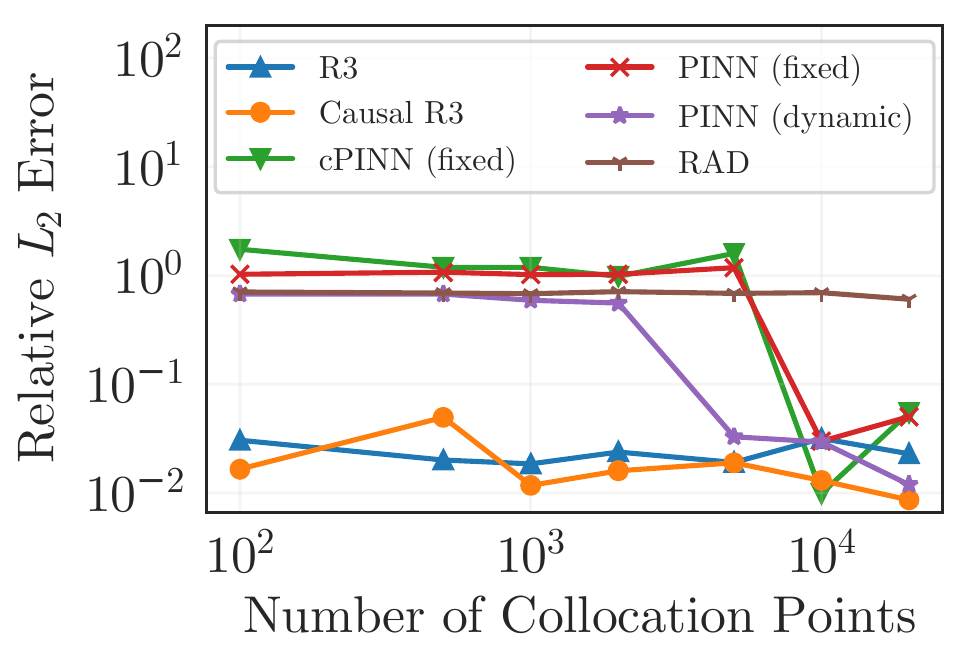}}
\subfigure[Allen Cahn Equation \quad \quad \quad  (Iter: 200k)]{\label{fig:AC_sample} \includegraphics[scale=0.38]{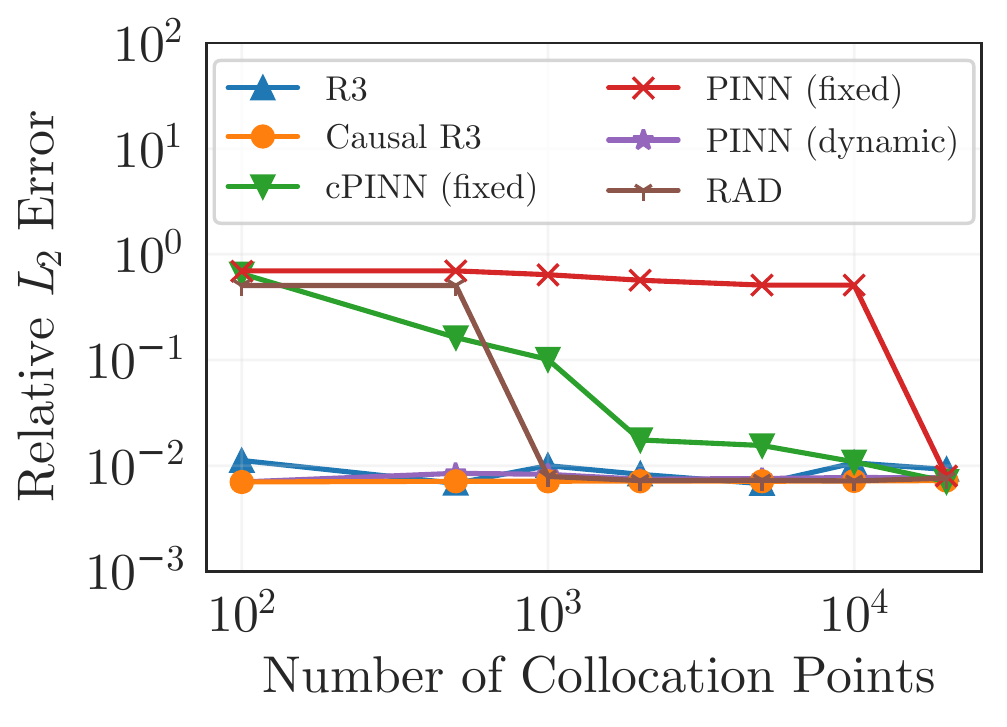}} 
\vspace{-3ex}
\caption{Demonstrating Sample Efficiency of R3 sampling and Causal R3 at low values of $N_r$.}
\label{fig:sample}
\end{figure}

\begin{figure}[ht]
    \centering
    \includegraphics[width=0.25\textwidth]{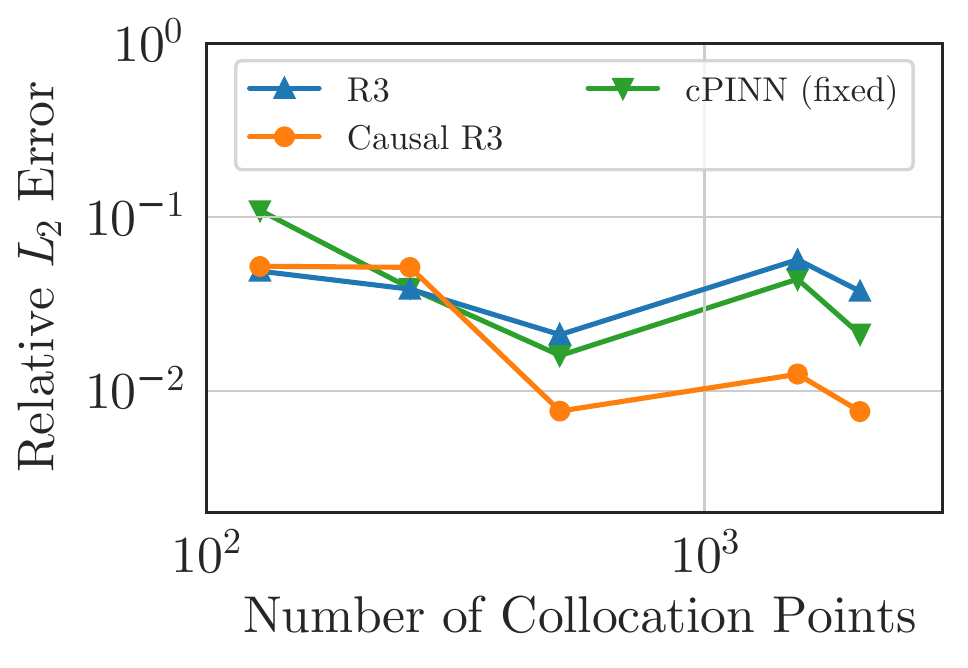}
    \vspace{-1ex}
    \caption{Comparison of R3, Causal R3 and CausalPINNs on KS Equation (regular) with varying number of collocation points.}
    \label{fig:ks_equation_vary_Nr_main}
\end{figure}


\begin{figure}[ht]
    \centering
    \includegraphics[width=0.50\textwidth]{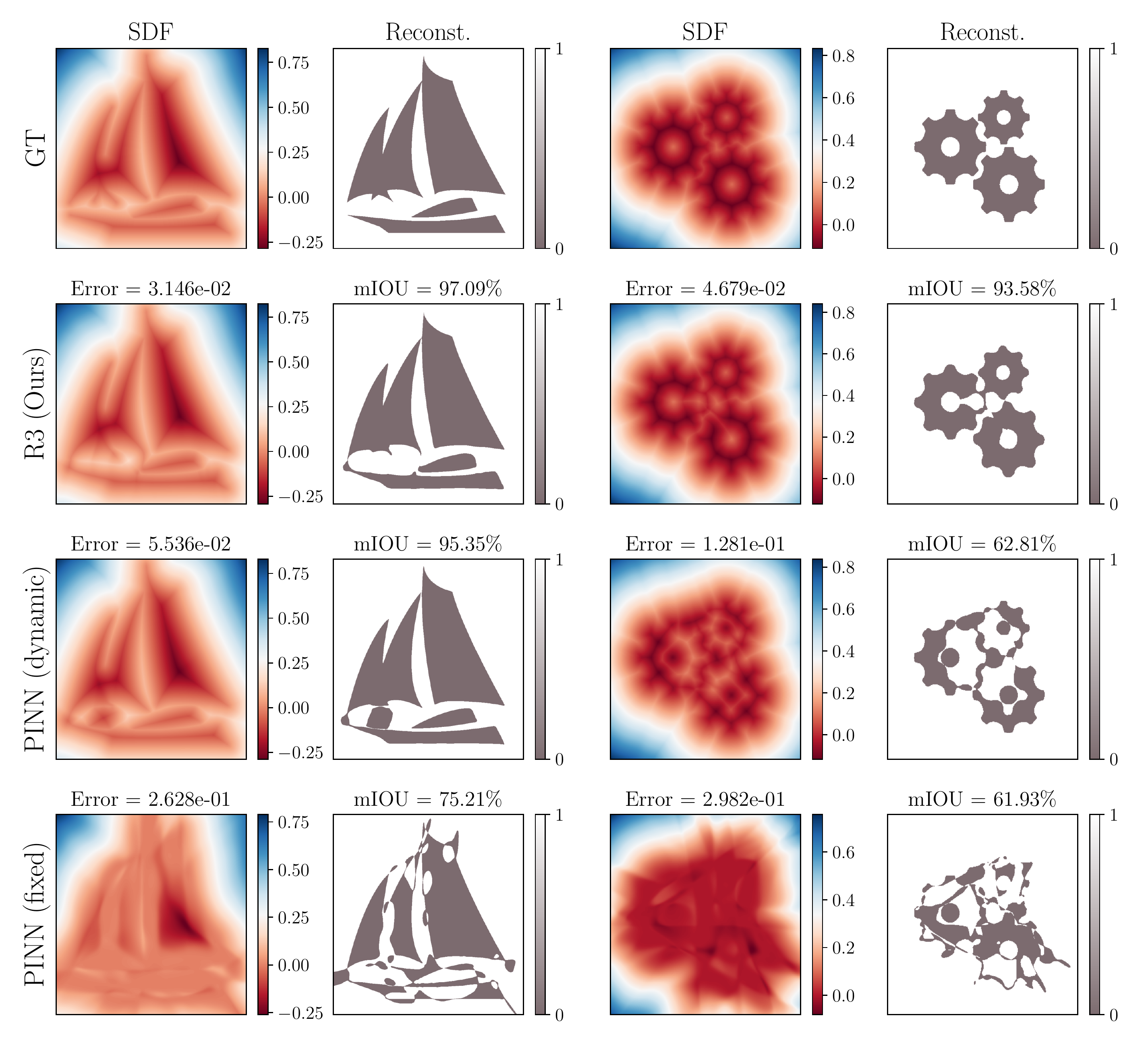}
    \vspace{-2ex}
    \caption{Solving Eikonal equation for signed distance field (SDF). The color of the heatmap represents the values of the SDF. The gray region shows the negative values of SDF that represents the interior points in the reconstructed geometry from predicted SDF.}
    \label{fig:eikonal}
    \vspace{-4ex}
\end{figure}

\textbf{Solving Eikonal Equations.} Given the equation of a surface geometry in a 2D-space, $u(x_s, y_s) = 0$, the Eikonal equation is a time-independent PDE used to solve for the \textit{signed distance field} (SDF), $u(x,y)$, which has negative values inside the surface and positive values outside the surface. See Appendix \ref{sec:details_pde} for details of the Eikonal equation. 
The primary difficulty in solving Eikonal equation comes from determining the sign of the field (interior or exterior) in regions with rich details. We compare the performance of different baseline methods with respect to the ground-truth (GT) solution obtained from numerical methods for two complex surface geometries in Figure \ref{fig:eikonal}. 
We also plot the reconstructed geometry of the predicted solutions to demonstrate the real-world application of solving this PDE, e.g., in downstream real-time graphics rendering. The quality of reconstructed geometries are quantitatively evaluated using the mean Intersection-Over-Union (mIOU) metric.
The results show that PINN-fixed shows poor performance across both geometries, while PINN-dynamic is able to capture most of the outline of the solutions with a few details missing. On the other hand, R3 sampling is able to capture even the fine details of the SDF for both geometries and thus show better reconstruction quality. We can see that mIOU of R3 sampling is significantly higher than baselines. See Appendix Section \ref{sec:evo_viz} for more discussion and visualizations.

\vspace{-2ex}
\section{Conclusions And Future Work Directions}
We present a novel perspective for identifying failure modes in PINNs named ``\emph{propagation failures}.'' and  develop a novel \emph{R3 sampling} algorithm to mitigate propagation failures. 
R3 empirically demonstrates better performance on a variety of benchmark PDEs.
Future work can focus on building the theoretical foundations of the propagation hypothesis and studying the interplay between minimizing PDE loss and sampling from high residual regions. Building on the success of established sampling techniques such as importance sampling in neural implicit representation learning problems, future research can focus on investigating the potential applicability of our proposed R3 sampling strategy in these problems.

\section{Acknowledgement}
We gratefully acknowledge the support of the National Science Foundation (NSF) under Grant \#2213550 and \#2107332, which provided funding for this research. We would also like to acknowledge the support of Virginia Tech Advanced Research Computing (ARC) for providing us with necessary resources for our study.

\bibliography{main}
\bibliographystyle{icml2023}

\newpage
\appendix
\onecolumn

\newpage

\section{Connections Between $L^p$ Norm and Sampling}
\label{appendix:connection_lp_sampling}

In this section, we provide connections between adaptively sampling collocation points from a distribution $q(\mathbf{x_r}) \propto |\mathcal{R}_\theta(\mathbf{x_r})|^k$ \cite{wu2022comprehensive} and using $L^p$ norm of the PDE loss \cite{wang2022is}.


\begin{theorem}
\label{theorem:connection_lp_sampling}
    For $p \geq 2$, let $\mathcal{L}^p_r(\mathcal{U})$ denote the expected $L^p$ PDE Loss computed on collocation points sampled from a uniform distribution, $\mathcal{U}(\Omega)$. Similarly, for $k \geq 0$, let $\mathcal{L}^2_r(\mathcal{Q}^k)$ denote the expected $L^2$ PDE Loss computed on collocation points sampled from an alternate distribution $\mathcal{Q}^k(\Omega):\mathbf{x_r} \sim q(\mathbf{x_r})$, where $q(\mathbf{x_r}) \propto |\mathcal{R}_\theta(\mathbf{x_r})|^k$. Then, $\mathcal{L}^2_r(\mathcal{Q}^k) = \frac{1}{Z^{1/2}V^{-1/2}} \Big( \mathcal{L}^{k+2}_r(\mathcal{U}) \Big)^{(k+2)/2}$, where $Z$ is a normalization constant, defined as $Z = \int_{\mathbf{x}_r \in \Omega} |\mathcal{R}_\theta(\mathbf{x_r})|^k d\mathbf{x_r}$ and $V$ is the volume of the domain $\Omega$.
\end{theorem}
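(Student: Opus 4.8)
The plan is to treat the identity as a pure change-of-measure statement and reduce everything to the single integral $\int_\Omega |\mathcal{R}_\theta(\mathbf{x})|^{k+2}\, d\mathbf{x}$. Write $R(\mathbf{x}) := |\mathcal{R}_\theta(\mathbf{x})|$. First I would set up the two sampling laws explicitly: $\mathcal{U}(\Omega)$ has density $1/V$ with $V = \int_\Omega d\mathbf{x}$, and by construction $\mathcal{Q}^k(\Omega)$ has density $q(\mathbf{x}) = R(\mathbf{x})^k / Z$ with $Z = \int_\Omega R(\mathbf{x})^k\, d\mathbf{x}$. At this point one records the mild hypotheses that make the statement meaningful: $\Omega$ is bounded so $V<\infty$, and $\mathcal{R}_\theta$ is, say, continuous and not identically zero so that $0 < Z < \infty$ and $q$ is a genuine probability density and all integrals below are finite.

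Next I would expand the left-hand side directly from the definition of the $L^2$ PDE loss evaluated under the sampling law $\mathcal{Q}^k$:
\[
\mathcal{L}^2_r(\mathcal{Q}^k) = \mathbb{E}_{\mathbf{x}_r \sim q}\bigl[R(\mathbf{x}_r)^2\bigr] = \int_\Omega R(\mathbf{x})^2\, q(\mathbf{x})\, d\mathbf{x} = \frac{1}{Z}\int_\Omega R(\mathbf{x})^{k+2}\, d\mathbf{x}.
\]
So the entire content of the theorem is that this last integral can be re-expressed, up to the advertised powers of $Z$ and $V$, through the $L^{k+2}$ PDE loss on the uniform domain. I would then expand $\mathcal{L}^{k+2}_r(\mathcal{U})$ using the same convention for $\mathcal{L}^p_r$ that makes $\mathcal{L}^2_r$ coincide with the squared-residual loss introduced earlier — i.e.\ the (root-normalized) $p$-th absolute moment, so that $\mathbb{E}_{\mathbf{x}_r\sim\mathcal{U}}[R^{p}] = \tfrac{1}{V}\int_\Omega R^{p}\, d\mathbf{x}$ — substitute $p = k+2$, raise to the power $(k+2)/2$, and carry the factor $1/V$ together with the $1/Z$ of the first display through the outer power. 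Collecting the resulting powers of $V$ and $Z$ yields exactly the claimed identity $\mathcal{L}^2_r(\mathcal{Q}^k) = \tfrac{1}{Z^{1/2}V^{-1/2}}\bigl(\mathcal{L}^{k+2}_r(\mathcal{U})\bigr)^{(k+2)/2}$.

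I do not expect a genuinely hard step: the result is a one-line importance-weighting identity and the substance lies entirely in the bookkeeping. The only place to be careful is precisely that bookkeeping — fixing the convention for the $L^p$ loss (root versus raw moment) consistently on both sides, tracking which factor of $V$ comes from the uniform density and which from the outer $(k+2)/2$ power, and keeping the normalization constant $Z$ in the right place. As consistency checks I would verify the two limits: taking $k=0$ forces $Z = V$, and the identity degenerates to $\mathcal{L}^2_r(\mathcal{Q}^0) = \mathcal{L}^2_r(\mathcal{U})$; and as $k\to\infty$ the tilted expectation $\mathbb{E}_q[R^2]$ concentrates near the maximizer of $R$ over $\Omega$ and tends to $(\max_\Omega R)^2$, matching the $L^\infty$ interpretation used elsewhere in the paper and in \cref{theorem:expectation_retained_pop}. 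A final, purely technical remark would reiterate the integrability/positivity assumptions on $\mathcal{R}_\theta$ used above.
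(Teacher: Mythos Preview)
Your approach is identical to the paper's: expand $\mathcal{L}^2_r(\mathcal{Q}^k)$ via the tilted density $q = R^k/Z$, then multiply and divide by the uniform density $1/V$ to recognize the resulting integral as a power of $\mathcal{L}^{k+2}_r(\mathcal{U})$. One slip to fix: your displayed equation writes $\mathcal{L}^2_r(\mathcal{Q}^k) = \mathbb{E}_q[R^2]$ without an outer square root, but the paper's convention in this appendix is $\mathcal{L}^p_r(\mathcal{D}) = \bigl(\mathbb{E}_\mathcal{D}[|\mathcal{R}|^p]\bigr)^{1/p}$ on \emph{both} sides, and only under that root-normalized convention do the exponents $Z^{1/2}$, $V^{-1/2}$, and the outer power $(k+2)/2$ come out as stated (with the raw-moment convention the identity fails). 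Since you already flag root-versus-raw as the one place to be careful, restoring that square root on the left-hand side is all that is needed.
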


\begin{proof}
    
The expectation of $L^p$ PDE Loss for collocation points sampled from a uniform distribution $\mathcal{U}(\Omega):\mathbf{x_r} \sim p(\mathbf{x_r})$ can be defined as follows \footnote{We assume that the batch size/number of collocation points used to compute the PDE Loss tends to infinity, i.e., $N_r \rightarrow \infty$. This allows us to analyze the behavior of the continuous PDE loss function $\mathcal{L}_r(\theta)$ as $N_r \rightarrow \infty$}:
\begin{align}
\label{eq:lp_norm}
    \mathcal{L}^p_r(\mathcal{U}) &= \Big(\mathbb{E}_{\mathbf{x_r} \sim \mathcal{U}(\Omega)}|\mathcal{R}_\theta(\mathbf{x_r})|^p \Big)^{1/p} \nonumber\\
    &= \Big( \int p(\mathbf{x_r}) |\mathcal{R}_\theta(\mathbf{x_r})|^p d\mathbf{x_r} \Big)^{1/p}
\end{align}

Note that for a uniform distribution, $p(\mathbf{x_r})=\frac{1}{V}$, where $V$ is the volume of the domain $\Omega$, i.e., $V = \prod_{i=1}^{n} \Big( \text{supp}(x_i) - \text{inf}(x_i) \Big)$ with $\text{supp}(.)$ and $\text{inf}(.)$ being the supremum and infimum operators, and $x_i$ is the $i$-th dimension of $\mathbf{x_r}$ (e.g., the space dimension $x$ or the time dimension $t$). 

Now, let us consider the case where we are interested in sampling from an alternate distribution $\mathcal{Q}^k(\Omega):\mathbf{x_r} \sim q(\mathbf{x_r})$, where $q(\mathbf{x_r}) \propto |\mathcal{R}_\theta(\mathbf{x_r})|^k$ while using the $L^2$ PDE Loss (the most standard loss formulation used in PINNs). The sampling function of $\mathcal{Q}^k(\Omega)$ can be defined as follows:
\begin{align}
\label{eq:sampling_prob}
    q(\mathbf{x_r}) = \frac{|\mathcal{R}_\theta(\mathbf{x_r})|^k}{Z} 
\end{align}
where $Z$ is the normalizing constant, i.e., $Z = \int_{\mathbf{x}_r \in \Omega} |\mathcal{R}_\theta(\mathbf{x_r})|^k d\mathbf{x_r}$.

Hence, the $L^2$ PDE Loss for collocation points sampled from $\mathcal{Q}^k(\Omega)$ can be defined as:
\begin{align}
    \mathcal{L}^2_r(\mathcal{Q}^k) &= \Big( \mathbb{E}_{\mathbf{x_r} \sim \mathcal{Q}^k(\Omega)} |\mathcal{R}_\theta(\mathbf{x_r})|^2 \Big)^{1/2} \nonumber\\
    &= \Big( \int q(\mathbf{x_r}) |\mathcal{R}_\theta(\mathbf{x_r})|^2 d\mathbf{x_r} \Big)^{1/2}\nonumber\\
    &= \Big( \int \frac{|\mathcal{R}_\theta(\mathbf{x_r})|^k}{Z} |\mathcal{R}_\theta(\mathbf{x_r})|^2 d\mathbf{x_r} \Big)^{1/2} \qquad \qquad \qquad \qquad \qquad (\text{From Equation} \:\: \ref{eq:sampling_prob}) \nonumber\\
    &= \frac{1}{Z^{1/2}}\Big( \int |\mathcal{R}_\theta(\mathbf{x_r})|^{k+2} d\mathbf{x_r} \Big)^{1/2} \nonumber\\
    &= \frac{1}{Z^{1/2}}\Big( \int \frac{p(\mathbf{x_r})}{p(\mathbf{x_r})} |\mathcal{R}_\theta(\mathbf{x_r})|^{k+2} d\mathbf{x_r} \Big)^{1/2}\nonumber\\
    &= \frac{1}{Z^{1/2}V^{-1/2}}\Big( \int p(\mathbf{x_r}) |\mathcal{R}_\theta(\mathbf{x_r})|^{k+2} d\mathbf{x_r} \Big)^{1/2}, \qquad \because \: \mathbf{x_r} \sim \mathcal{U}(\Omega) \implies p(\mathbf{x_r})=\frac{1}{V}\nonumber\\
    &= \frac{1}{Z^{1/2}V^{-1/2}} \Big( \mathbb{E}_{\mathbf{x_r} \sim \mathcal{U}(\Omega)} |\mathcal{R}_\theta(\mathbf{x_r})|^{k+2} \Big)^{1/2} \nonumber\\
    &= \frac{1}{Z^{1/2}V^{-1/2}} \Big( \mathcal{L}^{k+2}_r(\mathcal{U}) \Big)^{(k+2)/2} \qquad  (\text{From Equation} \:\: \ref{eq:lp_norm}, \text{with} \:\: p=k+2)
\end{align}

\end{proof}

Theorem \ref{theorem:connection_lp_sampling} suggests that sampling collocation points from a distribution $q(\mathbf{x_r}) \propto |\mathcal{R}_\theta(\mathbf{x_r})|^k$ and $L^p$ norm of the PDE loss are related to each other by a scaling term $\frac{1}{Z^{1/2}V^{-1/2}}$. However, since this scaling term is variable in nature as $Z$ depends on the neural network parameters $\theta$, optimizing $\mathcal{L}^p_r(\mathcal{U})$ is not directly equivalent to optimizing $\mathcal{L}^2_r(\mathcal{Q}^k)$ (or a power thereof).




\section{Analyzing PINN-Dynamic: A Strong Baseline}
\label{sec:analyse_dynamic}

A very simple baseline for mitigating propagation failures is to \emph{dynamically} sample a random set of $N_r$ collocation points from a uniform distribution at every iteration, independently of previous iterations.
To see how this simple sampling strategy can help in sampling points from high PDE residual regions, let us consider partitioning the entire input domain $\Omega$ into two subsets: regions with high PDE residuals, $\Omega_{high}$, and regions with low PDE residuals $\Omega_{low}$. If the high PDE residual regions are imbalanced because of propagation failure, we would expect that the area of $\Omega_{high}$ ($A_{high}$) is significantly smaller than the area of $\Omega_{low}$ ($A_{low}$), i.e., $A_{high} \ll A_{low}$.
Let us compute the probability of sampling at least one collocation point from $\Omega_{high}$ across all iterations of PINN training. If we use a \textit{fixed} set of collocation points sampled from a uniform distribution at every iteration, this probability will be equal to $A_{high}/(A_{high} + A_{low})$, which can be very small. Hence, we are likely to under-represent points from $\Omega_{high}$ in the training process and thus get stuck in trivial solutions with high residual regions around its boundaries, especially when $N_r$ is low.

On the other hand, if we perform \textit{dynamic} sampling, the probability of picking at least one point from $\Omega_{high}$ across all iterations will be equal to $1 - (1 - A_{high}/(A_{high} + A_{low}))^N$, where $N$ is the number of iterations for which the high residual region $A_{high}$ persists during the training of PINN. Note that $N << N_{train}$ where $N_{train}$ is the total number of training iterations for the PINN. We can see that when $N$ is large, this probability approaches $1$, indicating that across all iterations of PINN training, we would have likely sampled a point from $\Omega_{high}$ in at least one iteration, and used it to minimize its PDE residual. As we empirically demonstrate later in Section \ref{sec:results}, dynamic sampling is indeed able to control the skewness of PDE residuals compared to fixed sampling, and thus act as a strong baseline for mitigating propagation failures.

However, note that even if we use dynamic sampling, the contribution of points from $\Omega_{high}$ in the overall PDE residual loss computed at any iteration is still low. In particular, since the probability of sampling points from $\Omega_{high}$ at any iteration is equal to $A_{high}/(A_{high} + A_{low})$, the expected PDE residual loss computed over all collocation points will be equal to
\begin{equation}
    \mathbb{E}_{\Omega}[\mathcal{L}_r(\theta)] = \mathbb{E}_{\Omega_{high}}[\mathcal{L}_r(\theta)]\times \frac{A_{high}}{A_{high} + A_{low}} + \mathbb{E}_{\Omega_{low}}[\mathcal{L}_r(\theta)]\times \frac{A_{low}}{A_{high} + A_{low}} \label{eq:expectation_dynamic}
\end{equation}

Since $A_{high} \ll A_{low}$, the gradient update of $\theta$ at every epoch will be dominated by the low PDE residuals observed over points from $\Omega_{low}$, leading to slow propagation of information from initial/boundary points to interior points.
\section{Analysis of R3 Sampling}
In this section, we  analyze the dynamic behavior (or evolution) of the collocation points for our proposed \emph{Retain-Resample-Release Sampling} (R3) approach over the iterations. 


\subsection{Retain Property of R3.} 
\label{sec:grad_accum_proof}

In this section, we provide the proof of Theorem  \ref{theorem:expectation_retained_pop} presented in the main paper.
\begin{definition}[Objective Function]
\label{def:obj_fn}
    Let $\mathcal{F}_\theta(\mathbf{x}):\mathbb{R}^n \rightarrow \mathbb{R}^+$ be an arbitrary positive real-valued $k$-Lipschitz continuous function, where $\theta$ denotes the neural network parameters. When $\theta$ is fixed, the function $\mathcal{F}(\mathbf{x})$ does not vary with iterations, representing a fixed objective function.
\end{definition}

Let $\mathbf{X}^* = \{\mathbf{x}_i^*: \mathcal{F}(\mathbf{x}_i^*) = \max_\mathbf{x} \mathcal{F}(\mathbf{x}) \:\:  \forall \: i \in [n] \}$ be the set of points where the objective function $\mathcal{F}$ is maximal. 

Now, let us define an $\epsilon$-neighborhood around each of point $\mathbf{x}_i^* \in \mathbf{x}^*$ as $\mathcal{N}_\epsilon(\mathbf{x}_i)$ such that $||\mathbf{x}_i - \mathbf{x}_i^*|| \leq \epsilon$ for any arbitrarily small $\epsilon > 0$ and for all $\mathbf{x}_i \in \mathcal{N}_\epsilon(\mathbf{x}_i)$ with $i \in [n]$.

Let us also assume that the objection function $\mathcal{F}$ is $k$-Lipschitz continuous. Then the following is true:
\begin{align}
    |\mathcal{F}(\mathbf{x}_i^*)-\mathcal{F}(\mathbf{x}_i)| &\leq k\epsilon \qquad \qquad \forall \: \mathbf{x}_i \in \mathcal{N}_\epsilon(\mathbf{x}_i) \:\: \& \:\: i \in [n] \\
    \implies \mathcal{F}(\mathbf{x}_i^*)-\mathcal{F}(\mathbf{x}_i) &\leq k\epsilon \qquad \qquad   \because \mathcal{F}(\mathbf{x}_i^*) = \max_\mathbf{x} \mathcal{F}(\mathbf{x})
\end{align}

\begin{definition}[$\epsilon$-maximal Neighborhood]
\label{def:max_neighborhood}
    Let $\mathcal{F}^* = \max_\mathbf{x} \mathcal{F}(\mathbf{x})$ be the maximal value of the objective function $\mathcal{F}(\mathbf{x})$ (Definition \ref{def:obj_fn}). Then an $\epsilon$-maximal Neighborhood $\mathcal{N}^{\infty}_\epsilon$ can be defined as: $\mathcal{N}^{\infty}_\epsilon = \mathcal{N}_\epsilon(\mathbf{x}_0) \cup \mathcal{N}_\epsilon(\mathbf{x}_1) \cup ... \cup \mathcal{N}_\epsilon(\mathbf{x}_n)$ such that any point $\mathbf{x}$ sampled from $\mathcal{N}^{\infty}_\epsilon$ would have $\mathcal{F}^*-\mathcal{F}(\mathbf{x}) \leq k\epsilon \:\:\: \forall \:\: \mathbf{x} \in \mathcal{N}^{\infty}_\epsilon$ and for any arbitrarily small $\epsilon > 0$.
\end{definition}

{Note} that since the volume of $\mathcal{N}^{\infty}_\epsilon$ is greater than 0, the probability of sampling any $\mathbf{x} \in \mathcal{N}^{\infty}_\epsilon$ from a uniform distribution $\mathcal{U}(\mathbf{x})$ is greater than 0.

\begin{lemma}[Population Properties]
\label{lemma:pop_properties}
For any population $\mathcal{P}$ generated at some iteration of R3 sampling, optimizing a given objective function $\mathcal{F}(\mathbf{x})$ (Definition \ref{def:obj_fn}), the following properties are always true:
\begin{enumerate}
    \item The re-sampled population is always non-empty, i.e., $|\mathcal{P}^s|>0$
    \item The size of the retained population is always less than the total population size, i.e., $|\mathcal{P}^r|<|\mathcal{P}|$
    \item The size of the retained population is zero, i.e., $|\mathcal{P}^r|=0$, if and only if  $\mathcal{F}(\mathbf{x}) = c, \forall \mathbf{x} \in \mathcal{P}$.
\end{enumerate} 
\end{lemma}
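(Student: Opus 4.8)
The plan is to prove the three properties of Lemma~\ref{lemma:pop_properties} directly from the definitions in Algorithm~\ref{alg:evo}, relying on the key observation that the threshold $\tau$ is the \emph{mean} of the residual function over the current population. First I would recall that at iteration $i$ we set $\tau_i = \frac{1}{N_r}\sum_{j=1}^{N_r}\mathcal{F}(\mathbf{x}_r^j)$, and that $\mathcal{P}^r_i = \{\mathbf{x}_r^j : \mathcal{F}(\mathbf{x}_r^j) > \tau_i\}$ while $\mathcal{P}^s_i$ fills the remaining slots by uniform resampling, so that $|\mathcal{P}^r_i| + |\mathcal{P}^s_i| = N_r$. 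Everything then reduces to the elementary fact that a finite set of non-negative reals cannot have \emph{all} of its elements strictly above their own average.

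For property~2 ($|\mathcal{P}^r| < |\mathcal{P}|$), I would argue by contradiction: if every point satisfied $\mathcal{F}(\mathbf{x}_r^j) > \tau$, then summing over $j$ gives $\sum_j \mathcal{F}(\mathbf{x}_r^j) > N_r\tau = \sum_j \mathcal{F}(\mathbf{x}_r^j)$, a contradiction. Hence at least one point has $\mathcal{F}(\mathbf{x}_r^j) \le \tau$ and is excluded from $\mathcal{P}^r$, so $|\mathcal{P}^r| \le N_r - 1 < N_r = |\mathcal{P}|$. Property~1 ($|\mathcal{P}^s| > 0$) is then immediate since $|\mathcal{P}^s| = N_r - |\mathcal{P}^r| \ge 1 > 0$; this is exactly the content of Theorem~\ref{theorem:uniform_background}.

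For property~3, I would show both directions. ($\Leftarrow$) If $\mathcal{F}(\mathbf{x}) = c$ for all $\mathbf{x} \in \mathcal{P}$, then $\tau = c$, and no point satisfies the strict inequality $\mathcal{F}(\mathbf{x}) > c = \tau$, so $\mathcal{P}^r = \emptyset$, i.e. $|\mathcal{P}^r| = 0$. ($\Rightarrow$) Conversely, suppose $\mathcal{F}$ is not constant on $\mathcal{P}$; then there exist points with $\mathcal{F}$ both strictly below and strictly above $\tau$ — concretely, if $\mathcal{F}(\mathbf{x}_{\min}) < \tau$ for some point (which must happen unless all values equal $\tau$, and if all values equal $\tau$ the function is constant), then since the mean is $\tau$, there must be at least one point with $\mathcal{F}(\mathbf{x}) > \tau$ to compensate, giving $|\mathcal{P}^r| \ge 1 > 0$. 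Taking the contrapositive yields the claim. I would phrase this last step carefully so the logic is airtight: not-constant implies $\max_{\mathbf{x}\in\mathcal{P}}\mathcal{F}(\mathbf{x}) > \tau$, because the max of a non-constant finite list strictly exceeds its mean.

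The only mild subtlety — and the place I would be most careful — is the handling of the strict versus non-strict inequality in the threshold condition and the degenerate case where several points tie exactly at the value $\tau$; the argument above never needs those ties resolved in any particular way, since property~2 only needs \emph{one} point at or below the mean and property~3's forward direction only needs the max to exceed the mean. I do not expect any genuine obstacle here; the lemma is essentially a packaging of the "not all above average" pigeonhole fact, and the proof is a short chain of inequalities. I would close by noting that properties~1 and~3 together give Theorems~\ref{theorem:uniform_background} and the Release property stated in the main text, and that property~2 is the combinatorial ingredient needed later in the proof of Theorem~\ref{theorem:expectation_retained_pop}.
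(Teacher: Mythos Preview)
Your proposal is correct and follows essentially the same approach as the paper: both arguments reduce the lemma to the elementary observation that not all elements of a finite multiset can strictly exceed their mean, and that the maximum of a non-constant finite list strictly exceeds its mean. The only cosmetic difference is that you prove Property~2 first and derive Property~1 from it, whereas the paper proves Property~1 first (via the non-retained population) and then deduces Property~2; the logical content is identical.
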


\begin{proof}
The threshold $\tau$ for the R3 Sampling can be computed as $\tau = \frac{1}{|\mathcal{P}|}\sum_{\mathbf{x} \in \mathcal{P}} \mathcal{F}(\mathbf{x})$.

The retained population is defined as: $\mathcal{P}^r \leftarrow \{ \mathbf{x}: \mathcal{F}(\mathbf{x}) > \tau \:\:\: \forall \mathbf{x} \in \mathcal{P} \}$, 

Similarly, the non-retained population can be defined as: $\overline{\mathcal{P}^r} \leftarrow \{ \mathbf{x}: \mathcal{F}(\mathbf{x}) \leq \tau \:\:\: \forall \mathbf{x} \in \mathcal{P} \}$. 

\textbf{Proof of Property 1:} For any arbitrary set of real numbers, there always exists some element in the set that is less than or equal to the mean. Hence, the size of the non-retained population is always non-zero as there always exists some point $\mathbf{x} \in \mathcal{P}$ such that $\mathcal{F}(\mathbf{x}) \leq \tau$. Thus, $|\overline{\mathcal{P}^r}| > 0$. 

Now by definition, since the re-sampled population $\mathcal{P}^s$ replaces the non-retained population at every iteration, $|\mathcal{P}^s|=|\overline{\mathcal{P}^r}|$. Hence, $|\mathcal{P}^s|>0$, i.e., the size of the resampled population is always non-zero.

\textbf{Proof of Property 2:} By definition, $|\overline{\mathcal{P}^r}| + |\mathcal{P}^r| = |\mathcal{P}|$ (where $|\mathcal{P}|$ is the total size of the population and is always constant). Since, $|\overline{\mathcal{P}^r}| > 0$, we can say that $|\mathcal{P}^r| < |\mathcal{P}|$, i.e., the size of the retained population can never be equal to the entire population size $|\mathcal{P}|$.

\textbf{Proof of Property 3:} Let us consider the case where  $\mathcal{F}(\mathbf{x}) = c, \forall \mathbf{x} \in \mathcal{P}$ (where $c$ is some constant), i.e., the value of the function is constant at all of the points $\mathbf{x} \in \mathcal{P}$. In this case, the mean of the population $\mathcal{P}$, which is equal to the threshold, $\tau$ will be equal to $c$. This condition would lead to the entire population to be re-sampled as all element $\mathbf{x} \in \mathcal{P}$ would satisfy the condition to belong in the non-retained population. Note that the constant function $\mathcal{F}(\mathbf{x})=c$ is the only case where all of the elements are less than or equal to the mean. Otherwise, there would always be at least one element greater than the mean, resulting in a non-zero size of the retained population.
\end{proof}






\begin{lemma}[Entry Condition]
\label{lemma:entry}
If a point $\mathbf{x}_m$ is sampled from $\mathcal{N}^{\infty}_\epsilon$ at any arbitrary iteration $m$, then it will always enter the retained population $\mathcal{P}_m^r$ unless $\mathbb{E}_{\mathbf{x} \in \mathcal{P}^r_m}[\mathcal{F}(\mathbf{x})] > \mathcal{F}^* - k\epsilon$.
\end{lemma}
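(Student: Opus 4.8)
### Proof Plan for Lemma \ref{lemma:entry} (Entry Condition)

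The plan is to unpack the R3 update rule at iteration $m$ and compare the residual value $\mathcal{F}(\mathbf{x}_m)$ of the freshly-sampled point against the threshold $\tau_{m+1}$ that governs whether it is retained at the next step. Recall that after sampling, the point $\mathbf{x}_m$ (drawn from $\mathcal{N}^\infty_\epsilon$) sits in the population $\mathcal{P}_{m+1}$, and $\mathbf{x}_m$ enters the retained population $\mathcal{P}^r_{m+1}$ precisely when $\mathcal{F}(\mathbf{x}_m) > \tau_{m+1}$, where $\tau_{m+1} = \mathbb{E}_{\mathbf{x}\in\mathcal{P}_{m+1}}[\mathcal{F}(\mathbf{x})]$. (I will align the iteration bookkeeping with the lemma statement; the essential content is the same whichever way the indices are pinned down.) By Definition \ref{def:max_neighborhood}, since $\mathbf{x}_m \in \mathcal{N}^\infty_\epsilon$ we have $\mathcal{F}(\mathbf{x}_m) \geq \mathcal{F}^* - k\epsilon$. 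So it suffices to argue that $\tau_{m+1} \leq \mathcal{F}^* - k\epsilon$ unless the stated exceptional condition $\mathbb{E}_{\mathbf{x}\in\mathcal{P}^r_m}[\mathcal{F}(\mathbf{x})] > \mathcal{F}^* - k\epsilon$ already holds.

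The key steps, in order: First, decompose the population $\mathcal{P}_{m+1}$ into the retained part $\mathcal{P}^r_m$ (carried over from iteration $m$) and the resampled part $\mathcal{P}^s_m$, and write $\tau_{m+1}$ as the size-weighted average of the mean of $\mathcal{F}$ over $\mathcal{P}^r_m$ and the mean of $\mathcal{F}$ over $\mathcal{P}^s_m$. Second, bound the contribution of the resampled part: points in $\mathcal{P}^s_m$ are drawn uniformly, so in expectation their mean residual is the domain average of $\mathcal{F}$, which is at most $\mathcal{F}^*$ and typically strictly below $\mathcal{F}^* - k\epsilon$ — I would actually phrase the whole lemma's ``always'' in the expected/high-probability sense consistent with how Theorem \ref{theorem:expectation_retained_pop} is stated, or alternatively observe that the resampled mean is bounded by $\tau_m$ (by the Release definition, every resampled point replaced one with $\mathcal{F} \leq \tau_m$, and $\tau_m \leq \mathcal{F}^*$). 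Third, combine: $\tau_{m+1}$ is a convex combination of $\mathbb{E}_{\mathcal{P}^r_m}[\mathcal{F}]$ and (something $\leq \mathcal{F}^*$, indeed $\leq \tau_m$). If $\mathbb{E}_{\mathcal{P}^r_m}[\mathcal{F}] \leq \mathcal{F}^* - k\epsilon$, then the convex combination is also $\leq \mathcal{F}^* - k\epsilon$ as long as the other term is $\leq \mathcal{F}^* - k\epsilon$ — which is where I need to be careful, since $\tau_m$ could in principle exceed $\mathcal{F}^*-k\epsilon$. Fourth, resolve this by noting that $\tau_m \leq \mathbb{E}_{\mathcal{P}^r_m}[\mathcal{F}]$ always (the threshold is below the mean of the above-threshold set), so $\tau_m \leq \mathcal{F}^* - k\epsilon$ whenever the retained mean is, closing the chain: $\tau_{m+1} \leq \mathcal{F}^* - k\epsilon \leq \mathcal{F}(\mathbf{x}_m)$, hence $\mathbf{x}_m$ is retained.

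The main obstacle I anticipate is the handling of the resampled population $\mathcal{P}^s_m$ rigorously. Its membership is random, so either (a) the lemma is implicitly an expectation/probabilistic statement and I should state and use $\mathbb{E}[\mathbb{E}_{\mathbf{x}\in\mathcal{P}^s_m}[\mathcal{F}(\mathbf{x})]] = \frac{1}{V}\int_\Omega \mathcal{F}$, or (b) I lean entirely on the deterministic bound ``each point of $\mathcal{P}^s_m$ replaced a point with value $\leq \tau_m$'' — but that bound is about the \emph{replaced} points, not the new ones, so it does not literally cap $\mathcal{F}$ on $\mathcal{P}^s_m$. The cleanest fix is to bound $\mathbb{E}_{\mathbf{x}\in\mathcal{P}^s_m}[\mathcal{F}(\mathbf{x})] \leq \mathcal{F}^*$ trivially (every value is at most the max), which already gives $\tau_{m+1} \leq \mathcal{F}^*$; then to get the sharper $\mathcal{F}^* - k\epsilon$ bound I use the strict inequality coming from the fact that $\mathcal{N}^\infty_\epsilon$ has measure strictly less than $V$ (generically), so the uniform-average of $\mathcal{F}$ is strictly below $\mathcal{F}^*$. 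I would state the lemma's conclusion with this understanding and flag that ``unless $\mathbb{E}_{\mathcal{P}^r_m}[\mathcal{F}] > \mathcal{F}^* - k\epsilon$'' is exactly the escape clause that absorbs the corner cases where the retained set has already accumulated enough near-maximal mass that the threshold rises above $\mathcal{F}^* - k\epsilon$ and the entry is no longer guaranteed.
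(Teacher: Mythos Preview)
Your decomposition route is more complicated than necessary and leaves a real gap. The obstacle you flag—bounding $\mathbb{E}_{\mathbf{x}\in\mathcal{P}^s_m}[\mathcal{F}(\mathbf{x})]$ for the freshly resampled points—is genuine, and none of your proposed fixes resolve it deterministically: the resampled points are new uniform draws, so their empirical mean can exceed $\mathcal{F}^* - k\epsilon$ (nothing prevents several of them from landing in $\mathcal{N}^\infty_\epsilon$), and the observation that ``each replaced a point with value $\leq \tau_m$'' says nothing about the values of the replacements. Your convex-combination chain therefore does not close.

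The paper sidesteps all of this by not decomposing the population into previous-iteration pieces. The crucial point is the indexing you gloss over: the lemma's exceptional condition references $\mathcal{P}^r_m$, the retained set \emph{at the same iteration} at which $\mathbf{x}_m$ is tested for retention. So take $\mathbf{x}_m \in \mathcal{P}_m$ (however it arrived) and argue the contrapositive directly: if $\mathbf{x}_m$ is \emph{not} retained, then $\mathcal{F}(\mathbf{x}_m) \leq \tau_m$; combined with $\mathcal{F}(\mathbf{x}_m) \geq \mathcal{F}^* - k\epsilon$ from Definition~\ref{def:max_neighborhood}, this gives $\tau_m \geq \mathcal{F}^* - k\epsilon$. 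Now apply the single elementary inequality you already stated in your step~4, namely $\tau_m \leq \mathbb{E}_{\mathbf{x}\in\mathcal{P}^r_m}[\mathcal{F}(\mathbf{x})]$ (the mean of the above-threshold subset is at least the overall mean), to conclude $\mathbb{E}_{\mathbf{x}\in\mathcal{P}^r_m}[\mathcal{F}(\mathbf{x})] \geq \mathcal{F}^* - k\epsilon$. That is the entire proof. You already have the key inequality in hand; the fix is to apply it at the iteration where the point sits, rather than trying to transport information from the previous iteration through the random resampled set.
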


\begin{proof}
    
The condition for any arbitrary point $\mathbf{x}_m$ to enter the retained population $\mathcal{P}_m^r$ at any arbitrary iteration $m$ is given by the following:
\begin{align}
    \mathcal{F}(\mathbf{x}_m) > \tau_m = \mathbb{E}_{\mathbf{x} \in \mathcal{P}_m}[\mathcal{F}(\mathbf{x})]. \qquad \qquad \text{(By definition of the threshold $\tau_m$)}
\end{align}

Now, if the point $\mathbf{x}_m$ is sampled from $\mathcal{N}^{\infty}_\epsilon$, then $\mathcal{F}(\mathbf{x}_m) \geq \mathcal{F}^* - k\epsilon$ (from Definition \ref{def:max_neighborhood}). 
Hence, for $\mathbf{x}_m$ to enter the retained population $\mathcal{P}_m^r$, we need to ensure that $\mathcal{F}^* - k\epsilon > \tau_m$. 

Let us consider the case where $\mathbf{x}_m$ is not able to enter the retained population. In such a case, we will have the following inequality: 
\begin{equation}
\label{eq:inequality_1}
    \mathcal{F}^* - k\epsilon < \tau_m.
\end{equation}
 
 It is also easy to show from the definition of retained population that the threshold $\tau$ is always less than the expectation of the retained population: 
\begin{equation}
\label{eq:inequality_2}
    \tau_m \leq \mathbb{E}_{\mathbf{x} \in \mathcal{P}^r_m}[\mathcal{F}(\mathbf{x})]
\end{equation}

From Equations \ref{eq:inequality_1} and \ref{eq:inequality_2}, we get, 
\begin{align}
\label{eq:inequality_3}
    \mathcal{F}^* - k\epsilon < \tau_m &\leq \mathbb{E}_{\mathbf{x} \in \mathcal{P}^r_m}[\mathcal{F}(\mathbf{x})] \nonumber\\
    \implies \mathbb{E}_{\mathbf{x} \in \mathcal{P}^r_m}[\mathcal{F}(\mathbf{x})] &> \mathcal{F}^* - k\epsilon
\end{align}

We have thus proved that $\mathbf{x}_m$ will not be able to enter the retained population $\mathcal{P}^r_m$ only if $\mathbb{E}_{\mathbf{x} \in \mathcal{P}^r_m}[\mathcal{F}(\mathbf{x})] > \mathcal{F}^* - k\epsilon$, which suggests that the expectation of the retained population is already close to $\mathcal{F}^*$, for any arbitrarily small $\epsilon > 0$. On the other hand, if $\mathbb{E}_{\mathbf{x} \in \mathcal{P}^r_m}[\mathcal{F}(\mathbf{x})] \leq \mathcal{F}^* - k\epsilon$, we would necessarily add $\mathbf{x}_m$ 
\end{proof}

\begin{lemma}[Exit Condition]
\label{lemma:exit}
A point $\mathbf{x}_m$ sampled from $\mathcal{N}^{\infty}_\epsilon$ that entered the retained population at any arbitrary iteration $m$, can exit the retained population $\mathcal{P}^r_n$ at an arbitrary iteration $n$ (such that $n > m$) only if $\mathbb{E}_{\mathbf{x} \in \mathcal{P}^r_n}[\mathcal{F}(\mathbf{x})] \geq \mathcal{F}^* - k\epsilon$.
\end{lemma}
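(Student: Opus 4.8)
The plan is to follow the same two-inequality template used in the proof of the Entry Condition (Lemma~\ref{lemma:entry}): one inequality extracted from the meaning of ``exiting'' the retained population, and one from the defining property of the retained population. Concretely, I would first unpack what it means for $\mathbf{x}_m$ to leave $\mathcal{P}^r_n$ at iteration $n$. Since $\mathbf{x}_m$ entered $\mathcal{P}^r_m$ at iteration $m$ and has been carried over through iteration $n$ (retained points persist, while non-retained ones are replaced by fresh uniform samples), we have $\mathbf{x}_m \in \mathcal{P}_n$. By the definition $\mathcal{P}^r_n = \{\mathbf{x}\in\mathcal{P}_n : \mathcal{F}(\mathbf{x}) > \tau_n\}$ with $\tau_n = \mathbb{E}_{\mathbf{x}\in\mathcal{P}_n}[\mathcal{F}(\mathbf{x})]$, the point $\mathbf{x}_m$ fails to be retained exactly when $\mathcal{F}(\mathbf{x}_m) \le \tau_n$. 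This is the first inequality.

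Next I would use that $\mathbf{x}_m$ was sampled from the $\epsilon$-maximal neighborhood $\mathcal{N}^{\infty}_\epsilon$, so by Definition~\ref{def:max_neighborhood} we get $\mathcal{F}^* - \mathcal{F}(\mathbf{x}_m) \le k\epsilon$, i.e. $\mathcal{F}(\mathbf{x}_m) \ge \mathcal{F}^* - k\epsilon$. Combining this with the exit inequality yields $\mathcal{F}^* - k\epsilon \le \mathcal{F}(\mathbf{x}_m) \le \tau_n$. Then I invoke the elementary bound already recorded as Equation~\ref{eq:inequality_2} in the proof of Lemma~\ref{lemma:entry}: because every point of $\mathcal{P}^r_n$ strictly exceeds $\tau_n$, the threshold never exceeds the mean of the retained population, so $\tau_n \le \mathbb{E}_{\mathbf{x}\in\mathcal{P}^r_n}[\mathcal{F}(\mathbf{x})]$. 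Chaining $\mathcal{F}^* - k\epsilon \le \tau_n \le \mathbb{E}_{\mathbf{x}\in\mathcal{P}^r_n}[\mathcal{F}(\mathbf{x})]$ gives exactly the claimed inequality.

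The degenerate case to handle separately is when $\mathcal{P}^r_n = \emptyset$, so that $\mathbb{E}_{\mathbf{x}\in\mathcal{P}^r_n}[\mathcal{F}(\mathbf{x})]$ is not literally defined; by Lemma~\ref{lemma:pop_properties} (Property~3) this happens only if $\mathcal{F}$ is constant, say equal to $c$, on all of $\mathcal{P}_n$, in which case $\tau_n = c = \mathcal{F}(\mathbf{x}_m) \ge \mathcal{F}^* - k\epsilon$, and one can either declare the conclusion vacuous or adopt the convention $\mathbb{E}_{\emptyset} := \tau_n$. The only genuine subtlety — the ``hard part'' of what is otherwise a two-line argument — is the bookkeeping about whether $\mathbf{x}_m$ is itself included in the average defining $\tau_n$; since the lemma concerns the iteration $n$ at which $\mathbf{x}_m$ is still present in $\mathcal{P}_n$ but not retained, it is included, and the chain above goes through verbatim. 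A closing remark would observe that, combined with the Entry Condition, this shows a point drawn from $\mathcal{N}^{\infty}_\epsilon$ remains in the retained population until the retained population's expectation comes within $k\epsilon$ of $\mathcal{F}^*$, which is precisely the accumulation mechanism needed for Theorem~\ref{theorem:expectation_retained_pop}.
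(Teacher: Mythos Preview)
Your proposal is correct and follows essentially the same argument as the paper's proof: derive $\mathcal{F}(\mathbf{x}_m)\le\tau_n$ from the exit condition, combine with $\mathcal{F}(\mathbf{x}_m)\ge\mathcal{F}^*-k\epsilon$ from membership in $\mathcal{N}^{\infty}_\epsilon$, and then chain through $\tau_n\le\mathbb{E}_{\mathbf{x}\in\mathcal{P}^r_n}[\mathcal{F}(\mathbf{x})]$. Your treatment is in fact more careful than the paper's, which does not address the degenerate $\mathcal{P}^r_n=\emptyset$ case or the bookkeeping about $\mathbf{x}_m\in\mathcal{P}_n$.
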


\begin{proof}
The generic condition for any arbitrary point $\mathbf{x}$ to exit the retained population $\mathcal{P}_n^r$ at iteration $n$ is given by: $\mathcal{F}(\mathbf{x}) \leq \tau_n = \mathbb{E}_{\mathbf{x} \in \mathcal{P}_n}[\mathcal{F}(\mathbf{x})]$ (By definition of the threshold $\tau$).

Since a point $\mathbf{x}_m$ that was originally sampled from $\mathcal{N}^{\infty}_\epsilon$ will have $\mathcal{F}(\mathbf{x}_m) \geq \mathcal{F}^* - k\epsilon$ (from Definition \ref{def:max_neighborhood}), we can use this inequality in the generic exit condition shown above to get,
\begin{equation}
    \mathcal{F}^* - k\epsilon \leq \tau_n \leq \mathbb{E}_{\mathbf{x} \in \mathcal{P}^r_n}[\mathcal{F}(\mathbf{x})]
\end{equation}

Hence, the point $\mathbf{x}_m$ can exit the retained population at iteration $n$ only if $\mathbb{E}_{\mathbf{x} \in \mathcal{P}^r_n}[\mathcal{F}(\mathbf{x})] \geq \mathcal{F}^* - k\epsilon$. 
\end{proof}

\begin{theorem}[Accumulation  Dynamics Theorem]
    Let $\mathcal{F}_\theta(\mathbf{\mathbf{x}}):\mathbb{R}^n \rightarrow \mathbb{R}^+$ be a fixed real-valued $k$-Lipschitz continuous objective function optimized using the R3 Sampling algorithm. Then, the expectation of the retained population  $\mathbb{E}_{\mathbf{x} \in \mathcal{P}^r}[\mathcal{F}(\mathbf{x})] \geq \max_\mathbf{x} \mathcal{F}(\mathbf{x}) - k\epsilon$ as iteration $i \rightarrow \infty$, for any arbitrarily small $\epsilon > 0$.
\end{theorem}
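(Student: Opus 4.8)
The plan is to assemble the three preparatory lemmas---Population Properties (Lemma \ref{lemma:pop_properties}), Entry Condition (Lemma \ref{lemma:entry}), and Exit Condition (Lemma \ref{lemma:exit})---into an accumulation-by-contradiction argument, using the positive volume of the $\epsilon$-maximal neighborhood $\mathcal{N}^{\infty}_\epsilon$ (Definition \ref{def:max_neighborhood}) together with the Resample Property to force high-value points into the retained population over iterations.

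First I would fix an arbitrarily small $\epsilon > 0$, set $\mathcal{F}^* = \max_\mathbf{x} \mathcal{F}(\mathbf{x})$ (attained since the domain $\Omega$ is compact), and form $\mathcal{N}^{\infty}_\epsilon$ as in Definition \ref{def:max_neighborhood}, so that every $\mathbf{x} \in \mathcal{N}^{\infty}_\epsilon$ obeys $\mathcal{F}(\mathbf{x}) \geq \mathcal{F}^* - k\epsilon$ by $k$-Lipschitz continuity; since $\mathcal{N}^{\infty}_\epsilon$ has positive Lebesgue measure in $\Omega$, a single draw from $\mathcal{U}(\Omega)$ lands in it with some fixed probability $p > 0$. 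Write $\mu_i := \mathbb{E}_{\mathbf{x} \in \mathcal{P}^r_i}[\mathcal{F}(\mathbf{x})]$ for the quantity of interest, and note the elementary inequality $\tau_i \leq \mu_i$ (the mean over all of $\mathcal{P}_i$ is at most the mean over its above-average subset).

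Now suppose, toward a contradiction, that $\mu_i < \mathcal{F}^* - k\epsilon$ for every $i$ past some iteration $i_0$. Let $H_i \subseteq \mathcal{P}^r_i$ be the set of retained points that were at some earlier step resampled from inside $\mathcal{N}^{\infty}_\epsilon$; each has value $\geq \mathcal{F}^* - k\epsilon$. Two facts drive the argument: (a) by the Exit Condition, while $\mu_j < \mathcal{F}^* - k\epsilon$ such a point cannot leave the retained population (indeed $\tau_j \leq \mu_j < \mathcal{F}^* - k\epsilon \leq \mathcal{F}(\mathbf{x})$), so $H_{i_0} \subseteq H_{i_0+1} \subseteq \cdots$ and $|H_i|$ is non-decreasing; (b) by the Entry Condition, while $\mu_j < \mathcal{F}^* - k\epsilon$ any fresh sample lying in $\mathcal{N}^{\infty}_\epsilon$ is admitted into $\mathcal{P}^r$ at once, so $|H_i|$ strictly increases at every iteration whose resampled population $\mathcal{P}^s_i$ meets $\mathcal{N}^{\infty}_\epsilon$. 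By Lemma \ref{lemma:pop_properties}, $|\mathcal{P}^s_i| \geq 1$ always, so conditionally on the history this event has probability at least $p > 0$; a conditional second Borel--Cantelli lemma (Lévy's extension) then gives that almost surely infinitely many such events occur. But $|H_i| \leq |\mathcal{P}^r_i| < N_r$ is bounded, so $|H_i|$ cannot increase infinitely often---contradiction. Hence $\mu_i \geq \mathcal{F}^* - k\epsilon$ for arbitrarily large $i$, which is the claimed limiting bound; letting $\epsilon \downarrow 0$ yields the theorem.

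The main obstacle is the bookkeeping between the two regimes ``$\mu_i$ is eventually below $\mathcal{F}^* - k\epsilon$'' and ``$\mu_i$ dips below only infinitely often'': in the second case the Exit Condition no longer pins accumulated points in place during the excursions where $\mu_i \geq \mathcal{F}^* - k\epsilon$, so one must argue either that the first crossing already witnesses the conclusion, or re-initialize the counter $H_i$ at the end of each excursion and re-run the accumulation step, observing that it can fail to reach the threshold only finitely often before the a.s.-infinitely-many admissible resampling hits push it across. The second delicate point is rigor in the probabilistic step despite the dependence of $\mathcal{P}^s_i$ on the trajectory; this is absorbed by the uniform lower bound $|\mathcal{P}^s_i| \geq 1$ and the adapted-events Borel--Cantelli lemma, which I would cite rather than prove. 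Everything else---the Lipschitz estimate on $\mathcal{N}^{\infty}_\epsilon$, the inequality $\tau_i \leq \mu_i$, and the monotonicity of $H_i$---is routine given the lemmas already in place.
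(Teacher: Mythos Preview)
Your proposal is correct and follows essentially the same contradiction argument as the paper: assume the retained-population mean stays below $\mathcal{F}^* - k\epsilon$, invoke the Entry and Exit lemmas to show points from $\mathcal{N}^{\infty}_\epsilon$ accumulate monotonically in $\mathcal{P}^r$, and contradict the finite population bound. Your version is in fact more careful than the paper's---you make the probabilistic step explicit via conditional Borel--Cantelli and flag the ``eventually below'' versus ``infinitely often below'' dichotomy, both of which the paper leaves implicit.
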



\begin{proof}
    We prove this theorem by contradiction. For the sake of contradiction, let us assume that as iterations $i \rightarrow \infty$, the expectation of the retained population $\mathbb{E}_{\mathbf{x} \in \mathcal{P}^r}[\mathcal{F}(\mathbf{x})] < \max_\mathbf{x} \mathcal{F}(\mathbf{x}) - k\epsilon$, for any arbitrarily small $\epsilon > 0$. We can then make the following two remarks.

    \textbf{Entry of collocation points}: Note that the probability of sampling $\mathbf{x}$ from $\mathcal{N}^\infty_\epsilon$ is  non-zero because the size of the re-sampled population is non-zero, i.e., $|\mathcal{P}^s| > 0$ (proved in Lemma \ref{lemma:pop_properties}). Also, since we have assumed $\mathbb{E}_{\mathbf{x} \in \mathcal{P}^r}[\mathcal{F}(\mathbf{x})] < \mathcal{F}^* - k\epsilon$, we can use the Entry condition proved in Lemma \ref{lemma:entry} to arrive at the conclusion that a point from $\mathcal{N}^\infty_\epsilon$ will always be able to enter the retained population. 
    
    \textbf{Exit of collocation points}: Similarly, a point $\mathbf{x}$ that belongs in the $\epsilon$-maximal neighborhood and is part of the retained population $\mathcal{P}^r$ will not be able to escape the retained population as we have asssumed $\mathbb{E}_{\mathbf{x} \in \mathcal{P}^r}[\mathcal{F}(\mathbf{x})] < \mathcal{F}^* - k\epsilon$ (using the Exit condition proved in Lemmas \ref{lemma:exit}). 
    
    From the above two remarks, we can see that  points would keep accumulating indefinitely in the retained population if our initial assumption (for the sake of contradiction) is true. However, since the total size of the population $|\mathcal{P}|$ is bounded, the size of the retained population $|\mathcal{P}^r|$ cannot grow indefinitely. We have thus arrived at a contradiction suggesting our assumption is incorrect. Hence, as iterations $i \rightarrow \infty$, the expectation of the retained population $\mathbb{E}_{\mathbf{x} \in \mathcal{P}^r}[\mathcal{F}(\mathbf{x})] \geq \max_\mathbf{x} \mathcal{F}(\mathbf{x}) - k\epsilon$, for any arbitrarily small $\epsilon > 0$.

\end{proof}

From Theorem \ref{theorem:expectation_retained_pop}, we can prove a continuous accumulation of collocation points from the $\epsilon$-maximal neighborhood until the expectation of the retained population is close to the maximum point (i.e., reaches $L^\infty$), thus exhibiting the \textbf{\emph{Retain Property}} described in Section \ref{sec:evo}. Although this theorem assumes that the objective function $\mathcal{F}(\mathbf{x})$ (or in the context of PINNs, the absolute residual values, $\mathcal{R}_\theta(\mathbf{x_r})$) is constant, this theorem is still valid when $\mathcal{R}_\theta(\mathbf{x_r})$ is gradually changing with the highest error regions (defined using our $\epsilon$-maximal neighborhood) persisting over iterations. 
Under such conditions, the theorem states that the retained population $\mathcal{P}^r$ would always accumulate points from the $\epsilon$-maximal neighborhood, thereby adaptively increasing their contribution to the overall PDE residual loss and eventually resulting in their minimization.

\subsection{Release of Collocation Points from High PDE Residual Regions.}
\label{sec:release_proof}

Our definition of the \textbf{\emph{Release Property}} states that the distribution of collocation points should revert back to its original form by releasing the accumulated points in the high PDE residual regions once they are ``\emph{sufficiently minimized}''. Let us define that for an arbitrary collocation point $\mathbf{x_r}^i \in \mathcal{P}$, ``\emph{sufficient minimization}'' of the PDE is achieved if  $\mathcal{R}_\theta(\mathbf{x_r}^i) \leq \mathbb{E}_{\mathbf{x_r} \in \mathcal{P}}[\mathcal{R}_\theta(\mathbf{x_r})]=\tau$ (where $\tau$ is the threshold used by R3 sampling). Then, by definition, such points will belong to the ``non-retained population'' and will be immediately replaced by the re-sampled population. Since we generate the re-sampled population $\mathcal{P}^s$ from a uniform distribution, these ``\emph{sufficiently minimized}'' collocation points are replaced with a uniform density. Thus, R3 sampling satisfies the ``\emph{Release Property}''.

\section{Additional Details for Causal R3 Sampling}
\label{sec:causal_evo_appendix}




\subsection{Preventing Abrupt Causal Gate Movement.}
The shift parameter $\gamma$ of the causal gate is updated every iteration using the following scheme: $\gamma_{i+1} = \gamma_{i} + \eta_{g} e ^ {-\epsilon \mathcal{L}^{g}_r(\theta)}$, where $\eta_{g}$ is the learning rate that controls how fast the gate should propagate and $\epsilon$ denotes tolerance that controls how low the PDE loss needs to be before the gate shifts to the right, and $i$ denotes the $i^{th}$ iteration. Typically, in our experiments we set the learning rate to $1\text{e-}3$. Thus, for example, if the expectation of $e ^ {-\epsilon \mathcal{L}^{g}_r(\theta)}$ over 1000 iterations is 0.1, then  $\gamma$ would change by a value of 0.1 after 1000 iterations (since $\gamma_{i+N} \approx \gamma_{i} + \eta_{g}*N*\mathbb{E}[e ^ {-\epsilon \mathcal{L}^{g}_r(\theta)}]$). Additionally, note that, for a typical ``tanh'' causal gate, the operating range of $\gamma$ values vary from $-0.5$ to $1.5$. However, if the loss is very small ($\mathcal{L}^{g}_r(\theta) \rightarrow 0$), the magnitude of the update $e ^ {-\epsilon \mathcal{L}^{g}_r(\theta)} \rightarrow 1$, i.e., leads to an abrupt change in the causal gate. Thus, to prevent an abrupt gate movement due to large magnitude update, we employ a magnitude clipping scheme (similar to gradient clipping in conventional ML) as follows: $\gamma_{i+1} = \gamma_{i} + \eta_{g} \min(e ^ {-\epsilon \mathcal{L}^{g}_r(\theta)}, \Delta_{max})$, where $\Delta_{max}$ is the maximum allowed magnitude of update. Typically, for our experiments we keep $\Delta_{max} = 0.1$. Note, that $\Delta_{max}$ needs to be carefully chosen depending on the gate learning rate $\eta_{g}$. 

  
\subsection{Choice of Other Gate Functions.}
The gate function $g$ to enforce the principle of causality is not limited to the ``tanh'' gate presented in Section \ref{sec:causalevo} of the main paper. Any arbitrary function can be used for a causal gate as long as it obeys the following criteria: 
\begin{enumerate}
    \item \textbf{Continuous Time Property}: The function $g$ should be continuous in time, such that it can be evaluated at any arbitrary time $t$.
    \item \textbf{Monotonic Property}: The value of gate $g$ at time $t + \Delta t$ should be less than the value of the gate at time $t$, i.e., $g(t + \Delta t) \leq g(t)$. In other words, $g$ should be a monotonically decreasing function,
    \item \textbf{Shift Property}: The gate function should be parameterized using a shift parameter $\gamma$, such that $g_{\gamma}(t) < g_{\gamma+\delta}(t)$, where $\delta>0$, i.e., by increasing the value of the shift parameter the gate value of any arbitrary time should increase.
\end{enumerate}

An alternate choice of a causal gate is using a composition of ReLU and $\tanh$ functions: $g = ReLU(-\tanh(\alpha(t - \gamma))$ (as shown in Figure \ref{fig:relu_causal_gate}. We can see that by using ReLU, this alternate gate function provides a stricter thresholding of gate values to 0 after a cutoff value of time. The effect of this strict thresholding on the incorporation of causality in training PINNs can be studied in future analyses. In our current analysis, we simply used the $\tanh$ gate function for all our experiments.

\begin{figure}[ht]
    \centering
    \includegraphics[width=0.37\textwidth]{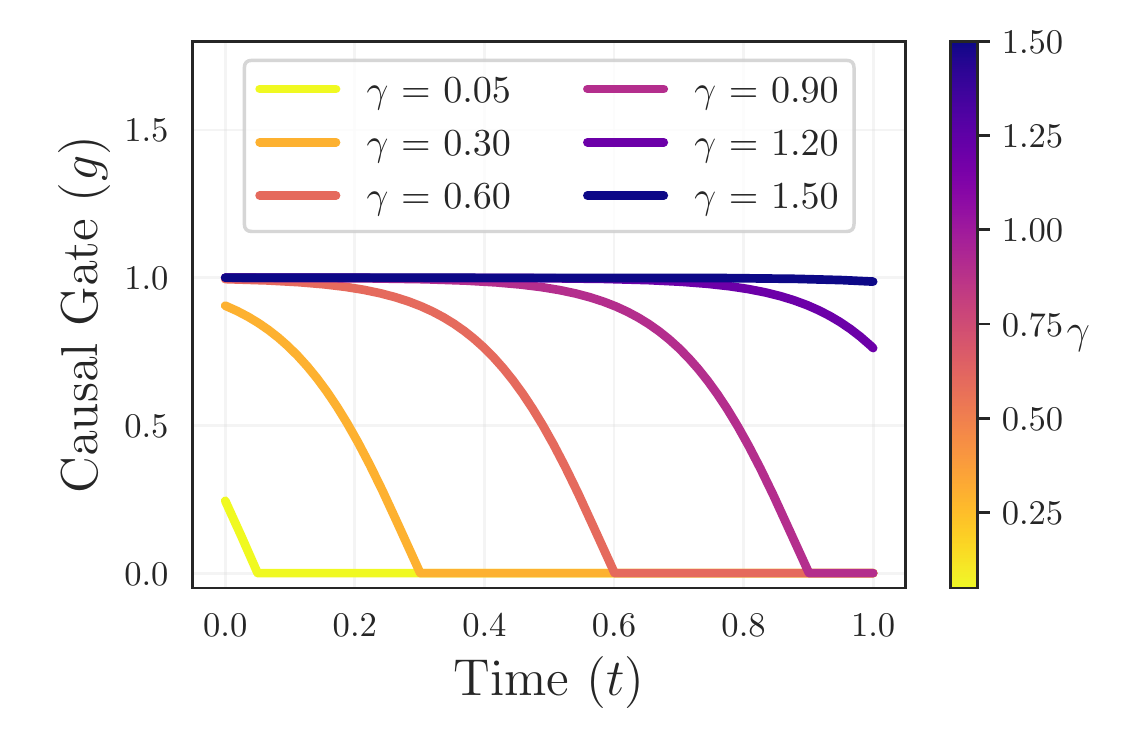}
    \vspace{-3ex}
    \caption{ReLU-tanh Causal Gate}
    \label{fig:relu_causal_gate}
\end{figure}

\section{Comparison of Baselines}
In this Section, we compare the different baseline methods w.r.t. the motivating properties. See Table \ref{tab:comp_baselines}.

\begin{table}[ht]
\centering
\setlength\tabcolsep{3pt} 
\fontsize{9pt}{9}\selectfont
\caption{Table comparing baseline methods in terms of their ability to comply with the motivating properties of this work.}
\label{tab:comp_baselines}
\begin{tabular}{l|ccccl}
\hline 
                                        & \textbf{\begin{tabular}[c]{@{}c@{}}Retain \\ Property\end{tabular}} & \textbf{\begin{tabular}[c]{@{}c@{}}Release \\ Property\end{tabular}} & \textbf{\begin{tabular}[c]{@{}c@{}}Resample \\ Property\end{tabular}} & \textbf{\begin{tabular}[c]{@{}c@{}}Computational \\ Efficiency\end{tabular}} & \multicolumn{1}{c}{\textbf{Causality}} \\ \hline \vspace{-1ex} \\ 
RAR-G \cite{lu2021deepxde}  & \textbf{\checkmark}                                                                & \multicolumn{1}{l}{}                                                 & \textbf{\checkmark}                                                                         & \multicolumn{1}{l}{}                                                         &                                        \\
RAD \cite{nabian2021efficient}                            & \textbf{\checkmark}                                                                & \textbf{\checkmark}                                                           & \textbf{\checkmark}                                                                         & \multicolumn{1}{l}{}                                                         &                                        \\
RAR-D \cite{wu2022comprehensive}                         & \textbf{\checkmark}                                                                & \multicolumn{1}{l}{}                                                 & \textbf{\checkmark}                                                                         & \multicolumn{1}{l}{}                                                         &                                        \\
$L-\infty$ \cite{wang2022is} & \textbf{\checkmark}                                                                & \textbf{\checkmark}                                                           & \multicolumn{1}{l}{}                                                               & \textbf{\checkmark}                                                                   &                                        \\
CausalPINN \cite{wang2022respecting}                    & \multicolumn{1}{l}{}                                                      & \multicolumn{1}{l}{}                                                 & \textbf{\checkmark}                                                                         & \textbf{\checkmark}                                                                   & \multicolumn{1}{c}{\textbf{\checkmark}}         \\
R3 (ours)                            & \textbf{\checkmark}                                                                & \textbf{\checkmark}                                                           & \textbf{\checkmark}                                                                         & \textbf{\checkmark}                                                                   &                                        \\
Causal R3 (ours)                     & \textbf{\checkmark}                                                                & \textbf{\checkmark}                                                           & \textbf{\checkmark}                                                                         & \textbf{\checkmark}                                                                   & \multicolumn{1}{c}{\textbf{\checkmark}}  \\ \hline     
\end{tabular}
\end{table}

\section{Computational Complexity Analysis of R3 Sampling vs its baselines}
\label{sec:complexity}
In this section, we aim to provide a comprehensive comparison of the computational complexity of R3 Sampling and its baselines. It is well-known that the cost of computing the PDE residuals using automatic-differentiation during training amounts is reasonably large, especially when the PDE contains higher order gradients that require repeated backward passes through the computational graphs used by standard deep learning packages like PyTorch/Tensorflow. Thus, in this section we would mainly focus on comparing the number of PDE residual computations that each algorithm makes during training. We can quantify the effect of this difference on computational costs as follows. Let us first define the Notations that we are going to use for the analysis:

\vspace{5ex}

\centerline{\bf Notations for Computational Analysis}
\bgroup
\def\arraystretch{1.5}
\begin{tabular}{p{1in}p{3.5in}}
$\displaystyle C$ & Computational Cost to evaluate the PDE residual on a single collocation point.\\
$\displaystyle N$ & Total Number of Training Iterations\\
$\displaystyle |\mathcal{P}|$ & Total Number of Collocation Points/Population Size (Also referred to as the initial set of collocation points for RAR based methods)\\
$\displaystyle |\mathcal{P}_{dense}|$ & Auxiliary Set of dense points [For RAR-G,RAD, and RAR-D], where $|\mathcal{P}_{dense}| >> |\mathcal{P}|$\\
$\displaystyle K$ & Resampling Period [For RAR-G,RAD, and RAR-D]\\
$\displaystyle M$ & Number of additional collocation points added to the initial set\\
\end{tabular}

Next, we present the computational cost to run each method separately.

\textbf{PINN/R3}: The cost of computing the PDE residual at an arbitrary epoch $i$: $C_{R3}(i) = C|\mathcal{P}|$. Thus, the overall computational cost for $N$ iterations is $C_{R3} = NC|\mathcal{P}|$.

\textbf{RAR-G/RAD/RAR-D}: For RAR-based methods, the initial set of collocation points $\mathcal{P}$ keeps growing as $M$ new test points are added every $K$ iterations. Thus, at an iteration $i$, the size of the collocation point set $|\mathcal{P}_i| = |\mathcal{P}_0| + m\lfloor i/K \rfloor = |\mathcal{P}| + M\lfloor i/K \rfloor$ (to simplify notations, let $|\mathcal{P}_0|=|\mathcal{P}|$, i.e., all of the methods start with the same number of initial collocation points).

Thus, the cost to compute the PDE residuals on this training set $\mathcal{P}$ is:
\begin{align}
    C_{RAR}^{train} &= KC|\mathcal{P}| + KC(|\mathcal{P}|+M) + KC(|\mathcal{P}|+2M)+...+KC(|\mathcal{P}|+M\Big\lfloor \frac{N}{K} \Big\rfloor) \nonumber \\
    &=KC|\mathcal{P}| \Bigg( \Big\lfloor \frac{N}{K} \Big\rfloor + 1 + \frac{M}{2|\mathcal{P}|}\Big\lfloor \frac{N}{K} \Big\rfloor \Big( \Big\lfloor \frac{N}{K} \Big\rfloor + 1 \Big) \Bigg) 
\end{align}

There is also an additional cost of evaluating the PDE residuals on the dense set $C_{RAR}^{dense}$ to select these $M$ points from high PDE residual region.
\begin{align}
    C_{RAR}^{dense} = C|\mathcal{P}_{dense}|\Big\lfloor \frac{N}{K} \Big\rfloor
\end{align}

Therefore, the overall cost for RAR-based methods is: $C_{RAR} = C_{RAR}^{dense} + C_{RAR}^{train}$

\textbf{Comparing the cost between R3 and RAR}:
Assuming that the total number of epochs $N$ is divisible by the resampling period $K$, which is true for most practical scenarios. 
\begin{align}
    C_{RAR} &= NC|\mathcal{P}| + KC|\mathcal{P}| + \frac{MNC}{2}\Big( \frac{N}{K} + 1 \Big) + C|\mathcal{P}_{dense}|\frac{N}{K} \nonumber\\
    C_{RAR} &= C_{R3} + KC|\mathcal{P}| + \frac{MNC}{2}\Big( \frac{N}{K} + 1 \Big) + C|\mathcal{P}_{dense}|\frac{N}{K} \nonumber\\
    C_{RAR} - C_{R3} &= KC|\mathcal{P}| + \frac{MNC}{2}\Big( \frac{N}{K} + 1 \Big) + C|\mathcal{P}_{dense}|\frac{N}{K}
\end{align}

Thus, we can see that the difference in the computational cost can quickly grow depending on the choice of $RAR$ setting. Also note that, since $|\mathcal{P}_{dense}| >> |\mathcal{P}|$, the additional cost of $C|\mathcal{P}_{dense}|\frac{N}{K}$ is significant, especially if  we want to re-sample/re-evaluate the adaptive sampling frequently (i.e., for small values of $K$).

\section{Details of Partial Differential Equations Used in this Work}
\label{sec:details_pde}

\subsection{Convection Equation}
We considered a 1D-convection equation that is commonly used to model transport phenomenon, described as follows:

\begin{align}
    &\frac{\partial u}{\partial t} + \beta \frac{\partial u}{\partial x} = 0, \: \: x \in [0, 2\pi], t \in [0, 1] \\
    &u(x, 0) = h(x) \\
    &u(0, t) = u(2\pi, t) 
\end{align}

where $\beta$ is the convection coefficient and h(x) is the initial condition. For our case studies, we used a constant setting of $h(x) = \sin(x)$ with periodic boundary conditions in all our experiments, while varying the value of $\beta$ in different case studies. 


\subsection{Allen-Cahn Equation}
We considered a 1D - Allen Cahn equation that is used to describe the process of phase-separation in multi-component alloy systems as follows: 

\begin{align}
    &\frac{\partial u}{\partial t} - 0.0001 \frac{\partial^2 u}{\partial x^2} + 5u^3 - 5u = 0, \: \: x \in [-1, 1], t \in [0, 1] \\
    &u(x, 0) = x^2\cos(\pi x) \\
    &u(t, -1) = u(t, 1) \\
    &\frac{\partial u}{\partial t} \Big|_{x=-1} = \frac{\partial u}{\partial t} \Big|_{x=1}
\end{align}

\subsection{Eikonal Equation}
We formulate the Eiknonal equation for signed distance function (SDF) calculation as:
\begin{align}
    & |\nabla{u}| = 1, \ \ \  &x, t \in [-1, 1] \\
    & u(x_s) = 0, & x_s \in \mathcal{S} \\
    & u(x, -1), u(x, 1), u(-1, y), u(1, y) > 0 &  
\end{align}
where $\mathcal{S}$ is zero contour set of the SDF. In training PINN, we use the zero contour constraint as initial condition loss and positive boundary constraint as boundary loss (see Table \ref{tab:imp_details} for details of loss balancing).

\subsection{Kuramoto–Sivashinsky Equation}
We use 1-D Kuramoto–Sivashinsky equation from CausalPINN \cite{wang2022respecting}:
\begin{equation}
    \frac{\partial{u}}{\partial{t}} + \alpha u \frac{\partial{u}}{\partial{x}} + \beta \frac{\partial^2{u}}{\partial{x}^2} + \gamma \frac{\partial^4{u}}{\partial{x}^4} = 0,
\end{equation}
subject to periodic boundary conditions and an initial condition
\begin{equation}
u(0, x) = u_0(x)
\end{equation}
The parameter $\alpha$, $\beta$, $\gamma$ controls the dynamical behavior of the equation. We use the same configurations as the CausalPINN: $\alpha = 5, \beta = 0.5, \gamma = 0.005$ for regular settings, and $\alpha = 100/16, \beta = 100/162
, \gamma = 100/164$ for chaotic behaviors.

\section{Details on Skewness and Kurtosis Metrics}
\label{sec:metric_details}

Skewness and kurtosis are two basic metrics used in statistics to characterize the properties of a distribution of values $\{Y_i\}_{i=1}^N$. A high value of Skewness indicates lack of symmetry in the distribution, i.e., the distribution of values to the left and to the right of the center point of the distribution are not identical. On the other hand, a high value of Kurtosis indicates the presence of a heavy-tail, i.e., there are more values farther away from the center of the distribution relative to a Normal distribution.  In our implementation using \textit{scipy}, we used the adjusted Fisher-Pearson coefficient of skewness and Fisher’s definition of kurtosis, as defined below.

\textbf{Skewness}: For univariate data $Y_1, Y_2, ..., Y_N$, the formula of skewness is 

\begin{equation}
    \mbox{skewness} = \frac{\sqrt{N(N-1)}}{N-2} \times
       \frac{\sum_{i=1}^{N}(Y_{i} - \bar{Y})^{3}/N} {s^{3}},
\end{equation}
where $\bar{Y}$ is the sample mean of the distribution and $s$ is the standard deviation. For any symmetric distribution (e.g., Normal distribution), the skewness is equal to zero. A positive value of skewness means there are more points to the right of the center point of the distribution than there are to the left. Similarly, a negative value of skewness means there are more points to the left of the center point than there are to the right. In our use-case, a large positive value of skewness of the PDE residuals indicates that there are some asymmetrically high PDE residual values to the right.

\textbf{Kurtosis}: Kurtosis is the fourth central moment divided by the square of the variance after subtracting 3, defined as follows:

\begin{equation}
    \mbox{kurtosis} = \frac{\sum_{i=1}^{N}(Y_{i} - \bar{Y})^{4}/N}
      {s^{4}}  - 3
\end{equation}
For a Normal distribution, Kurtosis is equal to 0. A positive value of Kurtosis indicates that there are more values in the tails of the distribution than what is expected from a Normal distribution. On the other hand, a negative value of Kurtosis indicates that there are lesser values in the tails of the distribution relative to a Normal distribution. In our use-case, a large positive value of Kurtosis of the PDE residuals indicates that there are some high PDE residual values occurring in very narrow regions of the spatio-temporal domain, that are being picked up as the heavy-tails of the distribution.

\section{Hyper-parameter Settings and Implementation Details}
\label{sec:hyperparam_setting}

The hyper-parameter settings for the different baseline methods for every benchmark PDE are provided in Table \ref{tab:imp_details}. Note that we used the same network architecture and other hyper-parameter settings across all baseline method implementations for the same PDE. In this table, the column on `r/ic/bc' represents the setting of the $\lambda_{r}, \lambda_{ic}, \lambda_{bc}$ hyper-parameters that are used to weight the different loss terms in the overall learning objective of PINNs.  Table \ref{tab:imp_details} also lists the type of Optimizer, learning rate (lr), and learning rate scheduler (lr.scheduler) used across all baselines for every PDE. For the Eikonal equation, we used the same modified multi-layer perceptron (MLP) architecture as the one proposed in \citep{wang2020understanding}. Additionally, 
for the Causal R3 Sampling method, we used the following hyper-parameter settings across all PDEs: $\alpha = 5$, learning rate of the gate $\eta_{g} = 1e-3$, tolerance $\epsilon = 20$, initial value of $\beta = -0.5$, and $\Delta_{max} = 0.1$. The number of iterations (and the corresponding PDE coefficients for the Convection Equation) are provided in Section \ref{sec:results} of the main paper.

\begin{table}[ht]
\centering
\setlength\tabcolsep{2pt} 
\caption{Hyper-parameter settings for different baseline methods for every benchmark PDE}
\label{tab:imp_details}
\begin{footnotesize}

\begin{tabular}{cccccccc}
\hline
PDE                         & Method                                                                    & Architecture                                                   & \begin{tabular}[c]{@{}c@{}}Periodic\\ Encoding\end{tabular} & r/ic/bc   & \begin{tabular}[c]{@{}c@{}}Optimizer/\\ lr\end{tabular} & \begin{tabular}[c]{@{}c@{}}lr. \\ scheduler\end{tabular}               & \begin{tabular}[c]{@{}c@{}}RAR Hyperparams\\ k/m/N/$|\mathcal{S}|$\end{tabular}                      \\ \hline
\multirow{2}{*}{Convection} & \begin{tabular}[c]{@{}c@{}}PINN,\\ cPINN,\\ R3,\\ Causal R3\end{tabular} & \begin{tabular}[c]{@{}c@{}}50 x 4\\ (MLP)\end{tabular}         & No                                                          & 1/100/100 & \begin{tabular}[c]{@{}c@{}}Adam/\\ 1e-3\end{tabular}    & \begin{tabular}[c]{@{}c@{}}StepLR\\ rate=0.9\\ steps=5000\end{tabular} & N/A                                                                                      \\
                            & Curr. Reg.                                                                & \begin{tabular}[c]{@{}c@{}}50 x 4\\ (MLP)\end{tabular}         & No                                                          & 1/1/1     & \begin{tabular}[c]{@{}c@{}}Adam/\\ 1e-4\end{tabular}    & No                                                                     & N/A                                                                                      \\
\multicolumn{1}{l}{}        & \begin{tabular}[c]{@{}c@{}}RAR-G\\ RAD\\ RAR-D\end{tabular}               & \begin{tabular}[c]{@{}c@{}}50 x 4\\ (MLP)\end{tabular}         & No                                                          & 1/100/100 & \begin{tabular}[c]{@{}c@{}}Adam/\\ 1e-3\end{tabular}    & \begin{tabular}[c]{@{}c@{}}StepLR\\ rate=0.9\\ steps=5000\end{tabular} & \begin{tabular}[c]{@{}c@{}}-/1/100/100000\\ 1/1/100/100000\\ 1/1/100/100000\end{tabular} \\ \hline
Allen Cahn                  & \begin{tabular}[c]{@{}c@{}}PINN,\\ cPINN,\\ R3,\\ Causal R3\end{tabular} & \begin{tabular}[c]{@{}c@{}}128x4\\ (MLP)\end{tabular}          & Yes                                                         & 1/100/100 & \begin{tabular}[c]{@{}c@{}}Adam/\\ 1e-3\end{tabular}    & \begin{tabular}[c]{@{}c@{}}StepLR\\ rate=0.9\\ steps=5000\end{tabular} & N/A                                                                                      \\
\multicolumn{1}{l}{}        & \begin{tabular}[c]{@{}c@{}}RAR-G\\ RAD\\ RAR-D\end{tabular}               & \begin{tabular}[c]{@{}c@{}}128x4\\ (MLP)\end{tabular}          & Yes                                                         & 1/100/100 & \begin{tabular}[c]{@{}c@{}}Adam/\\ 1e-3\end{tabular}    & \begin{tabular}[c]{@{}c@{}}StepLR\\ rate=0.9\\ steps=5000\end{tabular} & \begin{tabular}[c]{@{}c@{}}-/1/100/100000\\ 1/1/100/100000\\ 1/1/100/100000\end{tabular} \\ \hline
Eikonal                     & \begin{tabular}[c]{@{}c@{}}PINN,\\ R3\end{tabular}                       & \begin{tabular}[c]{@{}c@{}}128x4\\\citep{wang2020understanding}\\ (modified MLP)\end{tabular} & No                                                          & 1/500/10  & \begin{tabular}[c]{@{}c@{}}Adam/\\ 1e-3\end{tabular}    & \begin{tabular}[c]{@{}c@{}}StepLR\\ rate=0.9\\ steps=5000\end{tabular} & N/A                                                                                      \\ \hline
\end{tabular}
\end{footnotesize}
\end{table}

\textbf{Hardware Implementation Details}: We trained each of our models on one Nvidia Titan RTX 24GB GPU.
\section{Additional Discussion of Results}
\label{sec:additional_results}

\subsection{Visualizing Propagation Failure for Different Settings of $\beta$}
\label{sec:prop_failure_viz}

In Figure \ref{fig:propagation_hypothesis} of the main paper, we demonstrated the phenomenon of propagation failure for convection equation with $\beta=50$, which was characterized by large values of Skewness and Kurtosis in the PDE residual fields for a large number of iterations (or epochs), and a simultaneous stagnation in the relative error values even though the mean PDE residual kept on decreasing. Here, in Figure \ref{fig:propagation_failure2}, we show that the same phenomenon can be observed for other large values of $\beta > 10$, namely, $\beta = 30,50,70$. We can see that the relative errors for all these three cases remains high even though the PDE residual loss keeps on decreasing with iterations. We can also see that the absolute values of skewness and kurtosis increase as we increase $\beta$, indicating higher risks of propagation failure. In fact, for $\beta = 30$, we can even see that the epoch that marks an abrupt increase in skewness and kurtosis (around 50K iterations) also shows a sudden increase in the relative error at the same epoch, highlighting the connection between imbalanced PDE residuals and the phenomenon of propagation failure.
\begin{figure}[ht]
    \centering
    \includegraphics[width=\textwidth]{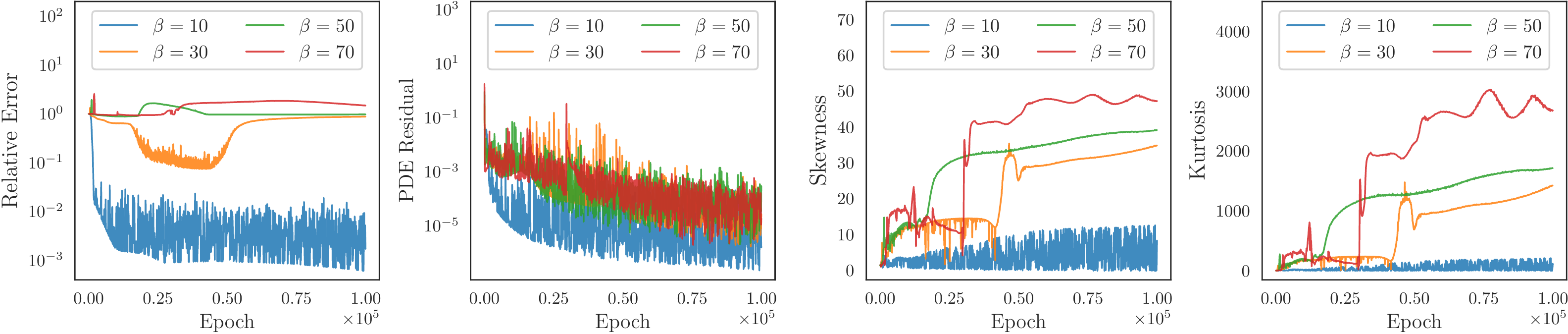}
    \vspace{-3ex}
    \caption{Demonstration of Propagation Failure for Different Settings of $\beta$ ($\beta = 10,30,50,70$)}
    \label{fig:propagation_failure2}
\end{figure}

\subsection{Visualizing Propagation Failure for Kuramoto-Shivashinksy Equation}
\label{sec:prop_failure_viz_ks_equation}

Figure \ref{fig:propagation_failure_ks_eq} is similar to Figure \ref{fig:propagation_hypothesis}, and demonstrates that during the training of conventional PINNs although the PDE residuals steadily decrease with iterations, the skewness and kurtosis quickly become very large. This behavior suggests the presence of small regions with very high PDE residuals, which we refer to as the “propagation failure” of PINNs. Figure \ref{fig:viz_propagation_failure_ks_eq} shows the Ground-truth solution, the predicted PDE field and the PDE Residuals after 100k iterations. It can be seen that there exists small regions or sharp boundaries of high error regions which we hypothesise is the effect for propagation failure.

\begin{figure}[ht]
    \centering
    \includegraphics[width=1.\textwidth]{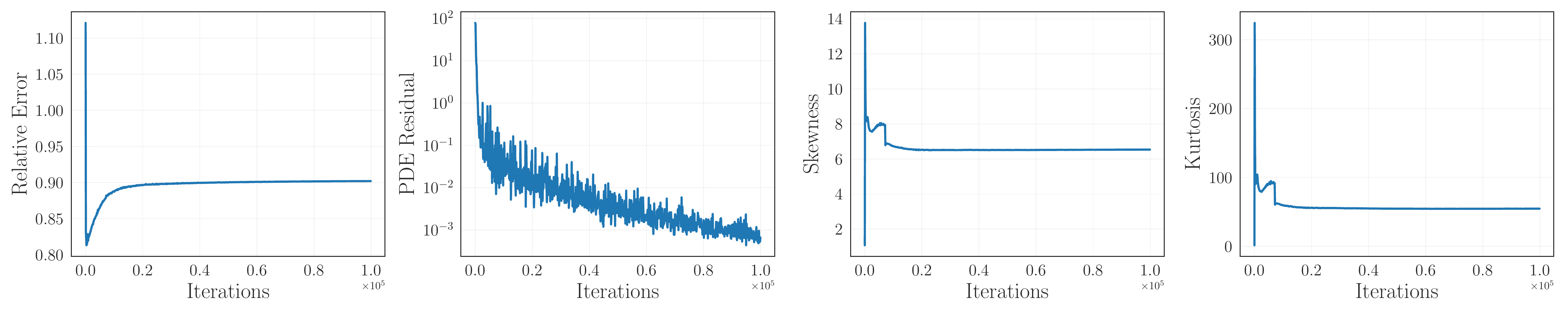}
    \vspace{-3ex}
    \caption{Demonstration of Propagation Failure for Kuramoto-Shivashinksky Equation.}
    \label{fig:propagation_failure_ks_eq}
\end{figure}

\begin{figure}[ht]
    \centering
    \includegraphics[width=0.8\textwidth]{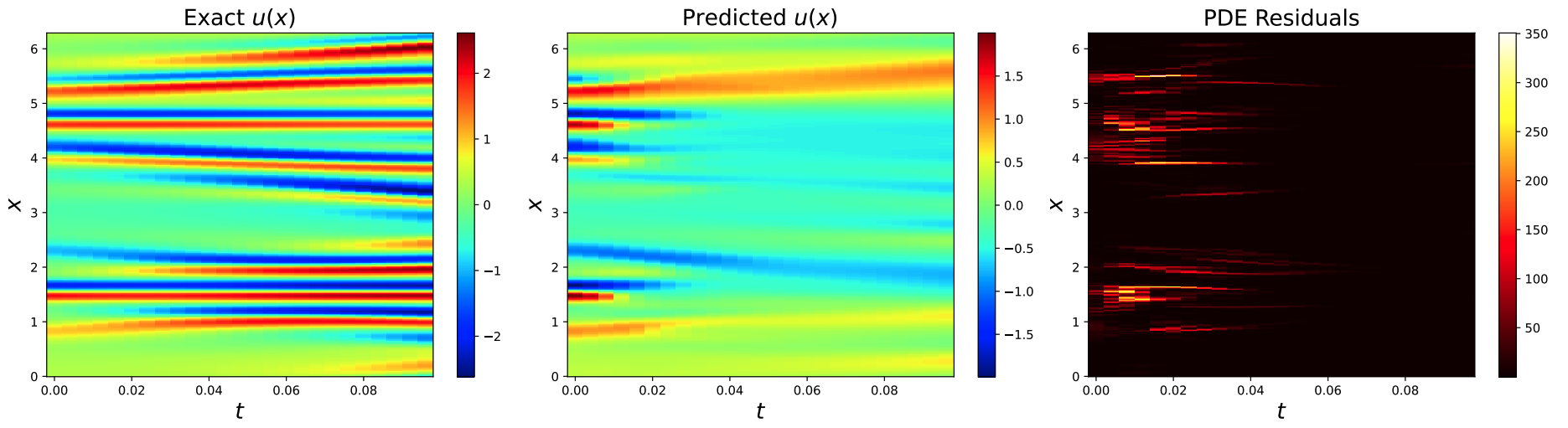}
    \vspace{-3ex}
    \caption{Visualization of High PDE Residuals due to Propagation Failure for Kuramoto-Shivashinksky Equation (Chaotic).}
    \label{fig:viz_propagation_failure_ks_eq}
\end{figure}

\subsection{Additional Results for Sample Efficiency of R3 Sampling}
\label{sec:additional_sample_efficiency}

We provide the results for the sample efficiency results for the Convection Equation and Allen Cahn Equations in Table \ref{tab:sample_eff_convection} and \ref{tab:sample_eff_allancahn} respectively, averaged across 3 random runs.

\begin{table}[htbp]
\centering
\setlength\tabcolsep{3pt} 
\fontsize{8.1pt}{9}\selectfont
\caption{\fontsize{10pt}{11}\selectfont Relative $\mathcal{L}_2$ errors (in $\%$) of comparative methods for Convection Equation with $\beta=50$ for different number of collocation points $N_s$ averaged across 3 random runs.\\}
\label{tab:sample_eff_convection}
\begin{tabular}{lccccccc}
\hline
\textbf{Method} & \textbf{$N_s$ = 100} & \textbf{$N_s$ = 500} & \textbf{$N_s$ = 1000} & \textbf{$N_s$ = 2000} & \textbf{$N_s$ = 5000} & \textbf{$N_s$ = 10000} & \textbf{$N_s$ = 20000} \\ \hline
PINN (fixed) & $103.0 \pm 0.97$ & $121.0 \pm 13.1$ & $108.0 \pm 11.5$ & $114.0 \pm 22.8$ & $135.0 \pm 15.4$ & $7.04 \pm 7.81$ & $3.35 \pm 1.91$ \\
PINN (dynamic) & $66.2 \pm 4.87$ & $51.5 \pm 26.8$ & $52.2 \pm 17.8$ & $63.0 \pm 7.22$ & $4.93 \pm 1.59$ & $5.90 \pm 3.17$ & $3.14 \pm 1.76$ \\
CausalPINN & $89.0 \pm 9.49$ & $66.9 \pm 11.2$ & $72.5 \pm 3.82$ & $43.4 \pm 37.1$ & $44.9 \pm 36.3$ & $61.5 \pm 13.8$ & $61.8 \pm 6.77$ \\
RAD & $83.4 \pm 11.3$ & $28.9 \pm 34.9$ & $67.1 \pm 1.57$ & $51.0 \pm 29.9$ & $23.5 \pm 32.1$ & $23.5 \pm 32.1$ & $23.5 \pm 32.1$ \\
R3 & $4.27 \pm 2.11$ & $2.22 \pm 1.26$ & $1.47 \pm 0.45$ & $1.93 \pm 0.66$ & $3.27 \pm 2.70$ & $2.77 \pm 1.52$ & $1.96 \pm 0.33$ \\
CausalR3 & $5.54 \pm 3.52$ & $4.59 \pm 2.87$ & $1.14 \pm 0.11$ & $5.69 \pm 4.10$ & $6.51 \pm 5.50$ & $4.11 \pm 2.86$ & $1.52 \pm 9.69$ \\ \hline
\end{tabular}
\end{table}

\begin{table}[htbp]
\centering
\setlength\tabcolsep{3pt} 
\fontsize{8.1pt}{9}\selectfont
\caption{\fontsize{10pt}{11}\selectfont Relative $\mathcal{L}_2$ errors (in $\%$) of comparative methods for Allen Cahn Equation for different number of collocation points $N_s$ averaged across 3 random runs.\\}
\label{tab:sample_eff_allancahn}
\begin{tabular}{lccccccc}
\hline
\textbf{Method} & \textbf{$N_s$ = 100} & \textbf{$N_s$ = 500} & \textbf{$N_s$ = 1000} & \textbf{$N_s$ = 2000} & \textbf{$N_s$ = 5000} & \textbf{$N_s$ = 10000} & \textbf{$N_s$ = 20000} \\ \hline
PINN (fixed) & $70.4 \pm 5.49$ & $70.1 \pm 0.28$ & $62.4 \pm 1.80$ & $56.0 \pm 3.70$ & $76.4 \pm 21.9$ & $51.2 \pm 0.02$ & $5.11 \pm 4.32$ \\
PINN (dynamic) & $2.88 \pm 3.67$ & $2.93 \pm 2.22$ & $3.69 \pm 5.00$ & $0.78 \pm 0.10$ & $0.83 \pm 0.16$ & $0.82 \pm 0.04$ & $0.78 \pm 0.02$ \\
CausalPINN & $75.4 \pm 9.43$ & $22.9 \pm 6.85$ & $23.6 \pm 21.2$ & $5.50 \pm 3.32$ & $2.53 \pm 1.23$ & $1.90 \pm 1.68$ & $0.79 \pm 0.10$ \\
RAD & $50.9 \pm 0.13$ & $21.3 \pm 23.2$ & $5.10 \pm 4.23$ & $1.08 \pm 0.40$ & $1.92 \pm 2.06$ & $0.82 \pm 0.11$ & $0.73 \pm 0.02$ \\
R3 & $0.86 \pm 0.22$ & $0.69 \pm 0.03$ & $0.81 \pm 0.16$ & $0.75 \pm 0.07$ & $0.69 \pm 0.01$ & $0.81 \pm 0.22$ & $0.81 \pm 0.10$ \\
CausalR3 & $0.70 \pm 0.01$ & $0.69 \pm 0.04$ & $0.71 \pm 0.01$ & $0.69 \pm 0.02$ & $0.71 \pm 0.01$ & $0.71 \pm 0.03$ & $0.72 \pm 0.02$ \\ \hline
\end{tabular}
\end{table}

\subsection{Convergence Speed of R3 Sampling}
\label{sec:convergence_speed}

Figure \ref{fig:convergence} shows that Causal R3 is able to converge faster to low error solutions than all other baseline methods for both convection and Allen Cahn equations. This shows the importance of respecting causality along with focusing on high residual regions to ensure fast propagation of correct solution from initial/boundary points to interior points. While cPINN-fixed and PINN-dynamic do not converge for convection ($N_r=1K$), we can see that they both converge to lower errors compared to PINN-fixed for Allen Cahn and for convection when $N_r$ is large ($20K$).

\begin{figure}[ht]
\centering
\subfigure[Convection ($\beta =50$, $N_r = 1$k)]{\label{fig:convection_1k} \includegraphics[scale=0.51]{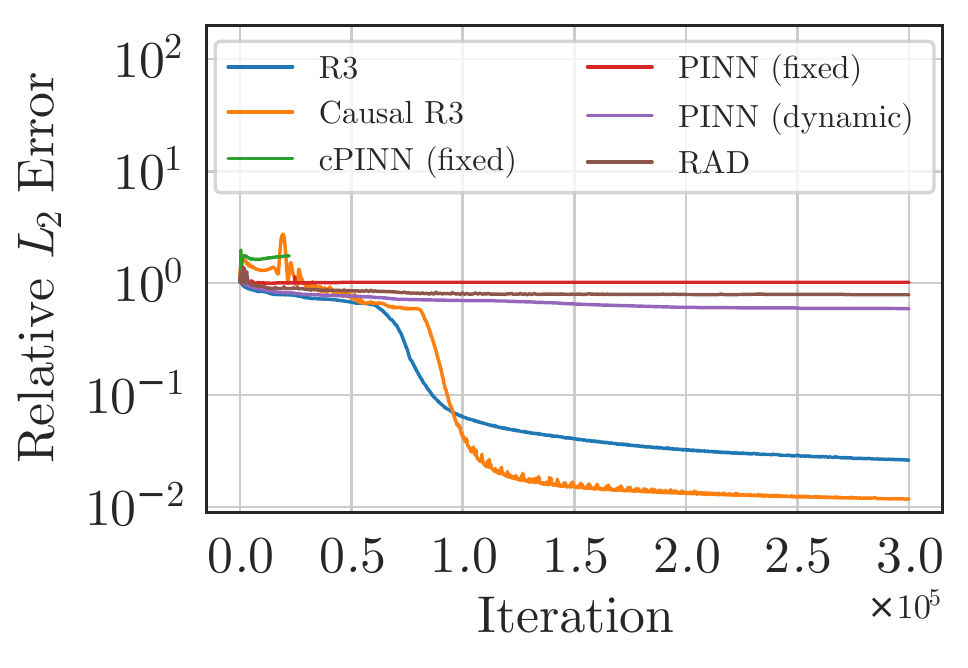}} 
\subfigure[Convection ($\beta =50$, $N_r = 20$k)]{\label{fig:convection_20k} \includegraphics[scale=0.51]{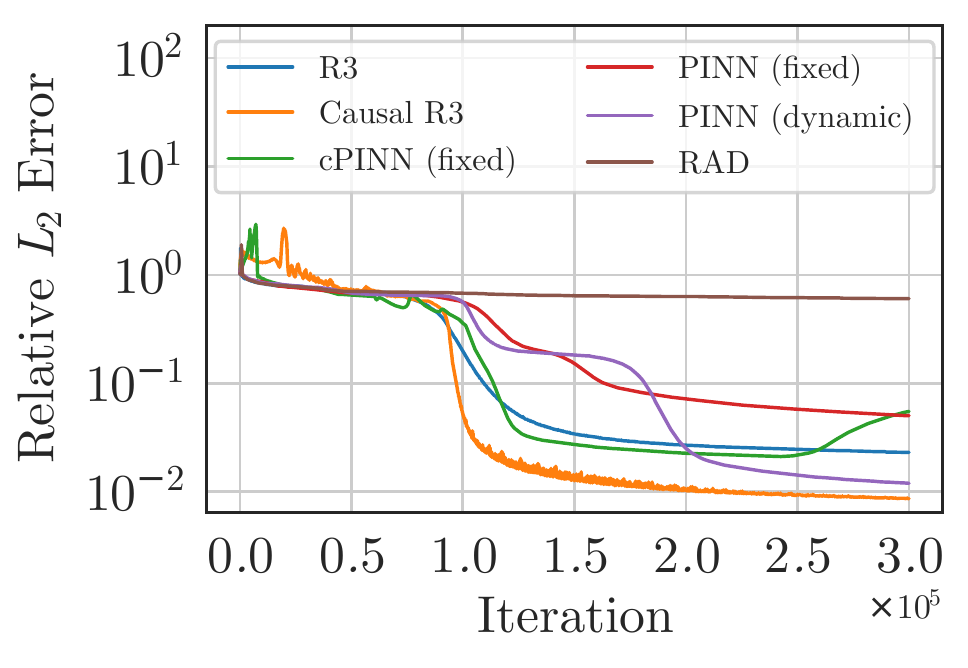}}
\subfigure[Allen Cahn ($N_r = 10$k)]{\label{fig:allen_10k} \includegraphics[scale=0.51]{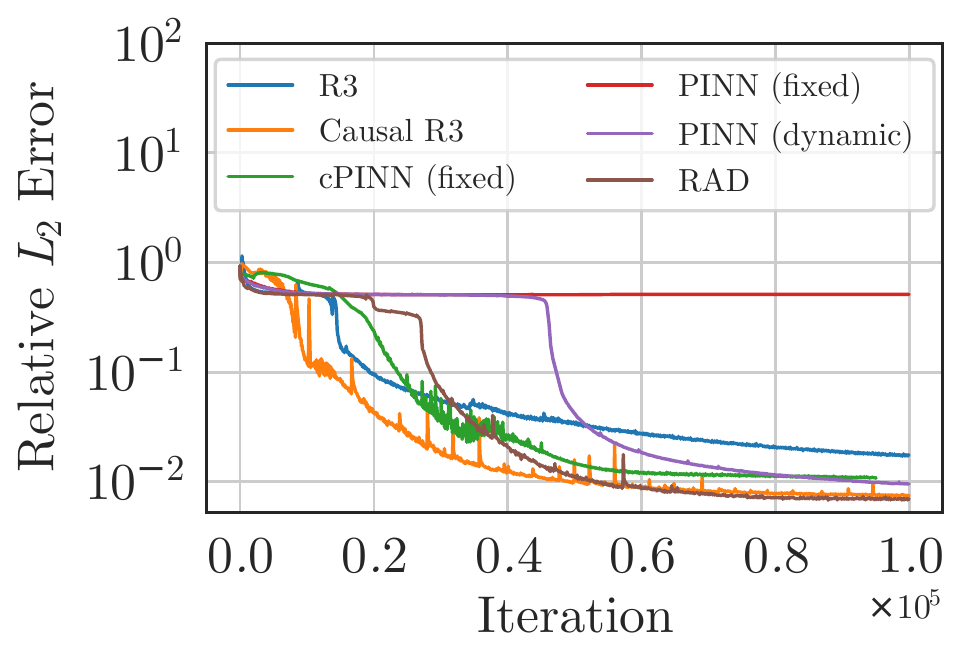}} 
\vspace{-3ex}
\caption{Comparing convergence speeds of baselines w.r.t. R3 sampling and Causal R3.}
\label{fig:convergence}
\end{figure}


\subsection{Sensitivity of RAR-based Methods}
\label{sec:sensitivity_rar}

Figures \ref{fig:rar_sensitivity} shows the sensitivity of two RAR-based methods: RAR-G and RAR-D respectively on different values of the resampling period $K$ and the number of collocation points $m$ added from the dense set $\mathcal{P}_{dense}$. Note that although the size of the initial set of collocation points $|\mathcal{P}|$ was same for each of these experiments, the final size of the collocation points vary depending on the choice of $K$ (the final size of the collocation points increases as $k$ decreases) and $m$ (the final size of the collocation points increases with $m$). We essentially observe that adding more collocation points almost always improves the performance of these RAR-based methods.

\begin{figure}[ht]
\centering
\subfigure[RAR-G]{\label{fig:rar_g_sens} \includegraphics[scale=0.51]{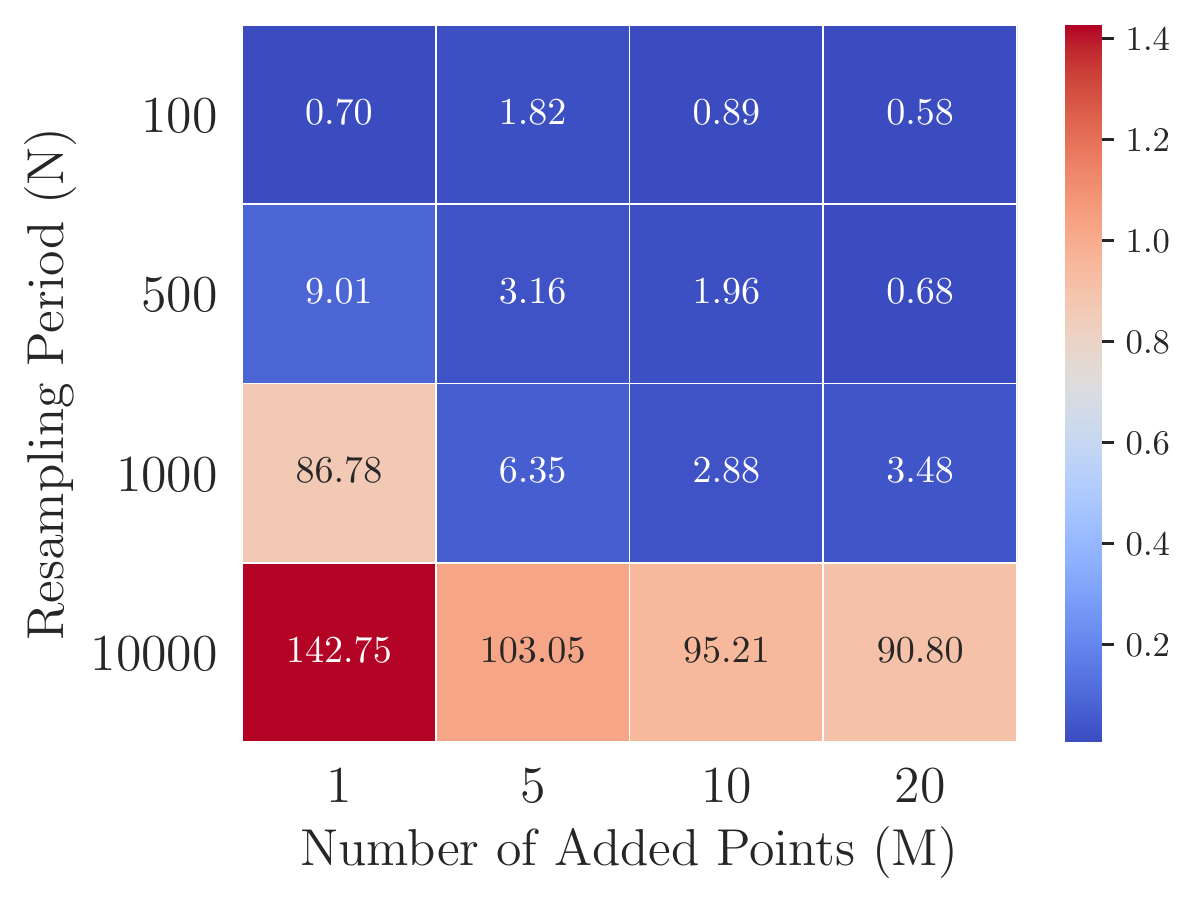}}
\subfigure[RAR-D]{\label{fig:rar_d_sens} \includegraphics[scale=0.51]{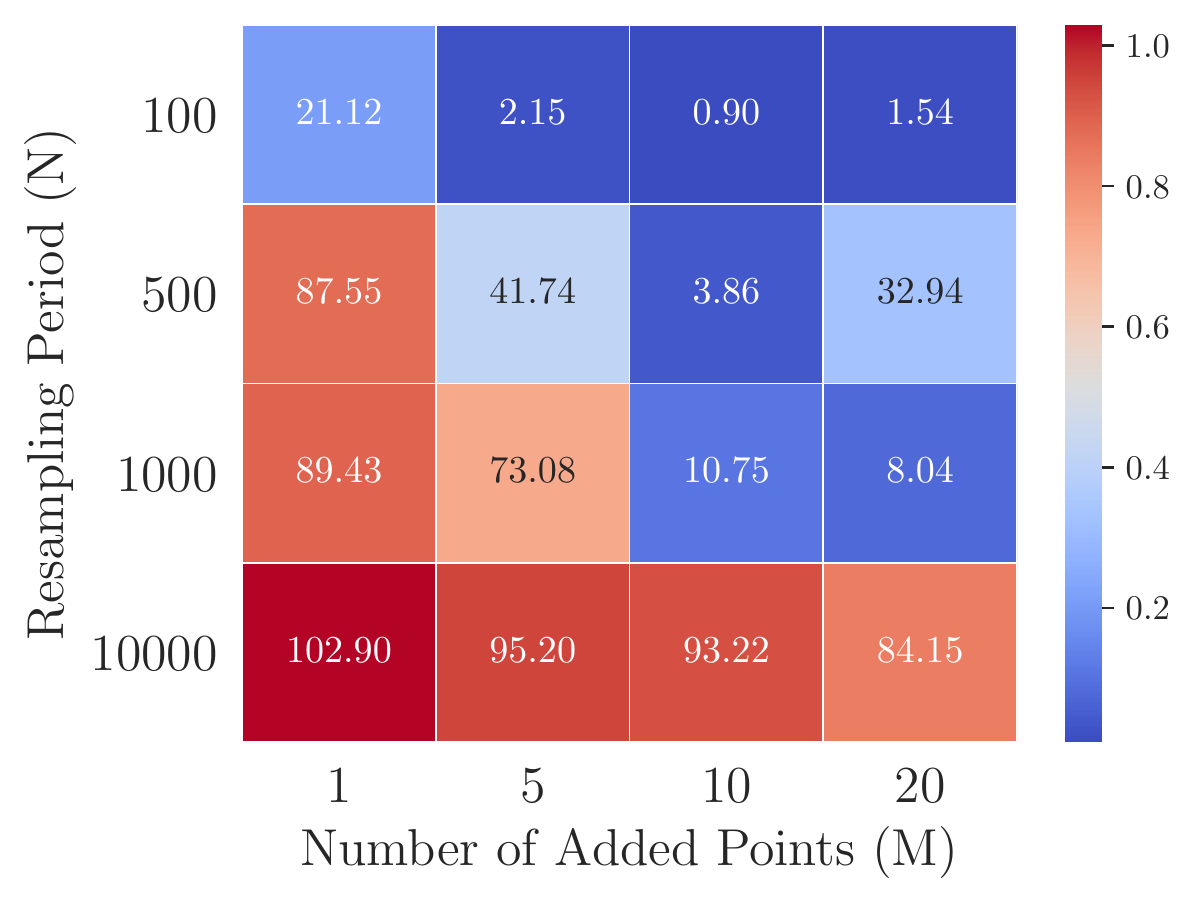}} 
\vspace{-3ex}
\caption{Sensitivity of RAR-based Methods on different values of $K$ and $M$ (for a fixed $|\mathcal{P}_{dense}|=100k$ on Convection Equation with $\beta=30$.) The numbers in the heatmap denote the $\%$ Relative $\mathcal{L}_2$ Error. }
\label{fig:rar_sensitivity}
\end{figure}


\subsection{Visualizing the Evolution of Collocation Points in R3 sampling}
Figure \ref{fig:evosample_propagation_err} shows the evolution of collocation points and PDE residual maps of R3 sampling as we progress in training iterations for the convection equation with $\beta=50$. We can see that the retained population of R3 sampling at every iteration (shown in red) selectively focuses on high PDE residual regions, while the re-sampled population (shown in blue) are generated from a uniform distribution. By increasing the contribution of high residual regions in the computation of the PDE loss, we can see that R3 sampling is able to reduce the PDE loss over iterations without admitting high imbalance, thus mitigating the propagation failure mode, in contrast to conventional PINNs.

\begin{figure}[ht]
    \centering
    \includegraphics[width=\textwidth]{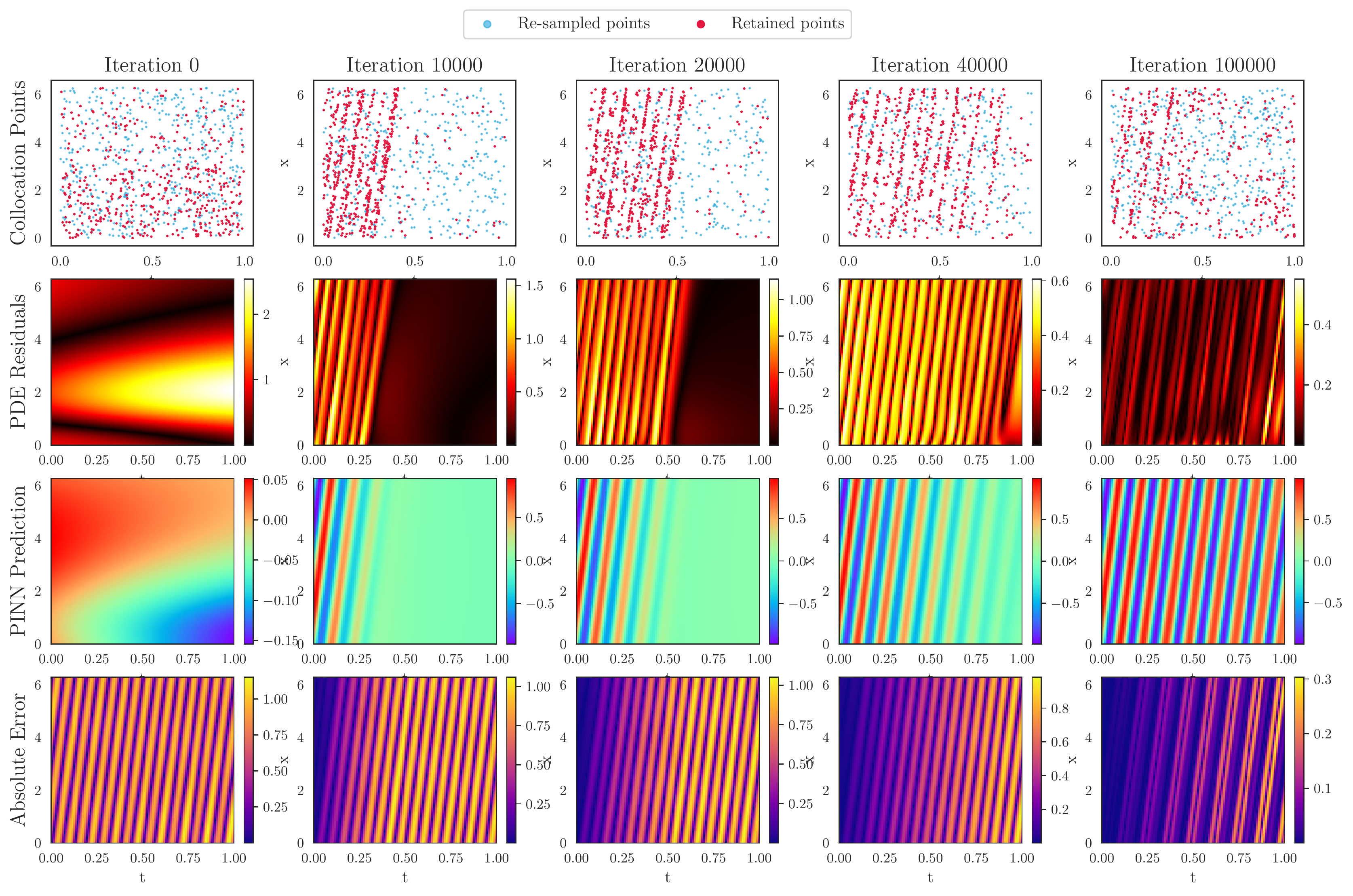}
    \vspace{-3ex}
    \caption{Demonstrating the propagation of information from the initial/boundary points to the interior points for R3 Sampling on Convection Equation($\beta=50$)}
    \label{fig:evosample_propagation_err}
\end{figure}

\begin{figure}[ht]
    \centering
    \includegraphics[width=\textwidth]{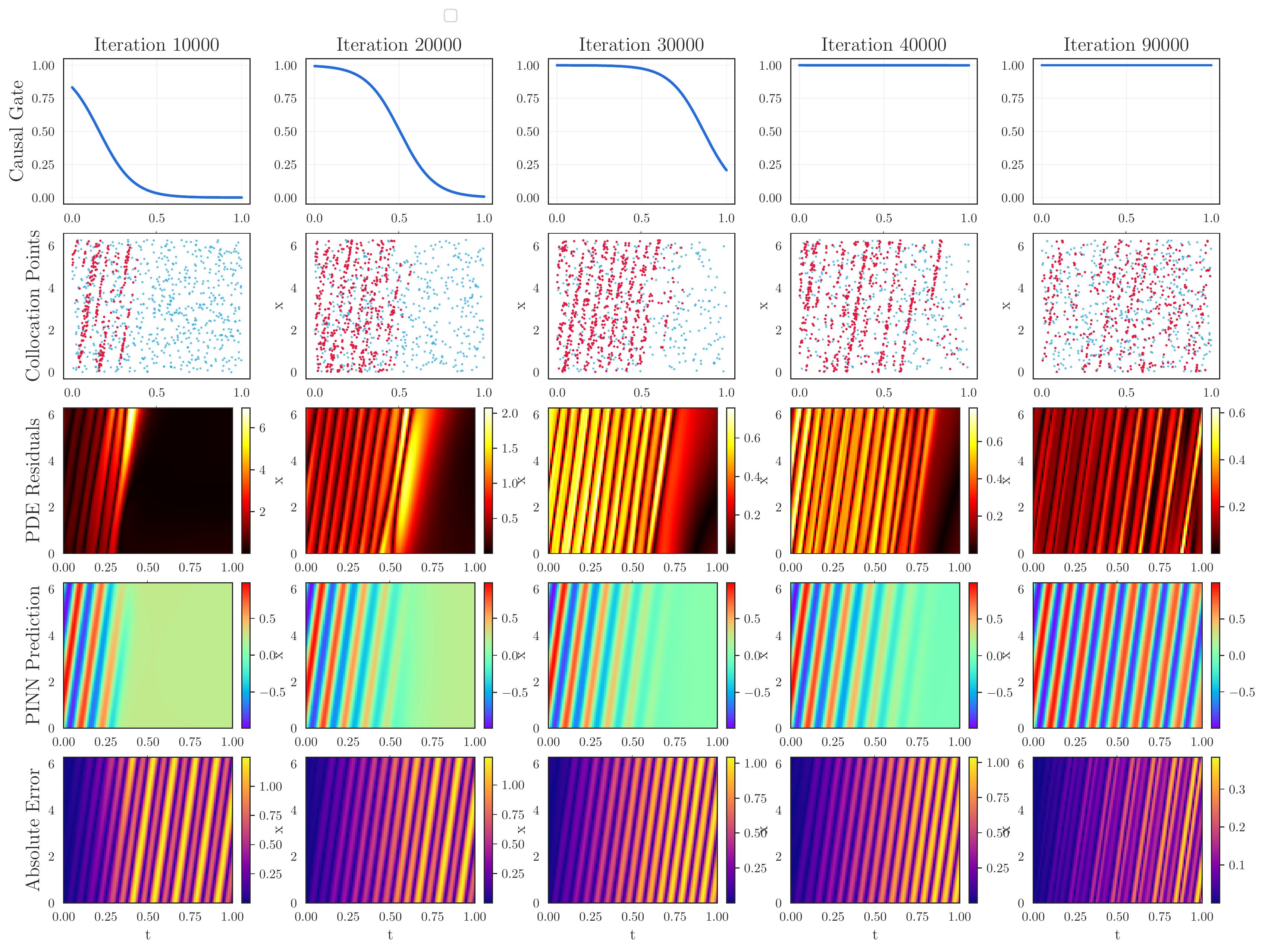}
    \vspace{-3ex}
    \caption{Demonstrating the propagation of information from the initial/boundary points to the interior points for Causal R3 on Convection Equation($\beta=50$)}
    \label{fig:causal_evosample_propagation_err}
\end{figure}

\subsection{Visualizing the Evolution of Collocation Points in Causal R3}
Figure \ref{fig:causal_evosample_propagation_err} shows the evolution of collocation points and PDE residuals of Causal R3, along with the dynamics of the Causal Gate function. We can see that the retained population at every iteration (shown in red) strictly adheres to the principle of causality such that the collocation points are sampled from later times only when the PDE residuals at earlier times have been minimized. This is also reflected in the movement of the causal gate function where the gate values are close to 1 for only small portions of time domain  at intermediate epochs. At 90K iterations, we can see that the causal gate values are close to 1 for all values of time, indicating that the entire time domain is now revealed for training PINN to converge to the correct solution.

\subsection{Visualizing the Dynamics of Collocation Points for RAR-Based Methods}

Figures \ref{fig:rar_g_samples}, \ref{fig:rad_samples}, annd \ref{fig:rar_d_samples}
shows the evolution of collocation points and PDE residuals of RAR-G, RAD, and RAR-D, respectively. We can see that all three RAR-based methods are failing to converge to the correct solution even after 100K iterations, demonstrating their inabillity to mitigate propagation failures.
\begin{figure}[ht]
    \centering
    \includegraphics[width=\textwidth]{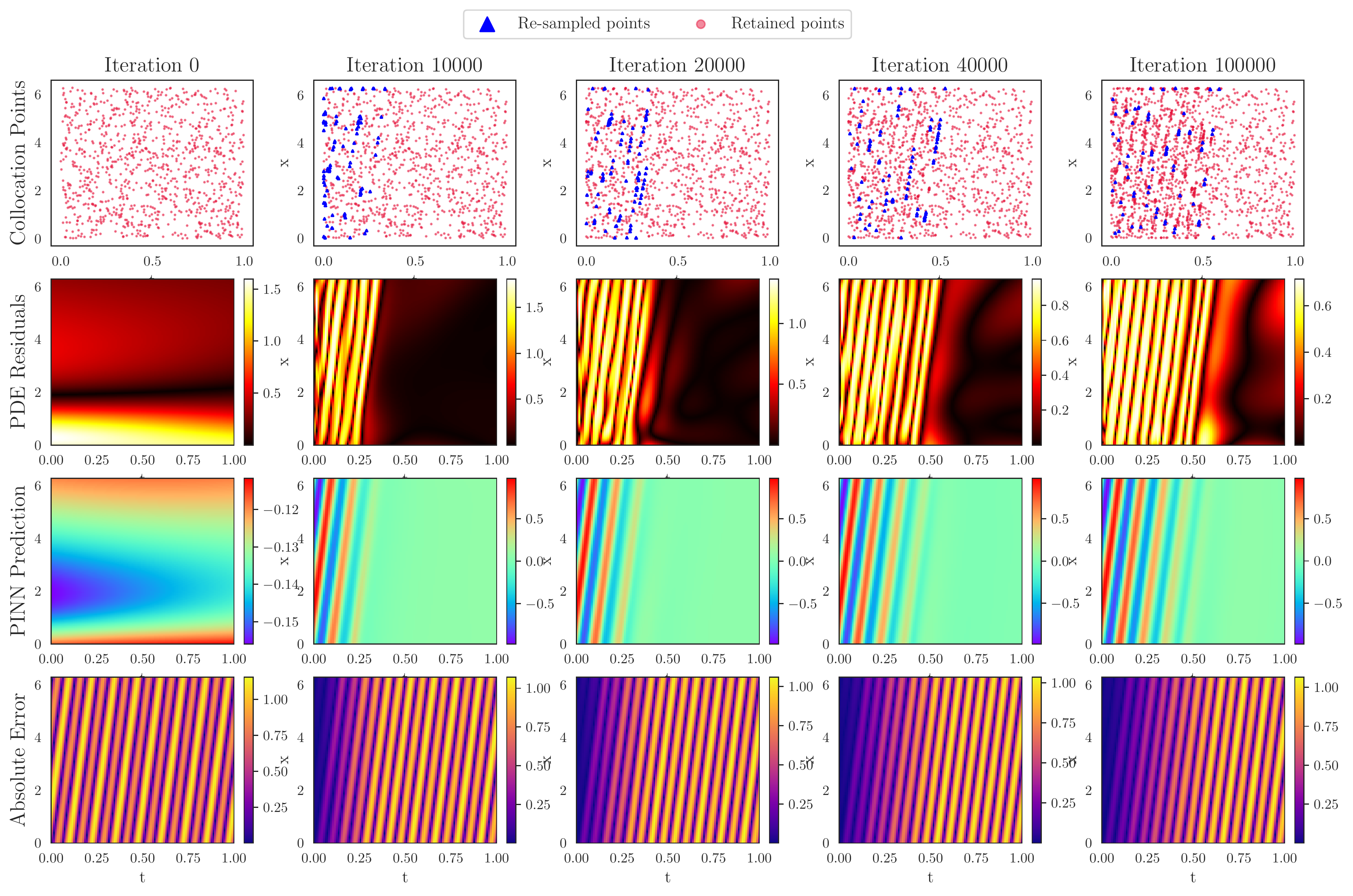}
    \vspace{-3ex}
    \caption{Demonstrating the dynamic changes in the collocation points for RAR-G on Convection Equation($\beta=50$)}
    \label{fig:rar_g_samples}
\end{figure}

\begin{figure}[ht]
    \centering
    \includegraphics[width=\textwidth]{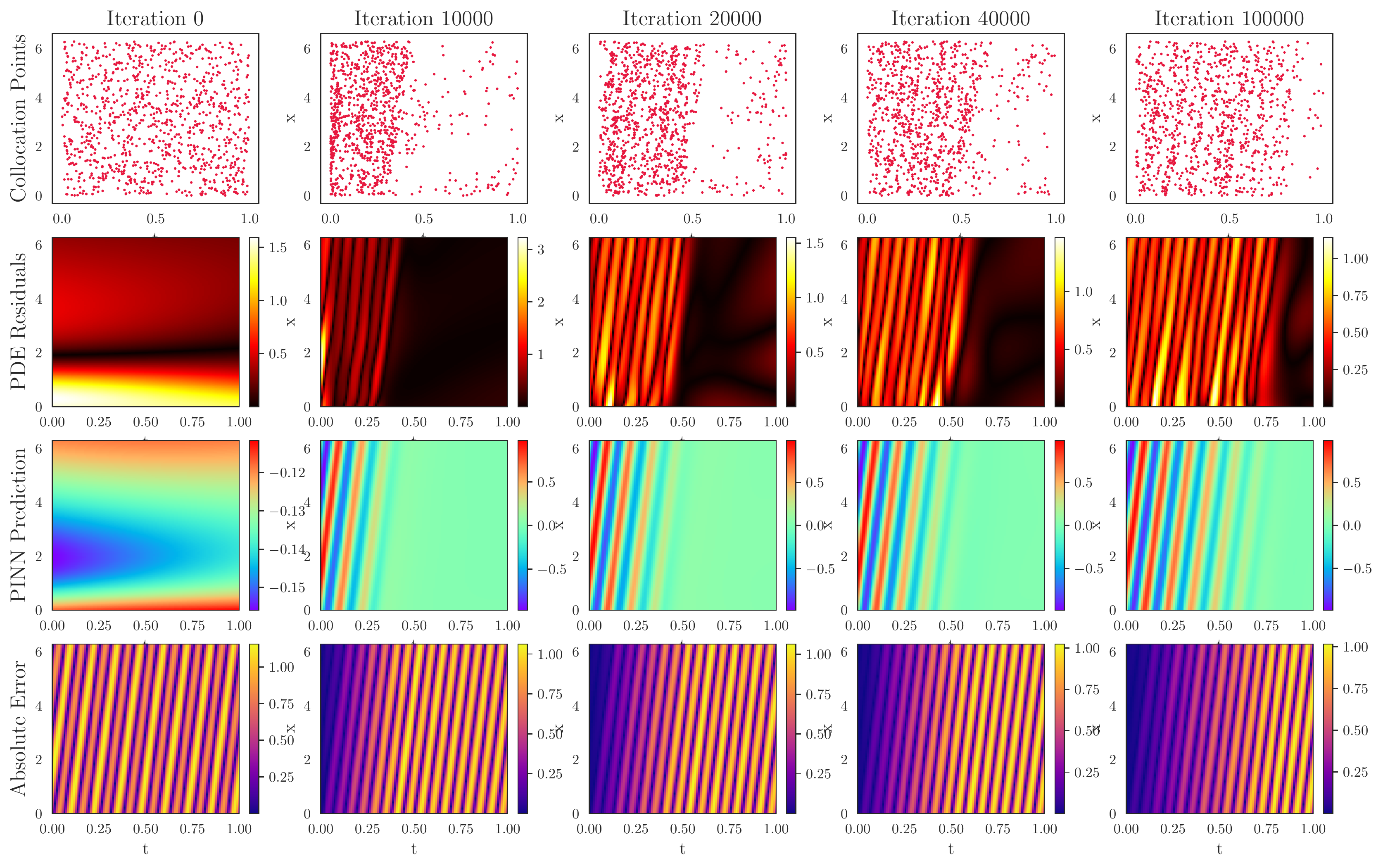}
    \vspace{-3ex}
    \caption{Demonstrating the dynamic changes in the collocation points for RAD on Convection Equation($\beta=50$)}
    \label{fig:rad_samples}
\end{figure}

\begin{figure}[ht]
    \centering
    \includegraphics[width=\textwidth]{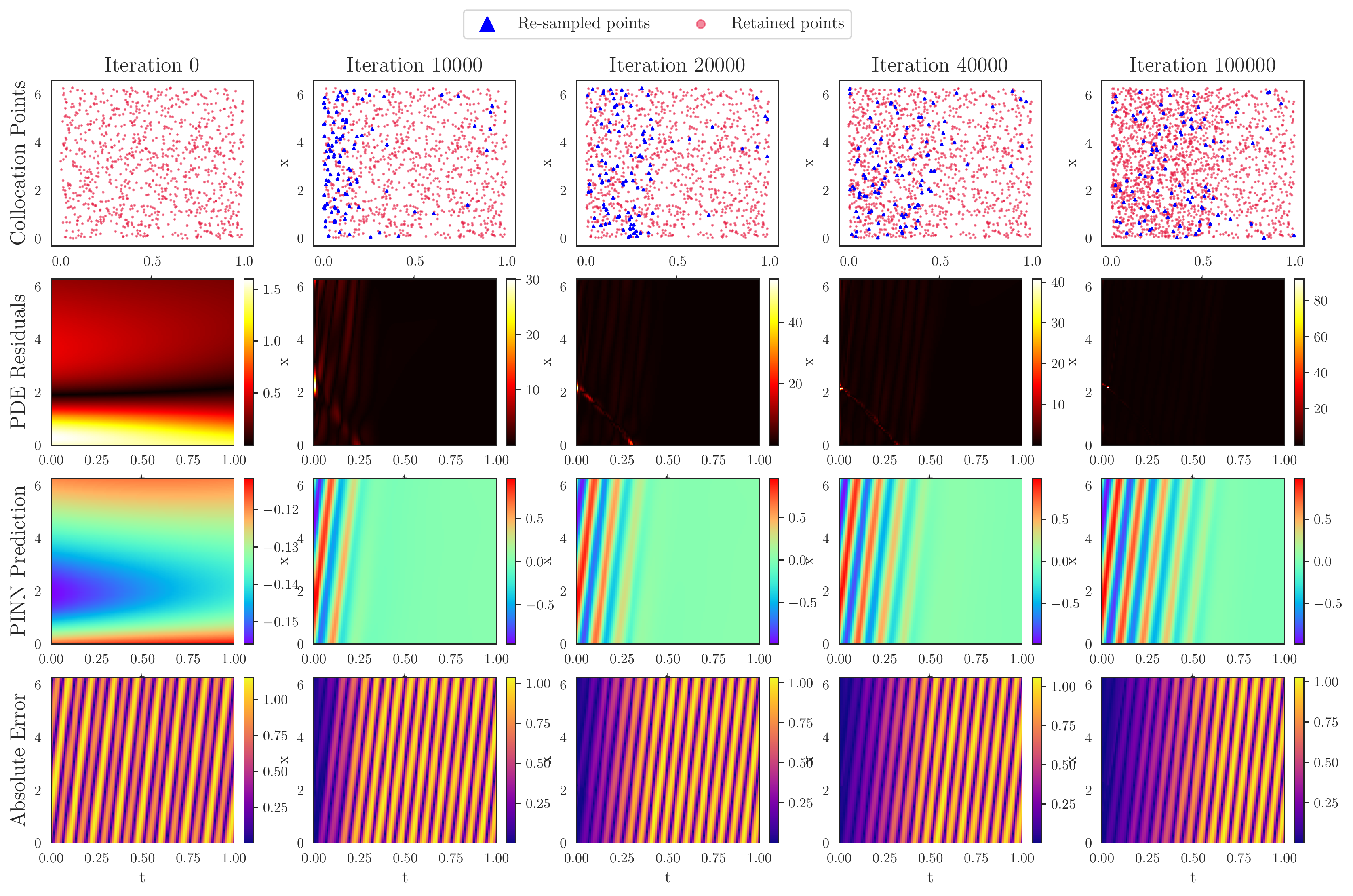}
    \vspace{-3ex}
    \caption{Demonstrating the dynamic changes in the collocation points for RAR-D on Convection Equation($\beta=50$)}
    \label{fig:rar_d_samples}
\end{figure}

\subsection{Kuramoto-Sivashinsky (KS) Equations}
\label{sec:ks_results}

We used three additional experiments on the Kuramoto-Sivashinsky (KS) Equations (one for a regular relatively simple case and the remaining two exhibiting chaotic behavior). Please note that these equations are particularly more complex, especially the chaotic cases where a small change in the state of the solution can result in very large errors downstream in time. Thus, for chaotic domains, the successful propagation of solution from the initial and boundary conditions is critical to guarantee convergence. We would also like to highlight that the computational cost of these experiments are significantly higher. We used the exact same hyper-parameter settings as those provided in CPINN except the sample size, which was varied from 128 to 2048 in the KS-regular case, and the number of training iterations, which was kept as 300k in our proposed approaches while CPINN was allowed to use about 1 M maximum number of iterations with early stopping. Our method on average takes 50-60\% less time than CPINN because of the significantly smaller number of iterations.



\begin{table}[ht]
\centering
\caption{Relative $\mathcal{L}_2$ errors (in \%) of CausalPINN, R3 sampling and Causal R3 over the three different KS-Equation Benchmarks \cite{wang2022respecting}.}
\label{tab:ks_table_1}
\begin{tabular}{c|c|c|c}
\hline
                   & CausalPINN         & \begin{tabular}[c]{@{}c@{}}R3\\ (ours)\end{tabular} & \begin{tabular}[c]{@{}c@{}}Causal R3\\ (ours)\end{tabular} \\
\hline
Regular            & 2.120\%          & 3.740\%                                           & \textbf{0.761}\%                                          \\
Chaotic            & \textbf{3.272}\% & 6.924\%                                           & 7.630\%                                                   \\
Chaotic - Extended & 52.66\%         & \textbf{29.26}\%                                   & 33.50\%  \\
\hline
\end{tabular}
\end{table}

Table \ref{tab:ks_table_1} compares the performance of CPINN, R3, and Causal R3 on the three KS-Equations cases as was used in the original CPINN paper. Please note that for these experiments, we used the exact same hyper-parameter settings as the original CPINN, thus a large number of collocation points were used (2048 for the regular case and 8192 for the two chaotic regimes for each time-window). We can observe that on the regular case, R3 sampling performs similarly to CPINN, while Causal R3 is significantly better than both. However, in the first chaotic case, CPINN is slightly better than both R3 sampling and Causal R3. Finally, in the much more chaotic regime for the KS-Equation  (extended case), we find that all of the methods struggle to obtain a high fidelity solution of the field. However, R3 sampling and Causal R3 are somewhat better than CPINNs. Hence, we can comment that R3 sampling and Causal R3 have comparable performance to CPINN on their benchmark settings. Additional visualizations of the solutions of comparative methods on the three KS equations cases are provided in Figures \ref{fig:ks_equation_simple_viz}, \ref{fig:ks_equation_chaotic1_viz}, and \ref{fig:ks_equation_chaotic2_viz}.

\begin{figure}[ht]
    \centering
    \includegraphics[width=0.60\textwidth]{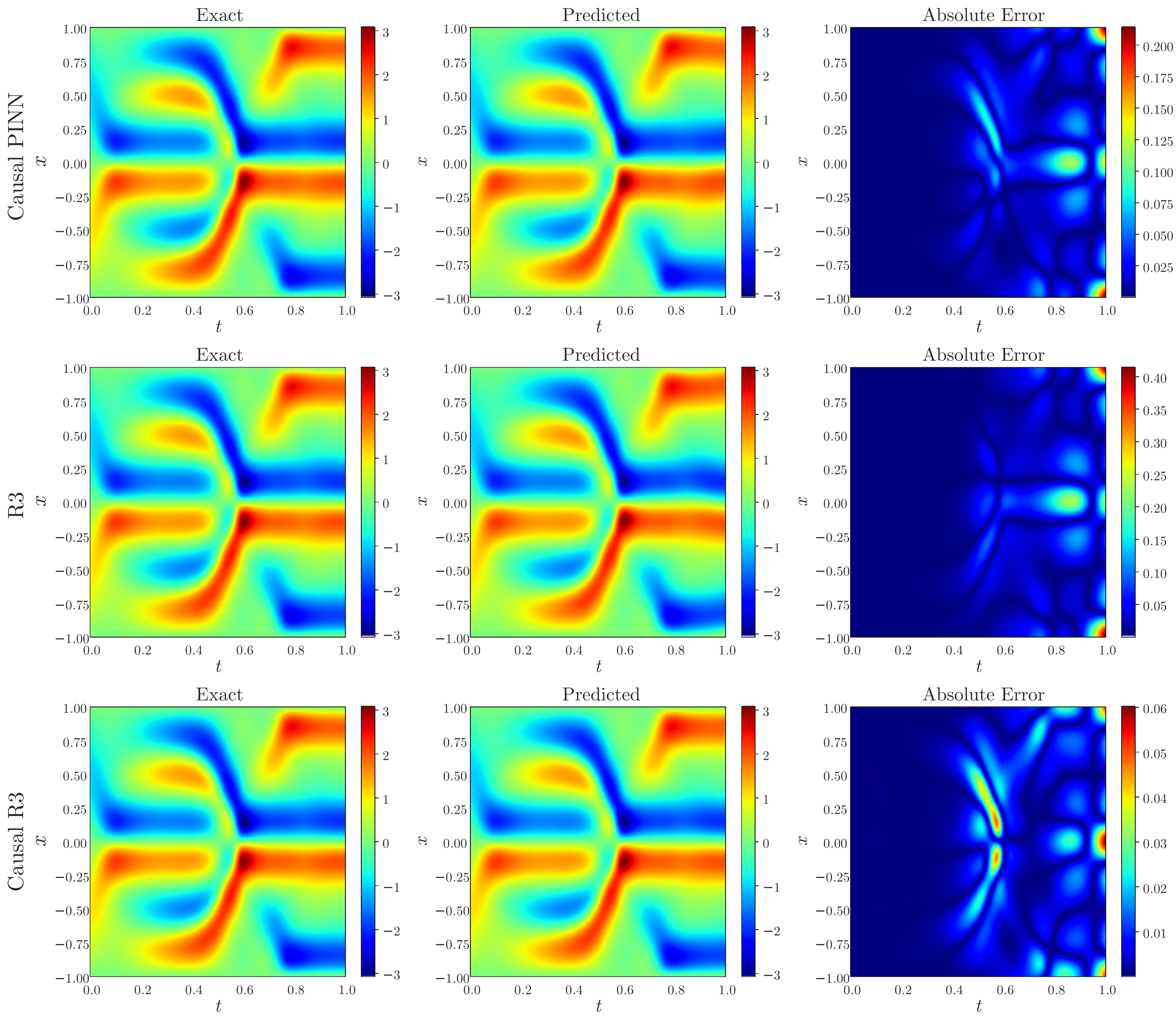}
    \vspace{-1ex}
    \caption{Visualization of the R3 sampling, Causal R3 and CausalPINN on KS-Equation (Regular Case).}
    \label{fig:ks_equation_simple_viz}
\end{figure}

\begin{figure}[ht]
    \centering
    \includegraphics[width=0.60\textwidth]{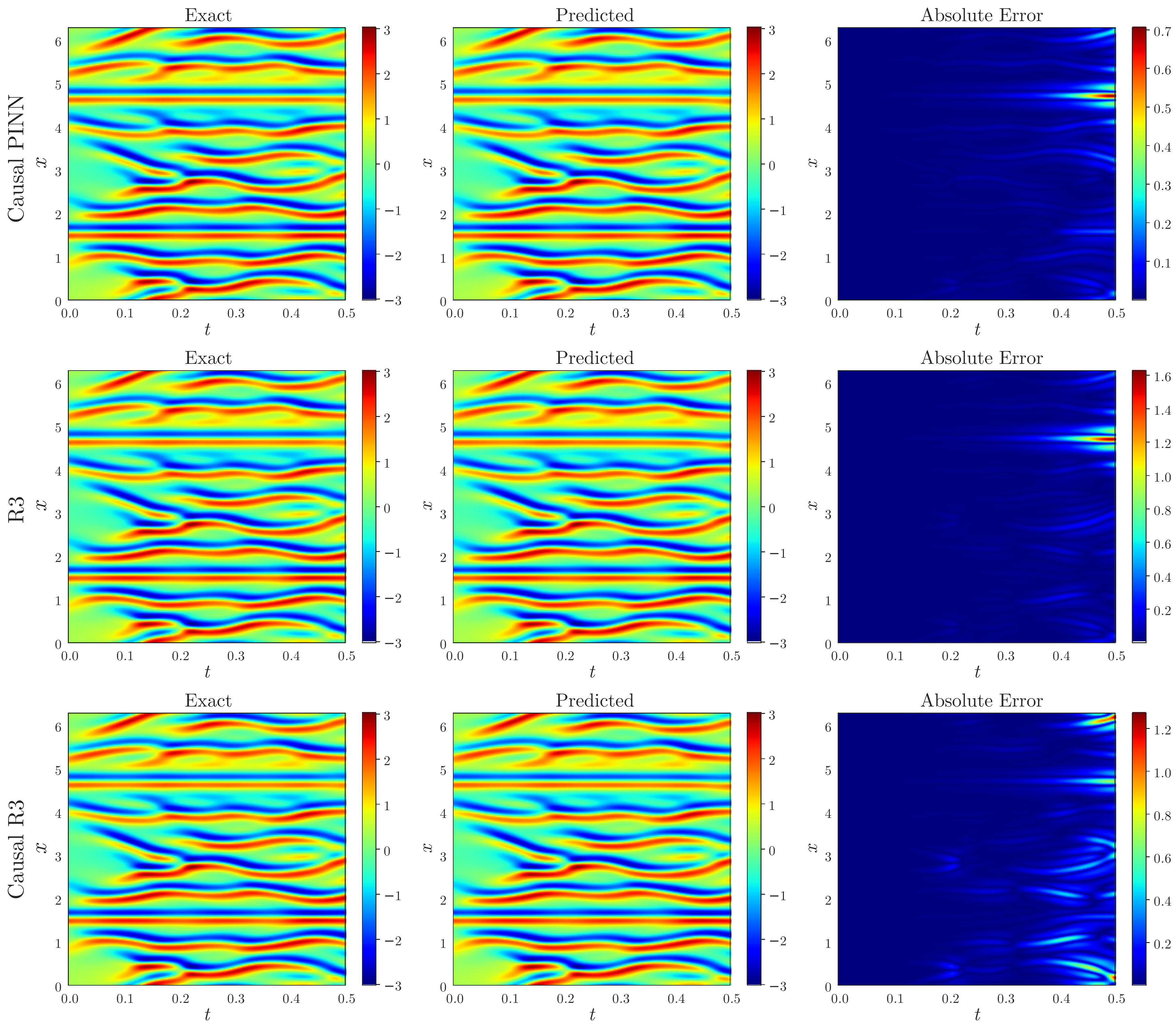}
    \vspace{-1ex}
    \caption{Visualization of the R3, Causal R3 and CausalPINN on KS-Equation (Chaotic Case).}
    \label{fig:ks_equation_chaotic1_viz}
\end{figure}

\begin{figure}[ht]
    \centering
    \includegraphics[width=0.60\textwidth]{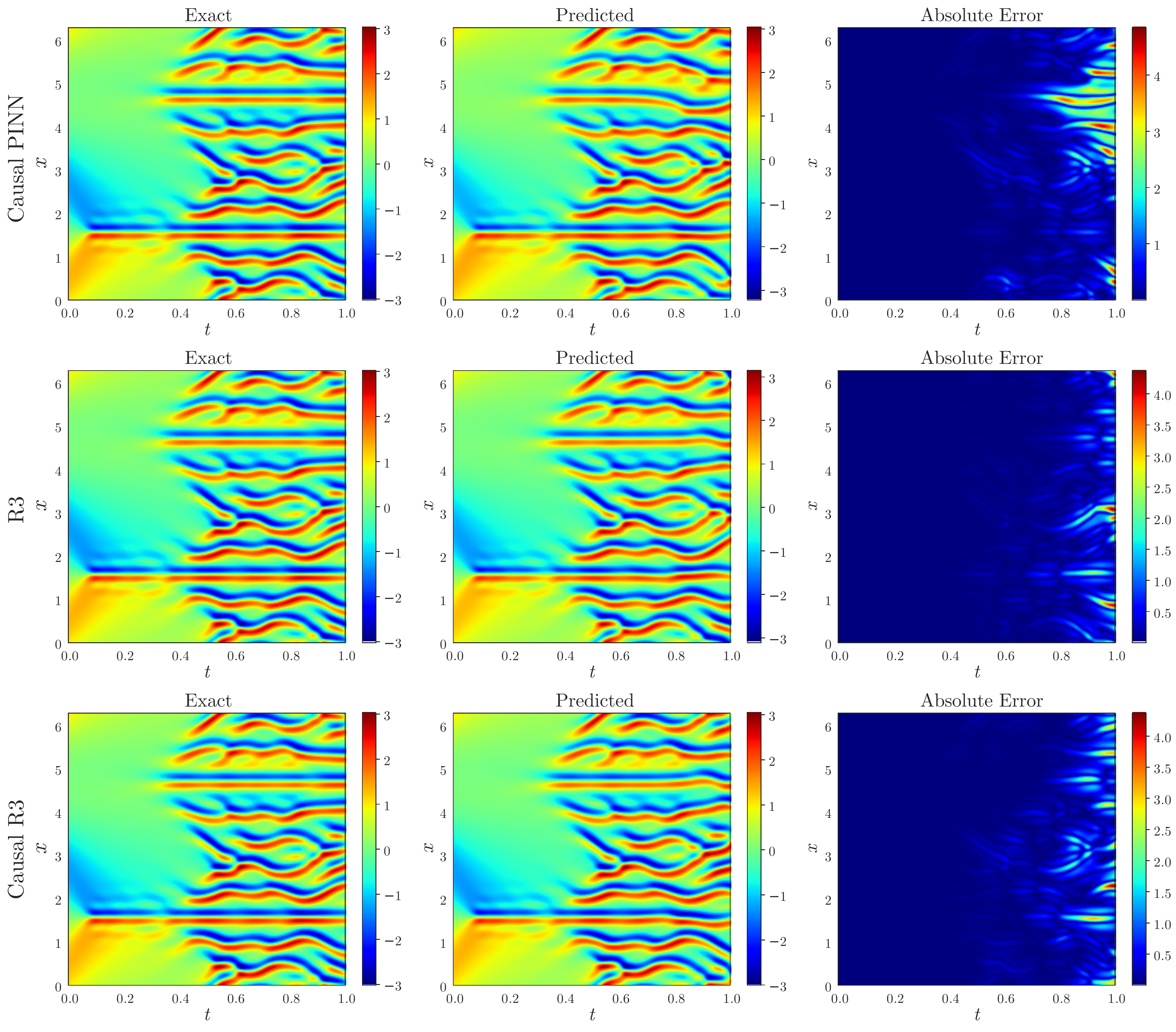}
    \vspace{-1ex}
    \caption{Visualization of the R3, Causal R3 and CausalPINN on KS-Equation (Extended Chaotic Case).}
    \label{fig:ks_equation_chaotic2_viz}
\end{figure}


\subsection{Additional Discussion and Visualization for the Eikonal Equation}
\label{sec:evo_viz}

We chose to solve 2D Eikonal Equations for complex arbitrary surface geometries as they represent particularly hard PDE problems that are susceptible to PINN failure modes. In these problems, we are given the zero contours of the equation on the boundaries (representing the outline of the 2D object), which can take arbitrary shapes. The goal is to correctly propagate the boundary conditions to obtain the unique target solution where the interior is negative and the exterior is positive. Here, any small error in propagation from the boundaries can lead to cascading errors such that a large segment of the predicted field can have opposite signs compared to the ground-truth, even though their PDE residuals are close to 0. Since R3 sampling is explicitly designed to break propagation barriers and thus enable easy transmission of the solution from the boundary to the interior/exterior points, we can see that it shows significantly better performance. On the other hand, PINN (fixed) and PINN (dynamic) struggle to converge to the correct solution especially for complex geometries (e.g., the `gear’) because of the inherent challenge in sampling an adequate number of points from arbitrary shaped object boundaries exhibiting highly imbalanced residuals.

In Figure \ref{fig:eikonal_gear}, we show the evolution of the solutions of comparative methods for the `gear’ case over iterations. We can see that R3 sampling is able to resolve the high residual regions better than the baselines, and thus encounter less incorrect ``sign flips’’ compared to the ground-truth, even in the early iterations of PINN training.

\begin{figure}[ht]
    \centering
    \includegraphics[width=0.75\textwidth]{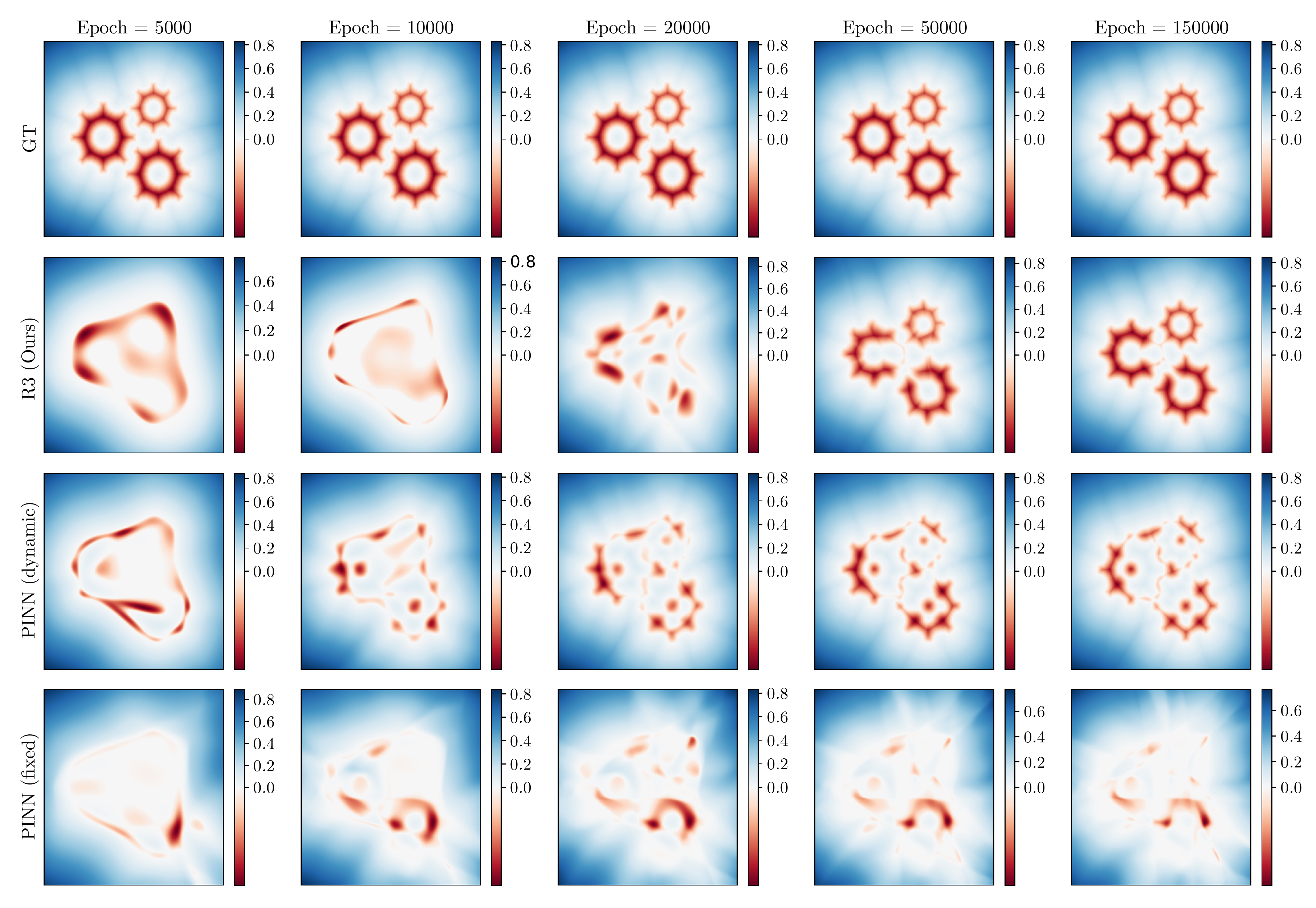}
    \caption{The predicted solutions of Eikonal equation (Figure \ref{fig:eikonal}) at different iterations during training.}
    \label{fig:eikonal_gear}
\end{figure}

\section{Optimization Characteristics of R3 Sampling on Test Optimization Functions}
\label{sec:test_optim_func}

In this section, we demonstrate the ability of our proposed R3 Sampling Algorithm to find global minimas on various test optimization functions. We will also provide other characterizations of our proposed R3 Sampling algorithm.

\subsection{Auckley Function}
\label{sec:auckley}

The two-dimensional form of the Auckley function has multiple local maximas in the near-flat region of the function and one large peak at the center.

\begin{align}
    f(x) = a  \text{exp}\Bigg(-b \sqrt{\frac{1}{d}\sum_{i=1}^{d}x_i^2}\Bigg) + \text{exp}\Bigg(\frac{1}{d}\sum_{i=1}^{d}\cos(cx_i)\Bigg) + a + \text{exp}(1)
\end{align}

\begin{figure}
    \centering
    \includegraphics[width=0.45\textwidth]{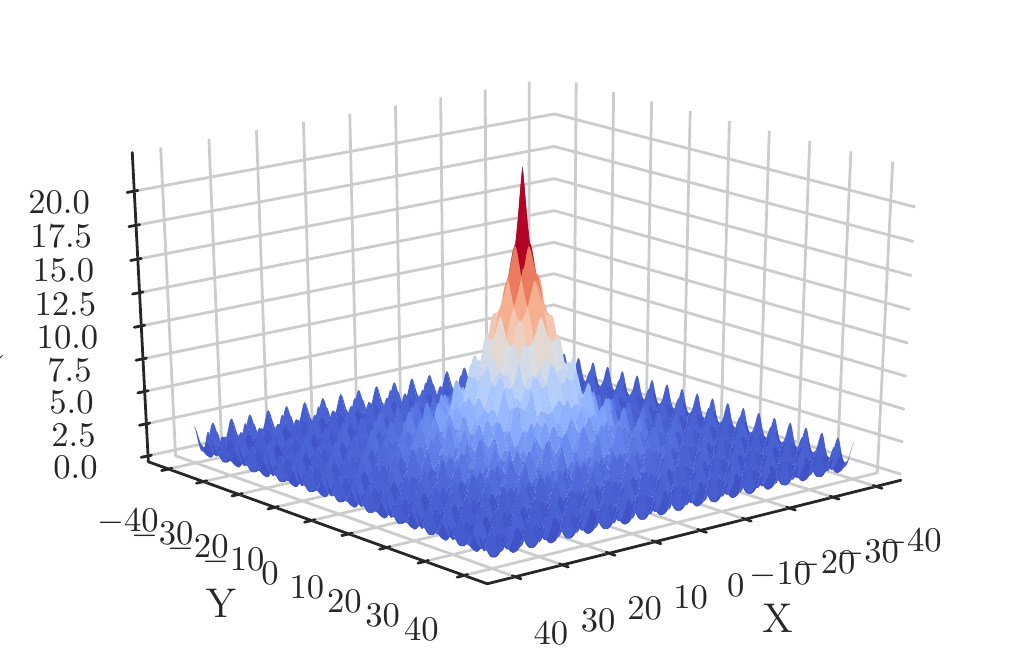}
    \caption{Surface Plot of the 2-D Auckley Function}
    \label{fig:auckley_surface}
\end{figure}

\begin{figure}
    \centering
    \includegraphics[width=1.0\textwidth]{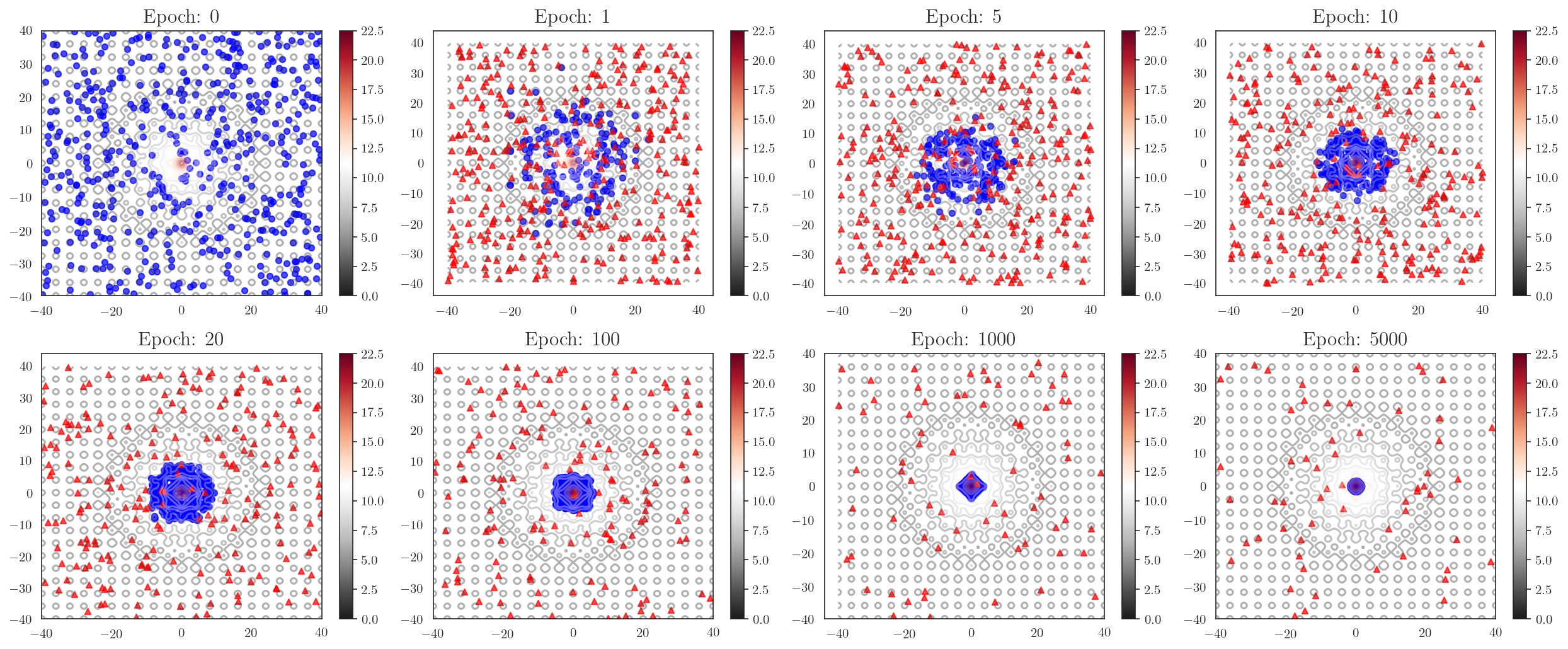}
    \caption{Demonstrating the evolution of the randomly initialized points while optimizing the Auckley function. The red triangles represent the re-sampled population at that epoch, and the blue dots represent the retained population at that epoch. The contour function of the objective function is shown in the background.}
    \label{fig:auckley_evosample}
\end{figure}

\begin{figure}
    \centering
    \includegraphics[width=0.35\textwidth]{fig/Appendix_Figures/Optimization_Figures/Auckley_dynamic_Lp_norm.pdf}
    \caption{Illustrating the dynamic behavior of the R3 Sampling algorithm on Auckley Function using the $L^2$ Physics-informed Loss computed on the retained and re-sampled populations. The horizontal lines represent the $L^p$ Physics-informed Loss on a dense set of uniformly sampled collocation points (where $p=2,4,6,\infty$).}
    \label{fig:auckley_dynamic_lp}
\end{figure}

\begin{figure}
    \centering
    \includegraphics[width=0.35\textwidth]{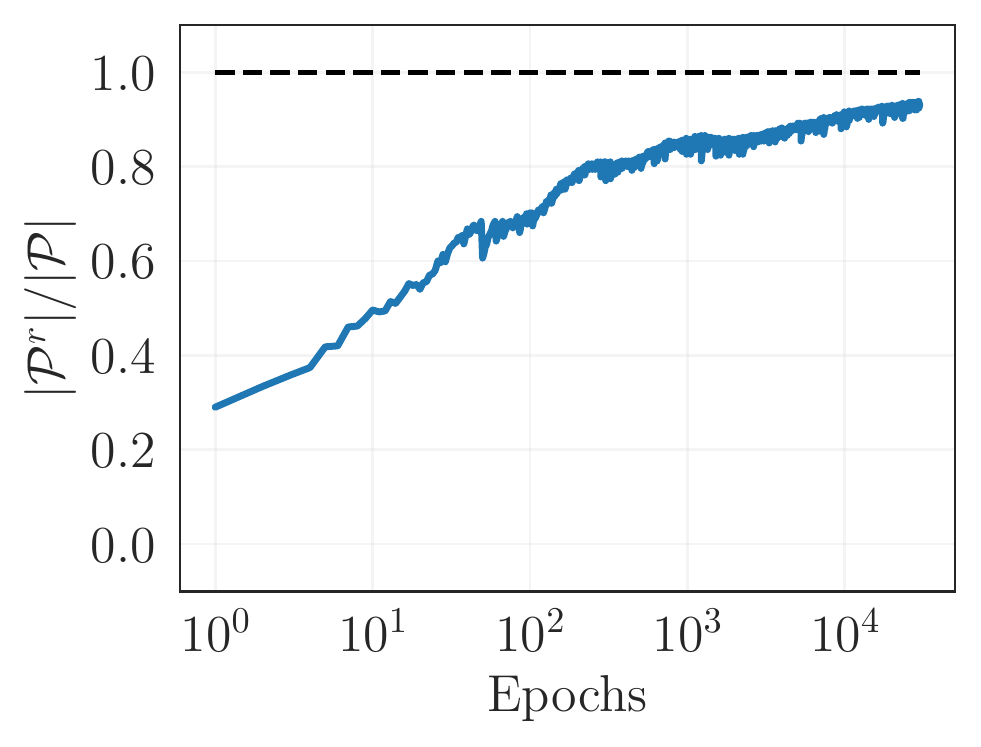}
    \caption{Demonstrating the dynamic evolution of the retained population size over epochs on the Auckley Function.}
    \label{fig:auckley_retained_size}
\end{figure}

\subsection{Bohachevsky Function}
The two-dimensional form of the Bohachevsky function is a bowl shaped function having one global maxima.

\begin{align}
    f(x, y) = - x^2 - 2y^2 + 0.3\cos(3 \pi x) + 0.4cos(4 \pi y) - 0.7
\end{align}

\begin{figure}
    \centering
    \includegraphics[width=0.45\textwidth]{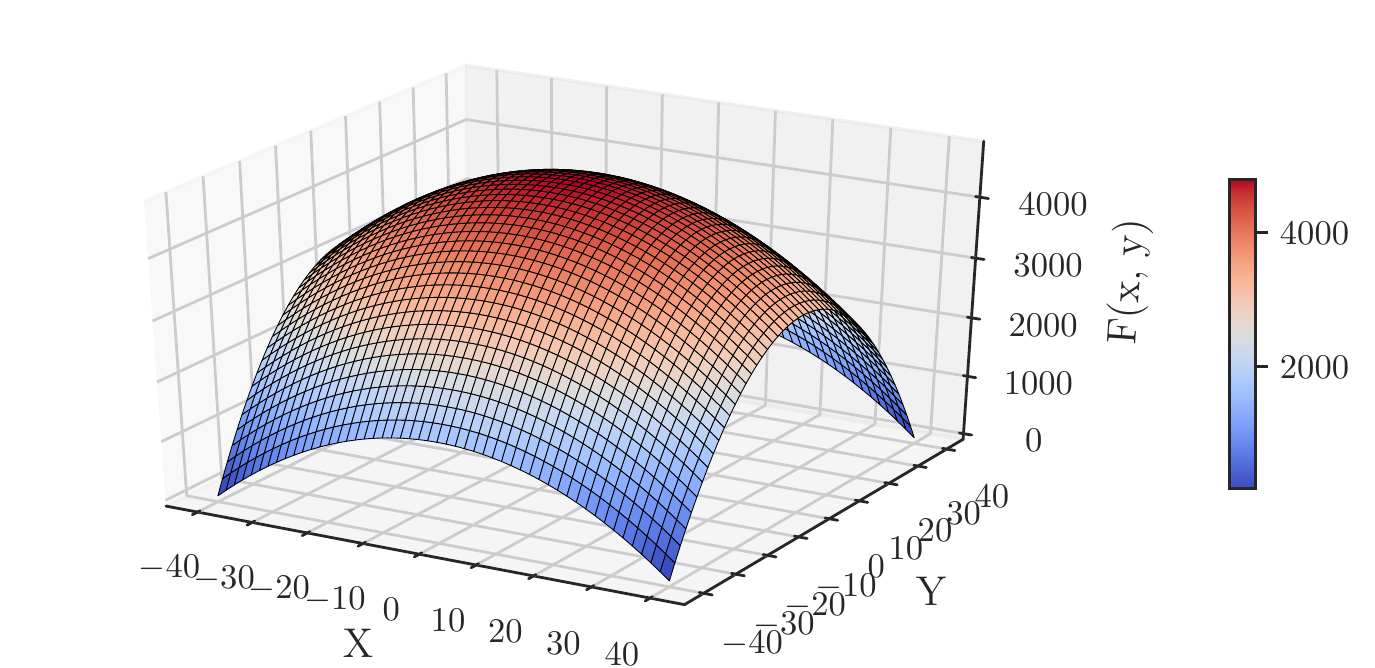}
    \caption{Surface Plot of the 2-D Bohachevsky Function}
    \label{fig:Bohachevsky_surface}
\end{figure}

\begin{figure}
    \centering
    \includegraphics[width=1.0\textwidth]{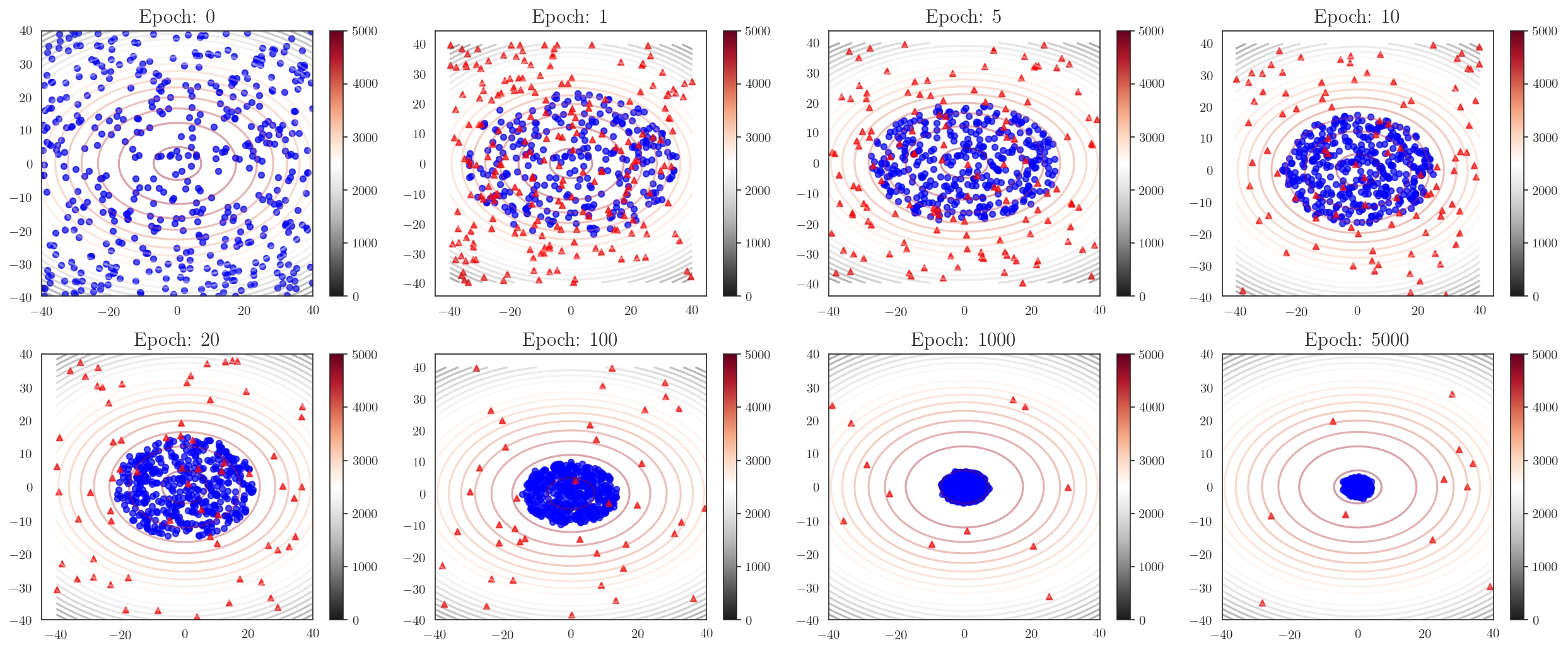}
    \caption{Demonstrating the evolution of the randomly initialized points while optimizing the Bohachevsky function. The red triangles represent the re-sampled population at that epoch, and the blue dots represent the retained population at that epoch. The contour function of the objective function is shown in the background.}
    \label{fig:Bohachevsky_evosample}
\end{figure}

\begin{figure}
    \centering
    \includegraphics[width=0.35\textwidth]{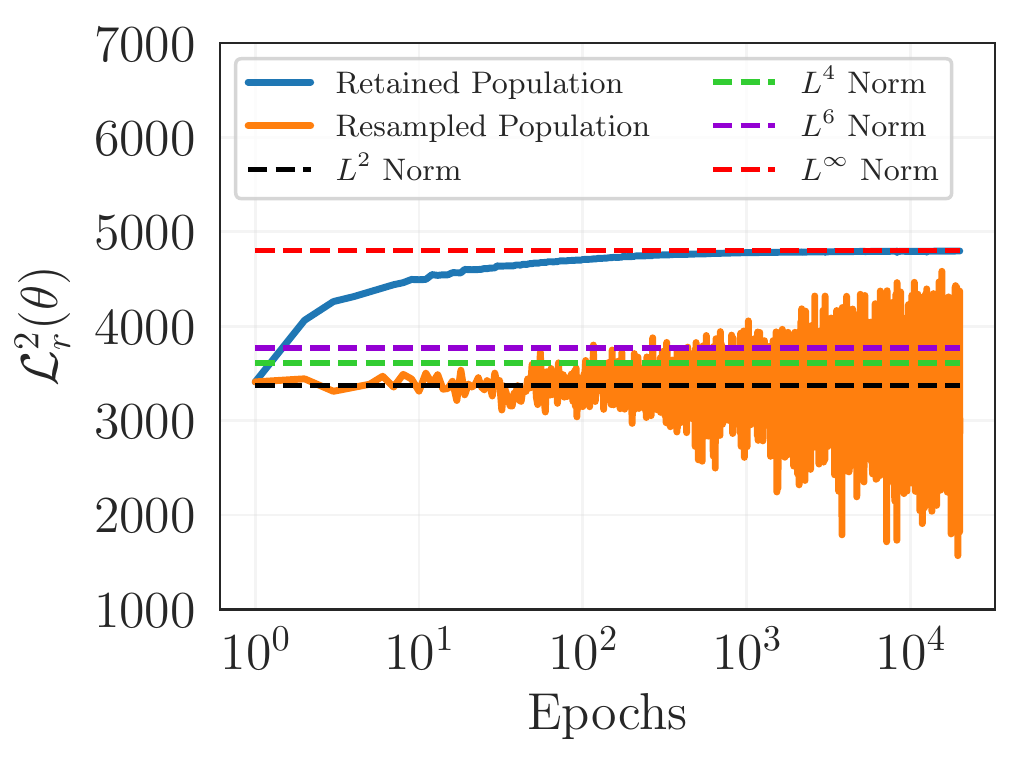}
    \caption{Illustrating the dynamic behavior of the R3 Sampling algorithm on Bohachevsky Function using the $L^2$ Physics-informed Loss computed on the retained and re-sampled populations. The horizontal lines represent the $L^p$ Physics-informed Loss on a dense set of uniformly sampled collocation points (where $p=2,4,6,\infty$).}
    \label{fig:Bohachevsky_dynamic_lp}
\end{figure}

\begin{figure}
    \centering
    \includegraphics[width=0.35\textwidth]{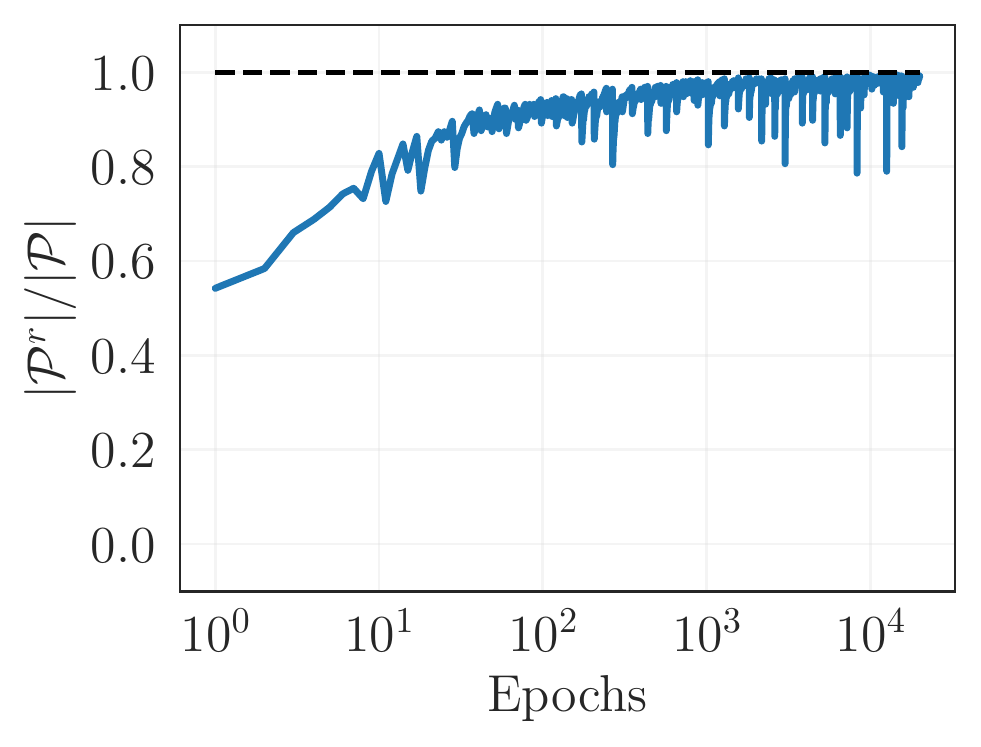}
    \caption{Demonstrating the dynamic evolution of the retained population size over epochs on the Bohachevsky Function.}
    \label{fig:Bohachevsky_retained_size}
\end{figure}

\subsection{Drop-Wave Function}
The two-dimensional form of the Drop-Wave function which is multimodal and highly complex.

\begin{align}
    f(x, y) = \frac{1+\cos(12\sqrt{x^2+y^2})}{0.5(x^2+y^2)+2}
\end{align}

\begin{figure}
    \centering
    \includegraphics[width=0.45\textwidth]{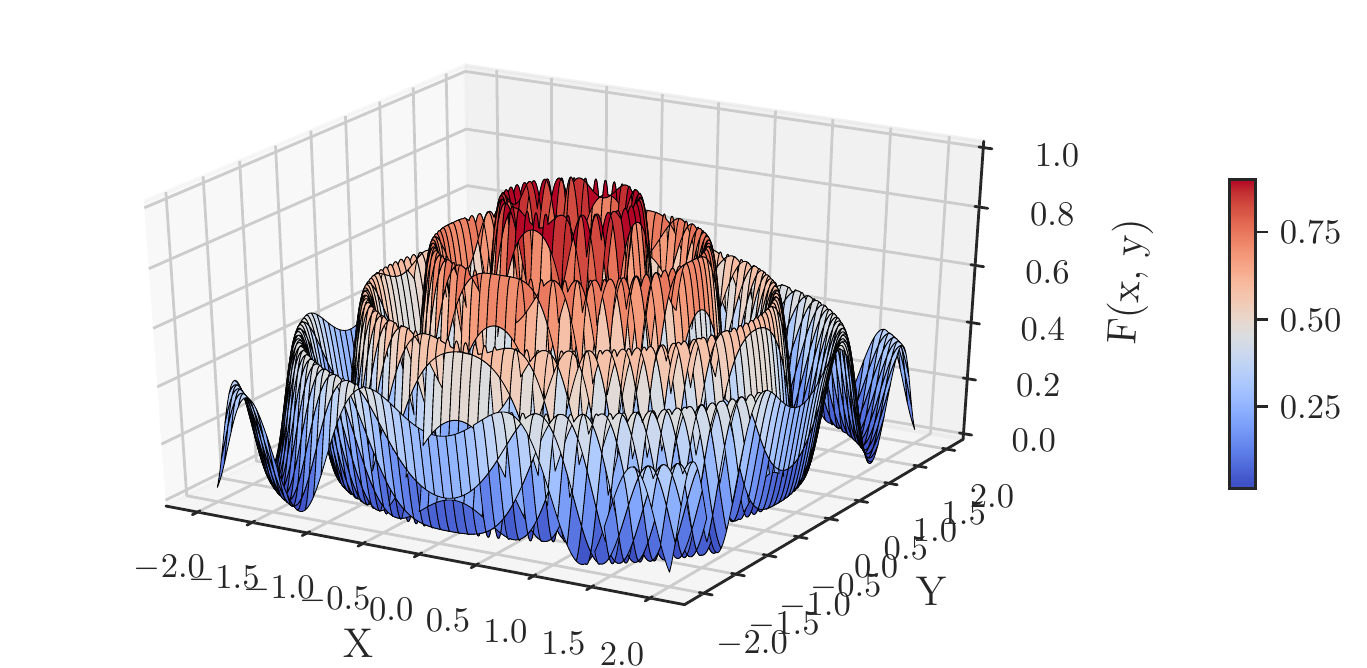}
    \caption{Surface Plot of the 2-D Drop-Wave Function}
    \label{fig:drop_wave_surface}
\end{figure}

\begin{figure}
    \centering
    \includegraphics[width=1.0\textwidth]{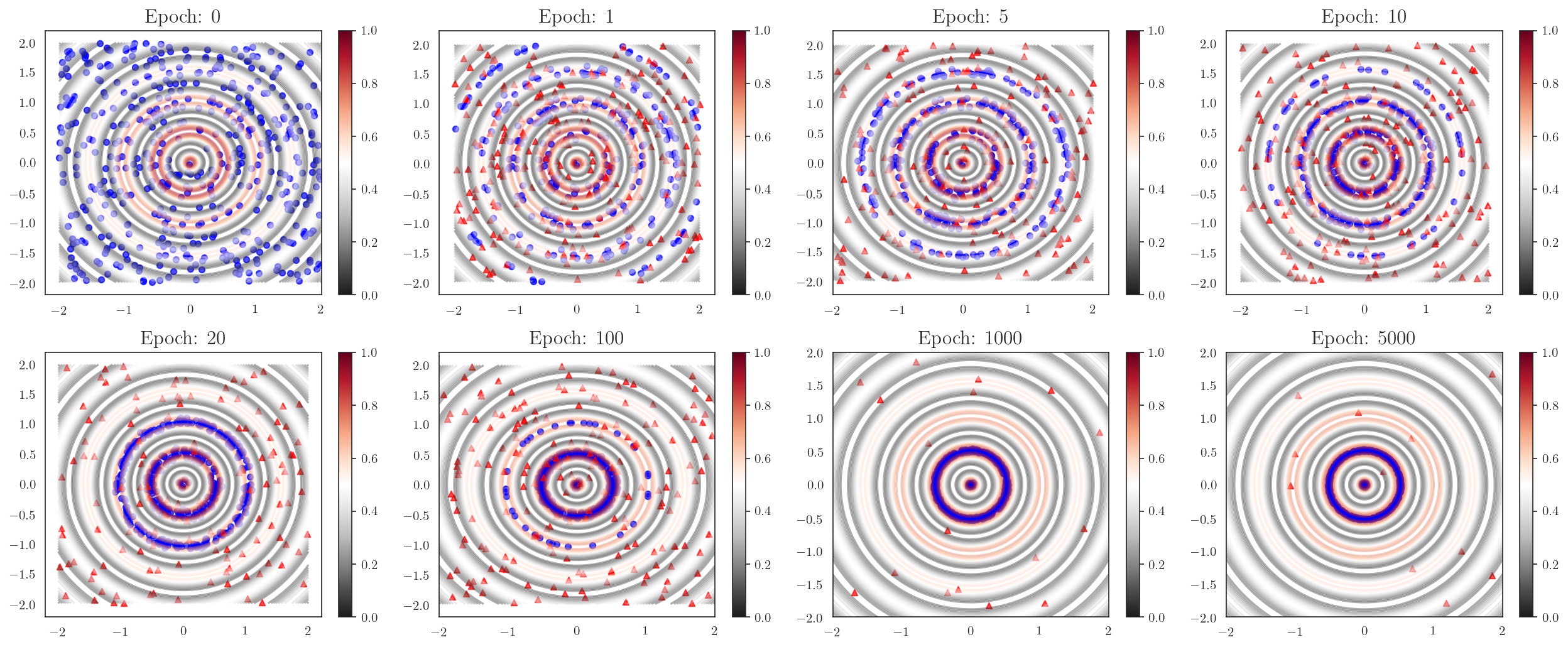}
    \caption{Demonstrating the evolution of the randomly initialized points while optimizing the Drop-Wave function. The red triangles represent the re-sampled population at that epoch, and the blue dots represent the retained population at that epoch. The contour function of the objective function is shown in the background.}
    \label{fig:drop_wave_evosample}
\end{figure}

\begin{figure}
    \centering
    \includegraphics[width=0.35\textwidth]{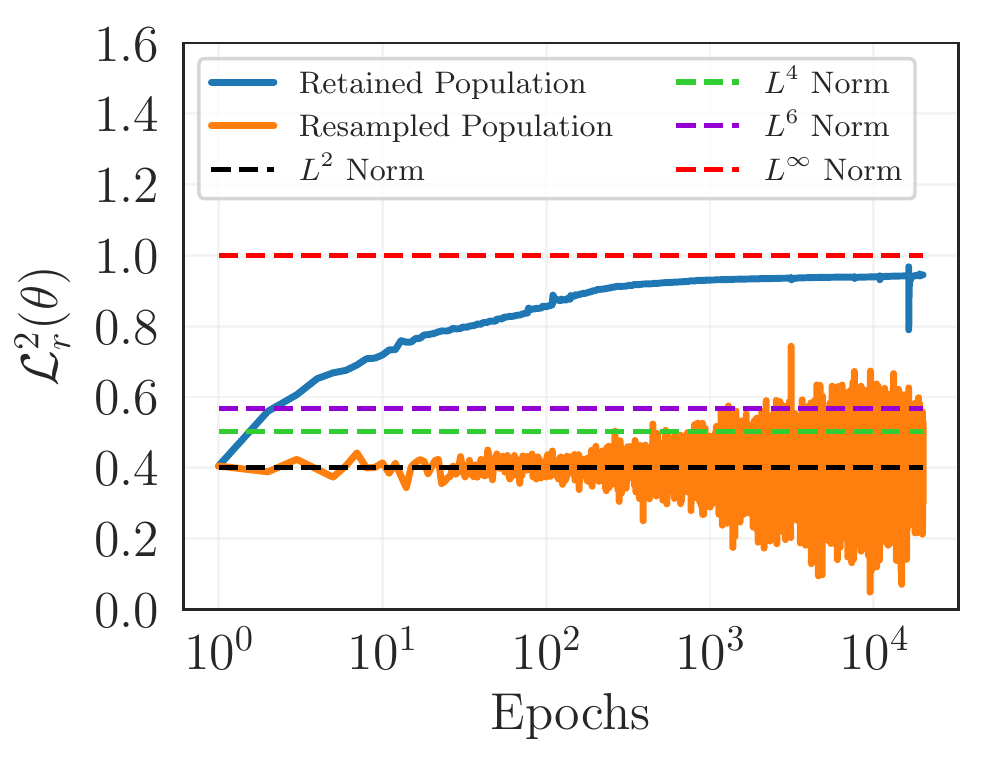}
    \caption{Illustrating the dynamic behavior of the R3 Sampling algorithm on Drop-Wave Function using the $L^2$ Physics-informed Loss computed on the retained and re-sampled populations. The horizontal lines represent the $L^p$ Physics-informed Loss on a dense set of uniformly sampled collocation points (where $p=2,4,6,\infty$).}
    \label{fig:drop_wave_dynamic_lp}
\end{figure}

\begin{figure}
    \centering
    \includegraphics[width=0.35\textwidth]{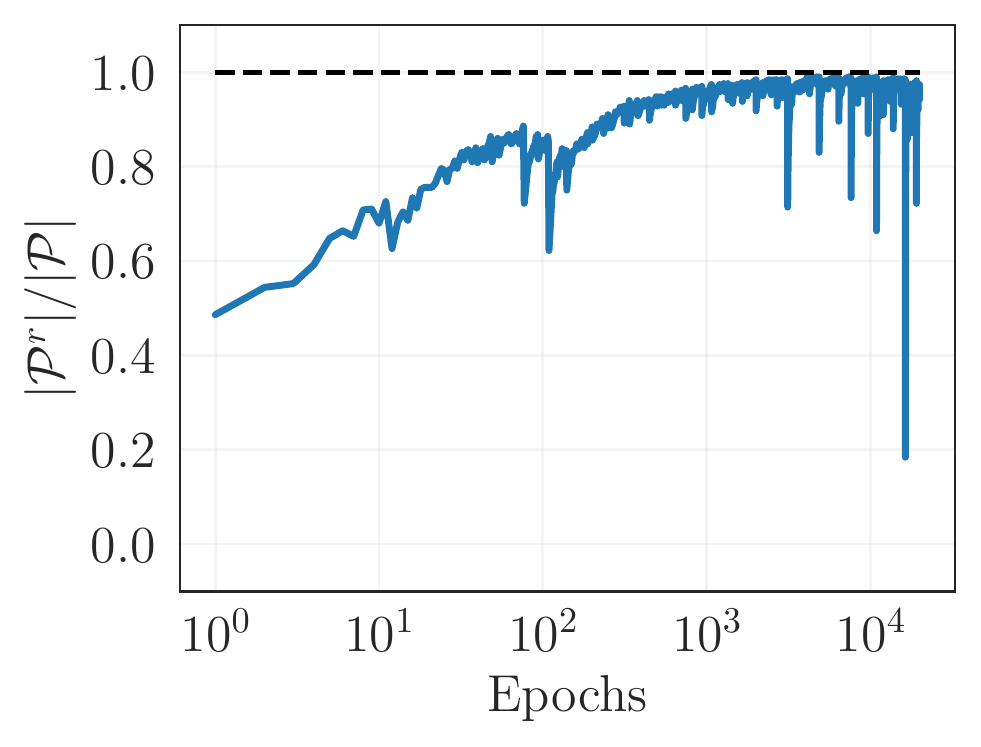}
    \caption{Demonstrating the dynamic evolution of the retained population size over epochs on the Drop-Wave Function.}
    \label{fig:drop_wave_retained_size}
\end{figure}

\subsection{Egg-Holder Function}
The two-dimensional form of the Egg-Holder function is highly complex function that is difficult to optimize because of the presence of multiple local maximas.

\begin{align}
    f(x, y) = (y+47)\sin(\sqrt{|y+\frac{x}{2}+47|}) + x\sin(\sqrt{|x-(y+47)|})
\end{align}

\begin{figure}
    \centering
    \includegraphics[width=0.45\textwidth]{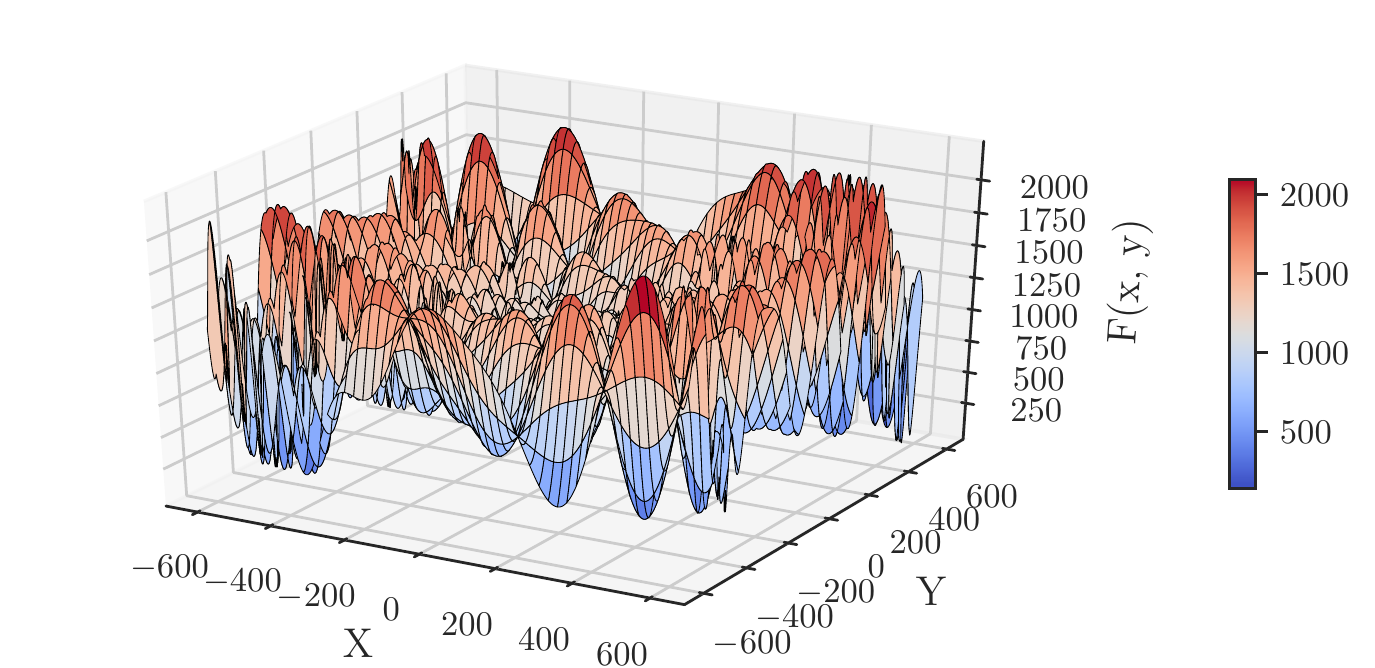}
    \caption{Surface Plot of the 2-D Egg-Holder Function}
    \label{fig:egg_holder_surface}
\end{figure}

\begin{figure}
    \centering
    \includegraphics[width=1.0\textwidth]{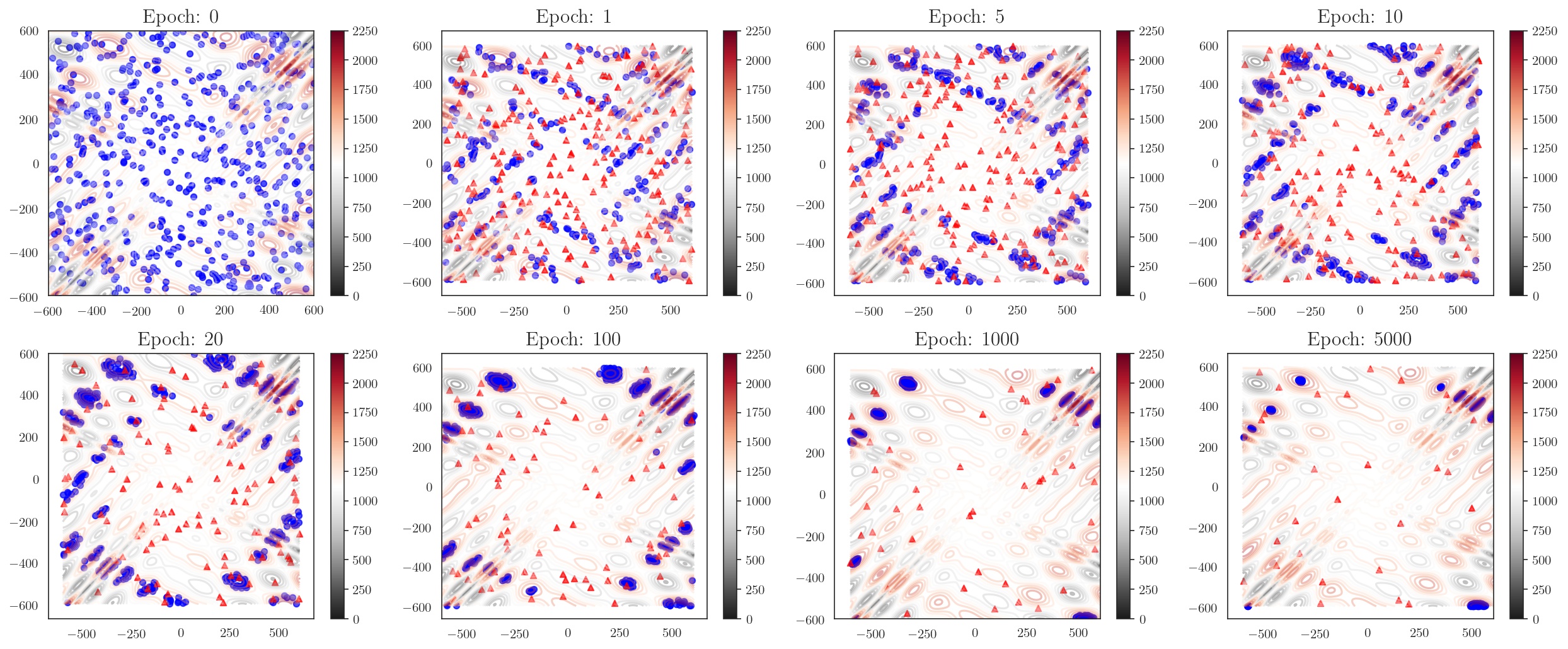}
    \caption{Demonstrating the evolution of the randomly initialized points while optimizing the Egg-Holder function. The red triangles represent the re-sampled population at that epoch, and the blue dots represent the retained population at that epoch. The contour function of the objective function is shown in the background.}
    \label{fig:egg_holder_evosample}
\end{figure}

\begin{figure}
    \centering
    \includegraphics[width=0.35\textwidth]{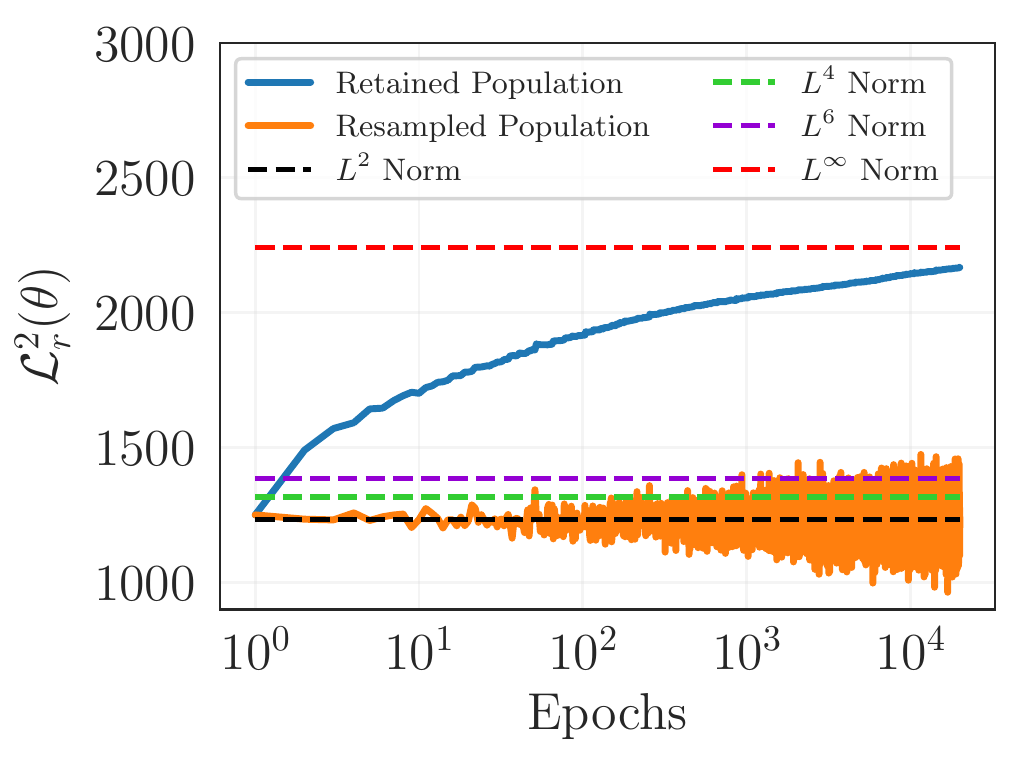}
    \caption{Illustrating the dynamic behavior of the R3 Sampling algorithm on Egg-Holder Function using the $L^2$ Physics-informed Loss computed on the retained and re-sampled populations. The horizontal lines represent the $L^p$ Physics-informed Loss on a dense set of uniformly sampled collocation points (where $p=2,4,6,\infty$).}
    \label{fig:egg_holder_dynamic_lp}
\end{figure}

\begin{figure}
    \centering
    \includegraphics[width=0.35\textwidth]{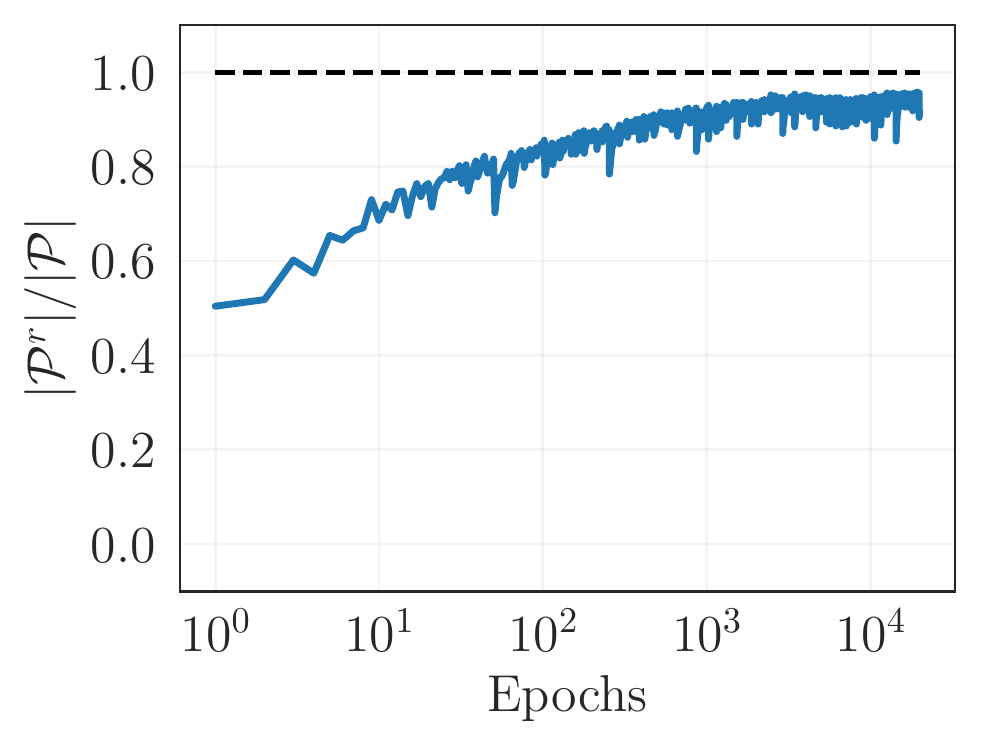}
    \caption{Demonstrating the dynamic evolution of the retained population size over epochs on the Egg-Holder Function.}
    \label{fig:egg_holder_retained_size}
\end{figure}

\subsection{Holder-Table Function}
The two-dimensional form of the Holder-Table function has many local maximas, but has 4 global maximas at the four corners.
\begin{align}
    f(x, y) = |\sin(x)\cos(y)\exp\Bigg(|1-\frac{\sqrt{x^2+y^2}}{\pi}|\Bigg)|
\end{align}

\begin{figure}
    \centering
    \includegraphics[width=0.45\textwidth]{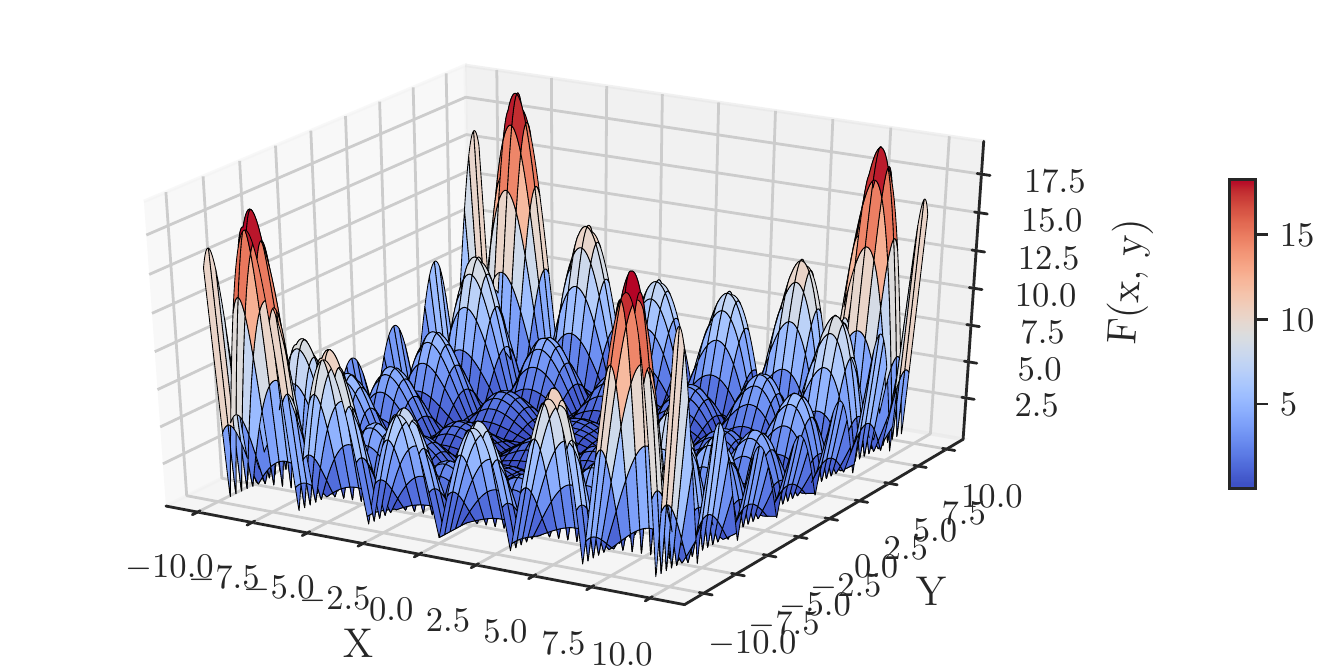}
    \caption{Surface Plot of the 2-D Holder-Table Function}
    \label{fig:holder_table_surface}
\end{figure}

\begin{figure}
    \centering
    \includegraphics[width=1.0\textwidth]{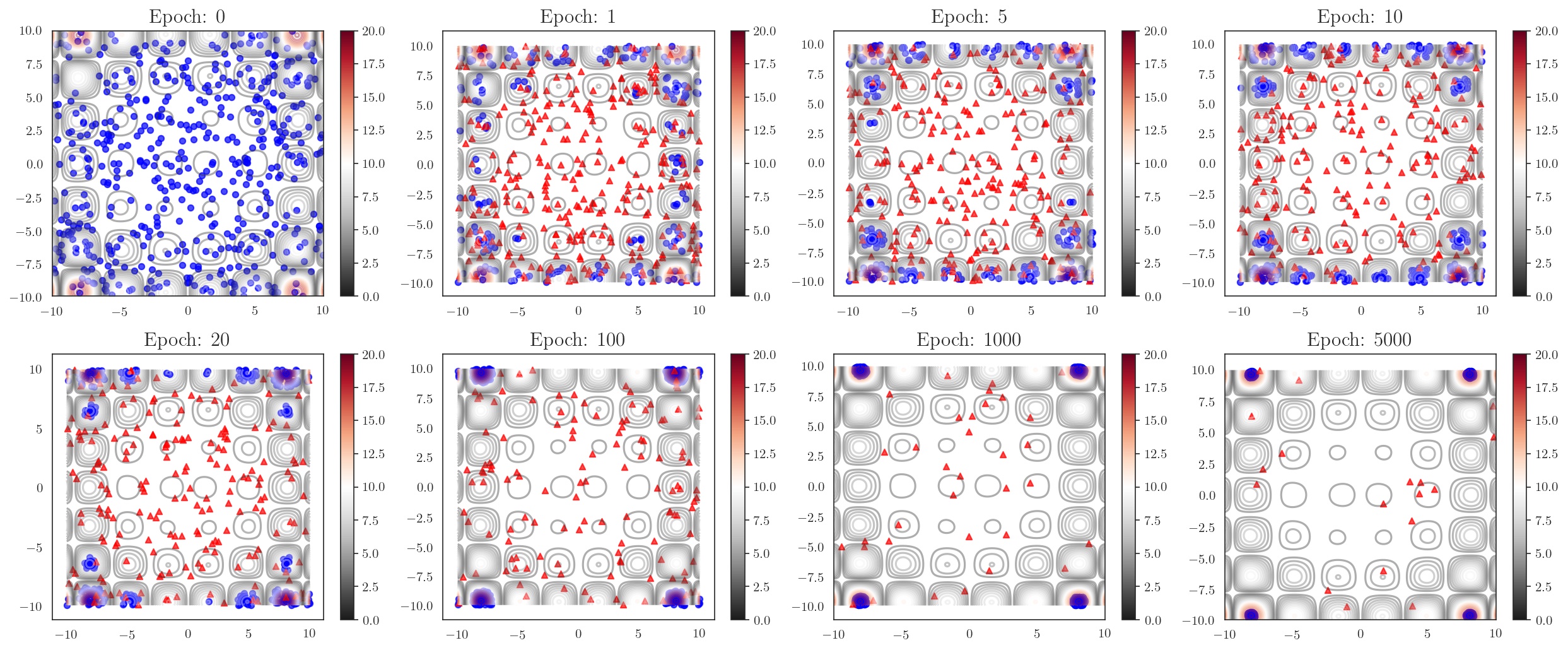}
    \caption{Demonstrating the evolution of the randomly initialized points while optimizing the Holder-Table function. The red triangles represent the re-sampled population at that epoch, and the blue dots represent the retained population at that epoch. The contour function of the objective function is shown in the background.}
    \label{fig:holder_table_evosample}
\end{figure}

\begin{figure}
    \centering
    \includegraphics[width=0.35\textwidth]{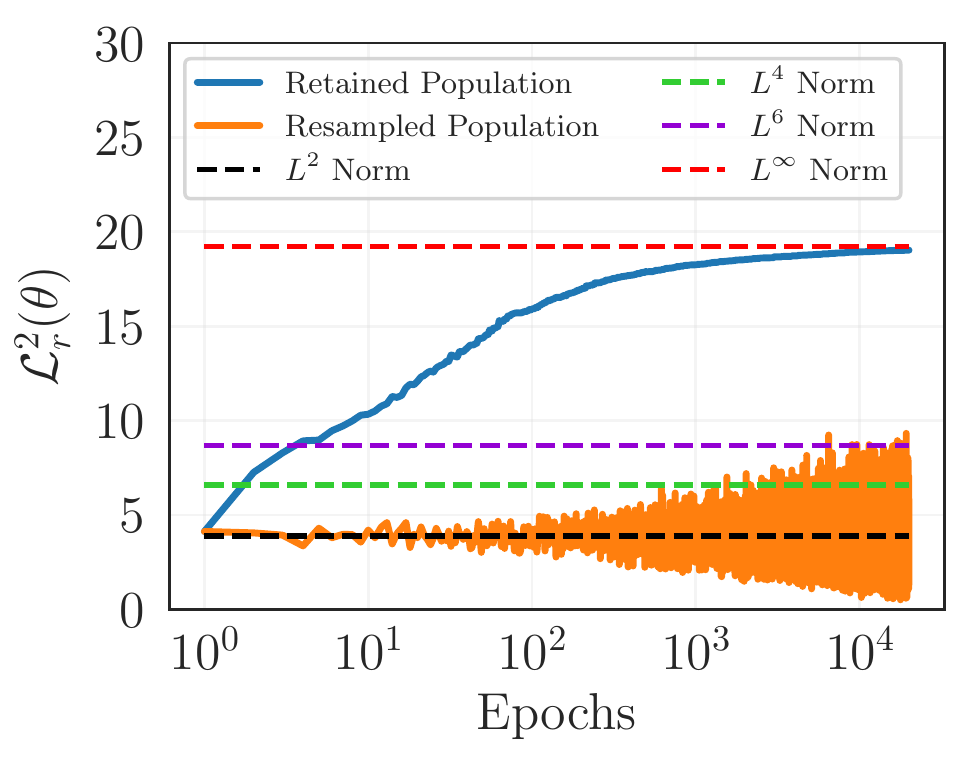}
    \caption{Illustrating the dynamic behavior of the R3 Sampling algorithm on Holder-Table Function using the $L^2$ Physics-informed Loss computed on the retained and re-sampled populations. The horizontal lines represent the $L^p$ Physics-informed Loss on a dense set of uniformly sampled collocation points (where $p=2,4,6,\infty$).}
    \label{fig:holder_table_dynamic_lp}
\end{figure}

\begin{figure}
    \centering
    \includegraphics[width=0.35\textwidth]{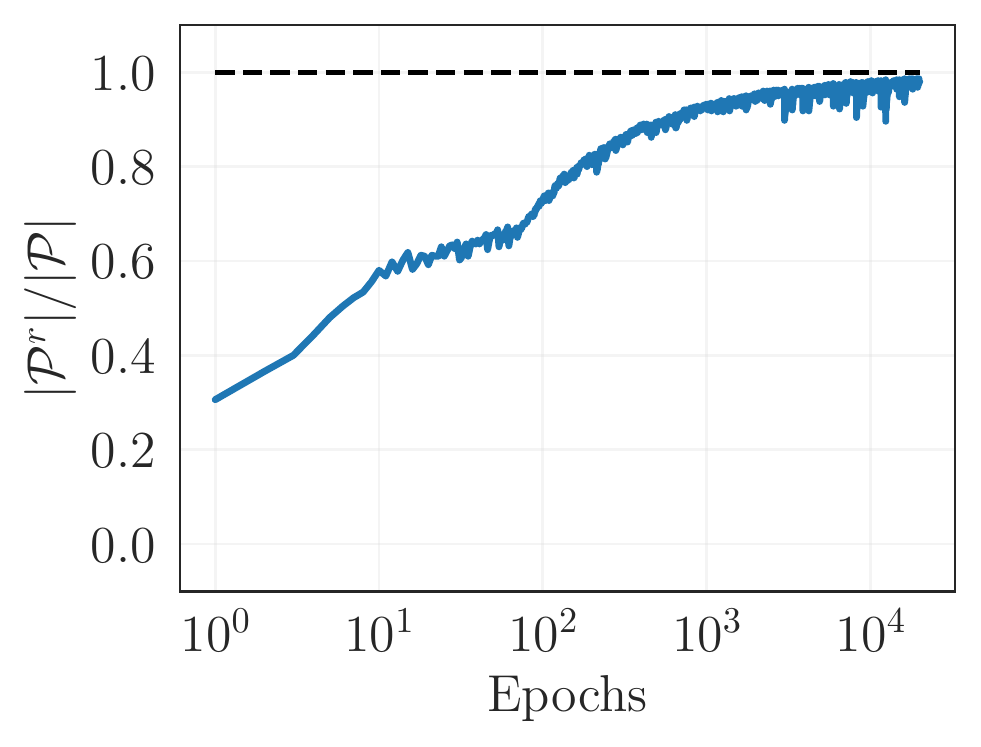}
    \caption{Demonstrating the dynamic evolution of the retained population size over epochs on the Holder-Table Function.}
    \label{fig:holder_table_retained_size}
\end{figure}

\subsection{Bukin Function}
The two-dimensional form of the Bukin function has many local maximas, all of which lie on a ridge.

\begin{align}
    f(x, y) = -100\sqrt{|y-0.01x^2|}-0.01|x+10|
\end{align}

\begin{figure}
    \centering
    \includegraphics[width=0.45\textwidth]{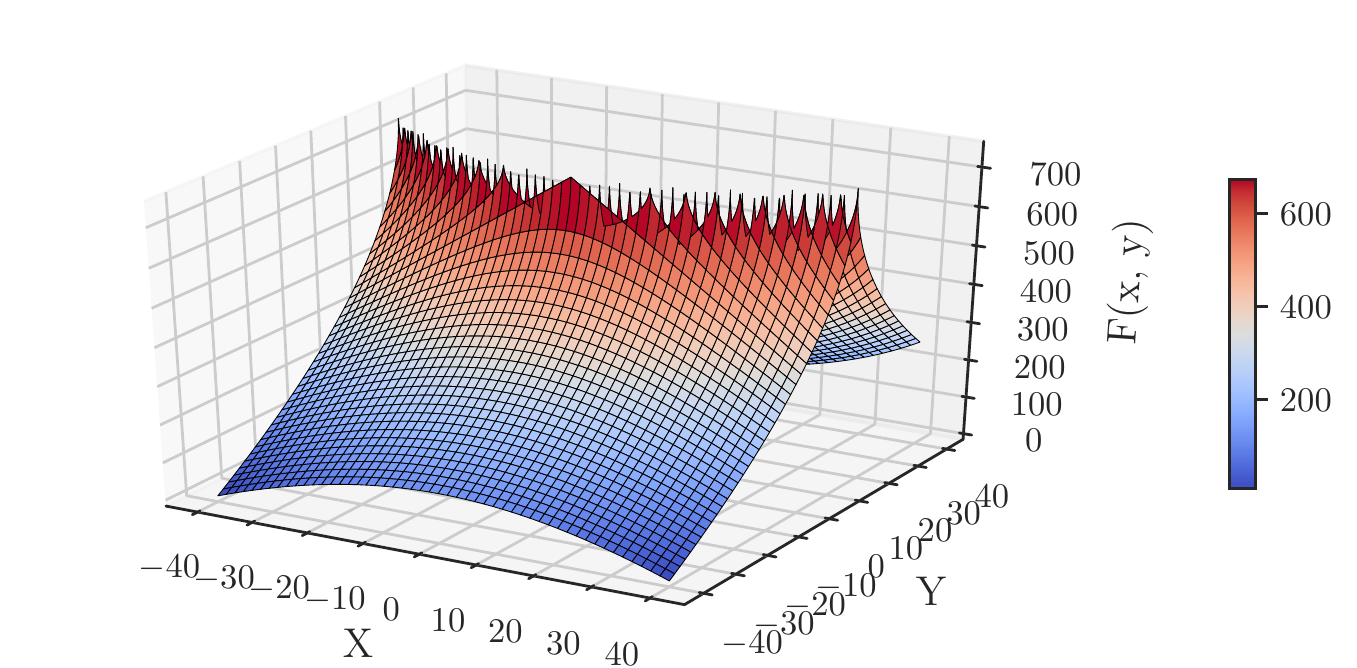}
    \caption{Surface Plot of the 2-D Bukin Function}
    \label{fig:bukin_surface}
\end{figure}

\begin{figure}
    \centering
    \includegraphics[width=1.0\textwidth]{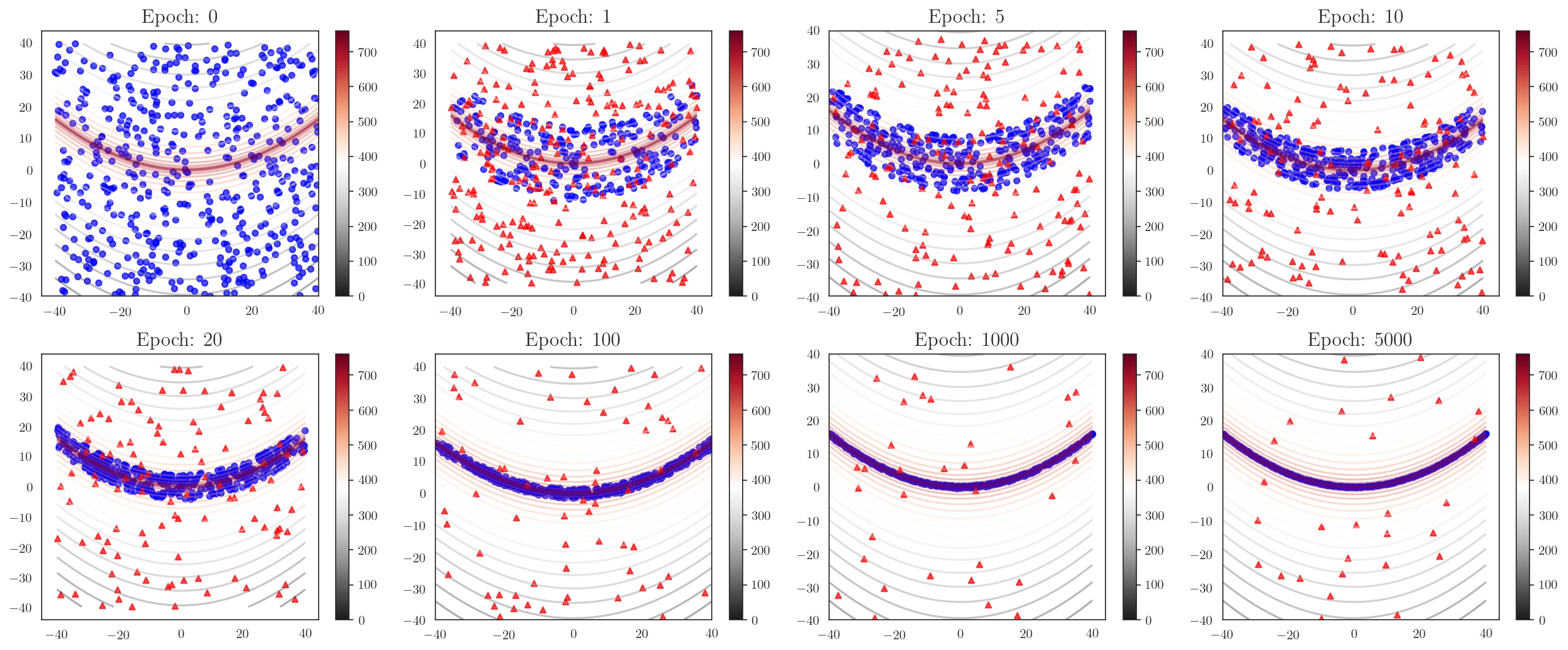}
    \caption{Demonstrating the evolution of the randomly initialized points while optimizing the Bukin function. The red triangles represent the re-sampled population at that epoch, and the blue dots represent the retained population at that epoch. The contour function of the objective function is shown in the background.}
    \label{fig:bukin_evosample}
\end{figure}

\begin{figure}
    \centering
    \includegraphics[width=0.35\textwidth]{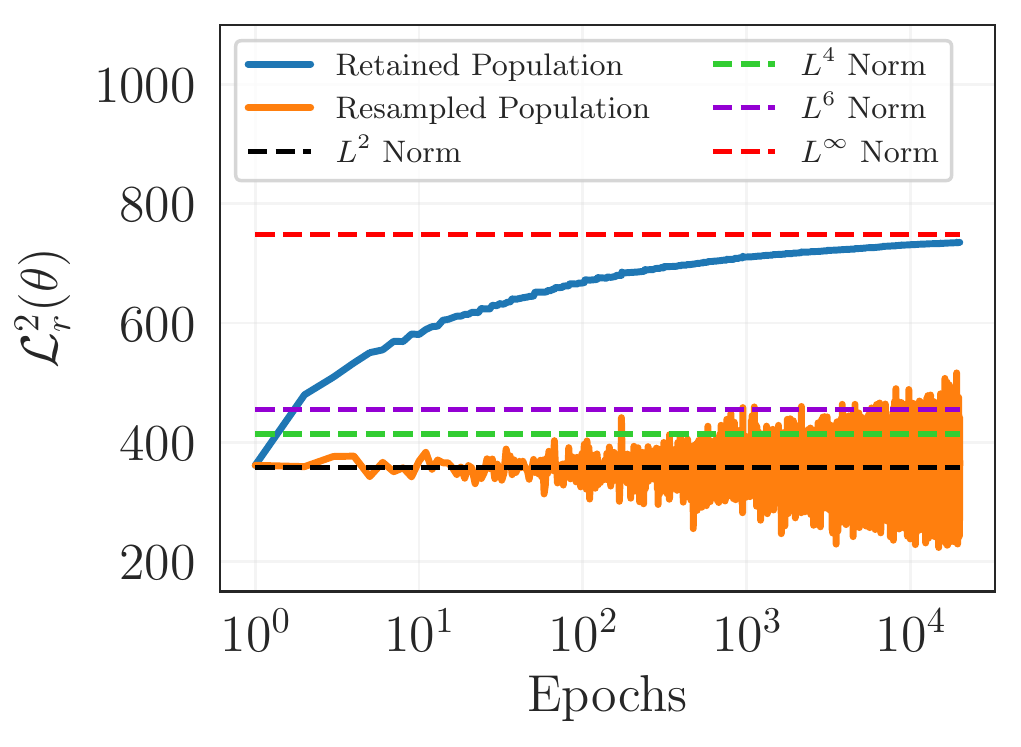}
    \caption{Illustrating the dynamic behavior of the R3 Sampling algorithm on Bukin Function using the $L^2$ Physics-informed Loss computed on the retained and re-sampled populations. The horizontal lines represent the $L^p$ Physics-informed Loss on a dense set of uniformly sampled collocation points (where $p=2,4,6,\infty$).}
    \label{fig:bukin_dynamic_lp}
\end{figure}

\begin{figure}
    \centering
    \includegraphics[width=0.35\textwidth]{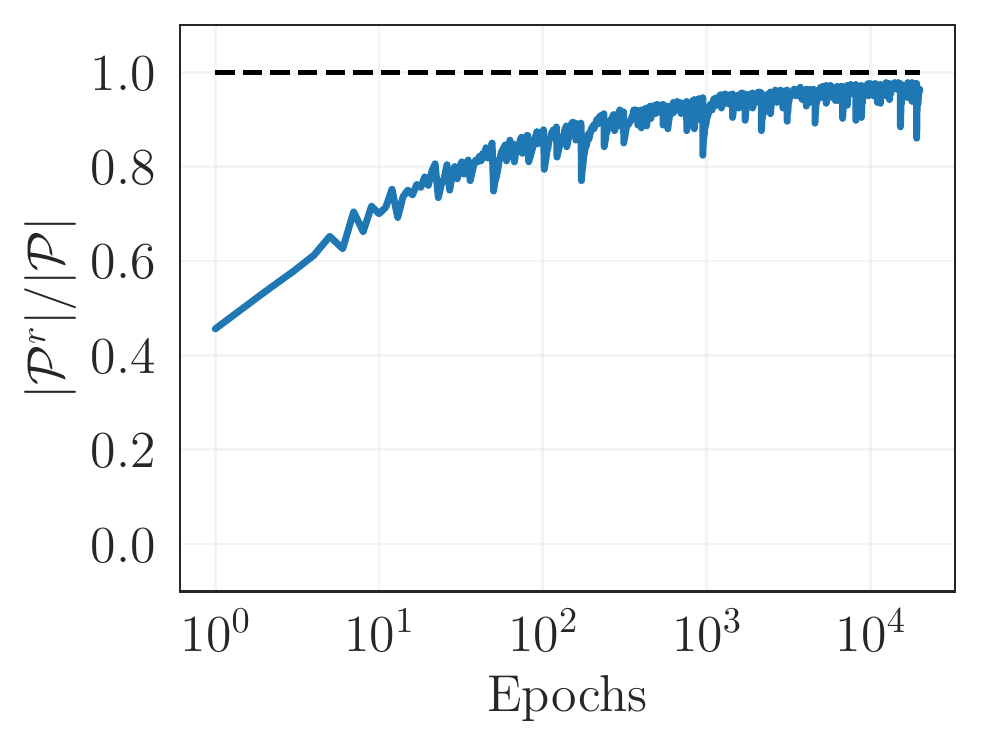}
    \caption{Demonstrating the dynamic evolution of the retained population size over epochs on the Bukin Function.}
    \label{fig:bukin_retained_size}
\end{figure}

\subsection{Michalewicz Function}
\label{sec:michael}
The two-dimensional form of the Michalewicz function has multiple ridges and valleys which are very steep.
\begin{align}
    f(x) = \sum_{i=1}^d \sin(x_i)\sin^{2m}\Bigg(\frac{ix_i^2}{\pi}\Bigg)
\end{align}

\begin{figure}
    \centering
    \includegraphics[width=0.45\textwidth]{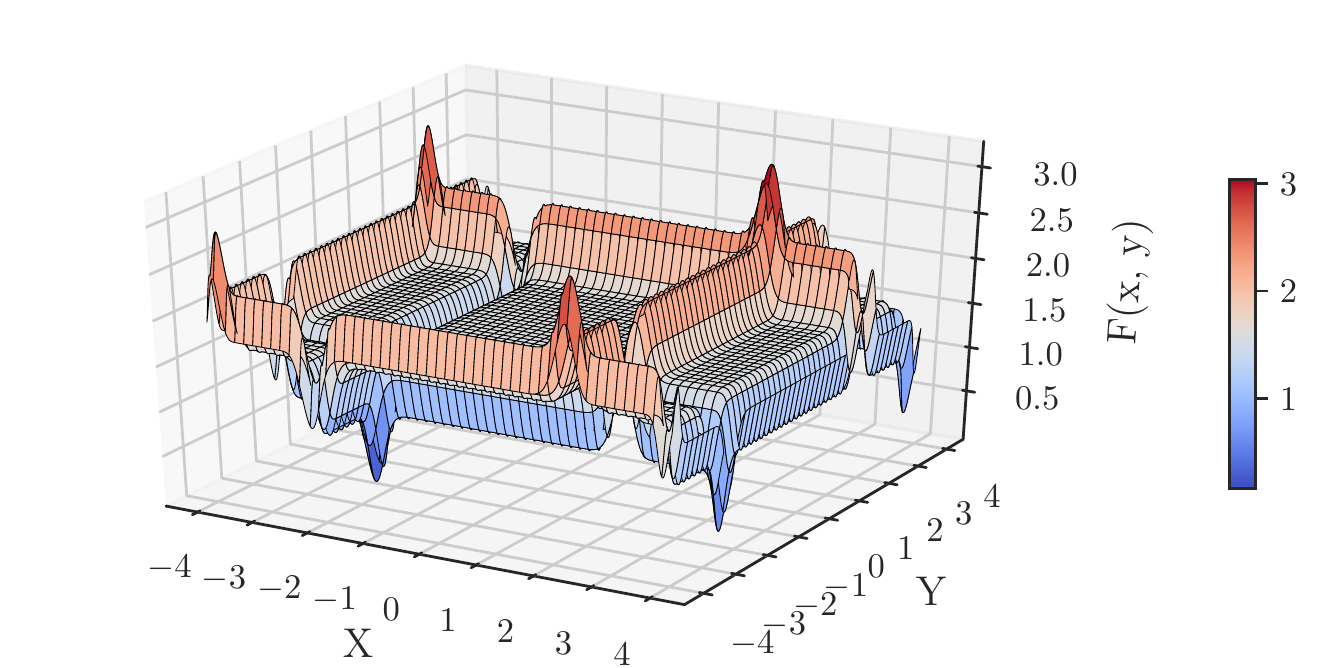}
    \caption{Surface Plot of the 2-D Michalewicz Function}
    \label{fig:michalewicz_surface}
\end{figure}

\begin{figure}
    \centering
    \includegraphics[width=1.0\textwidth]{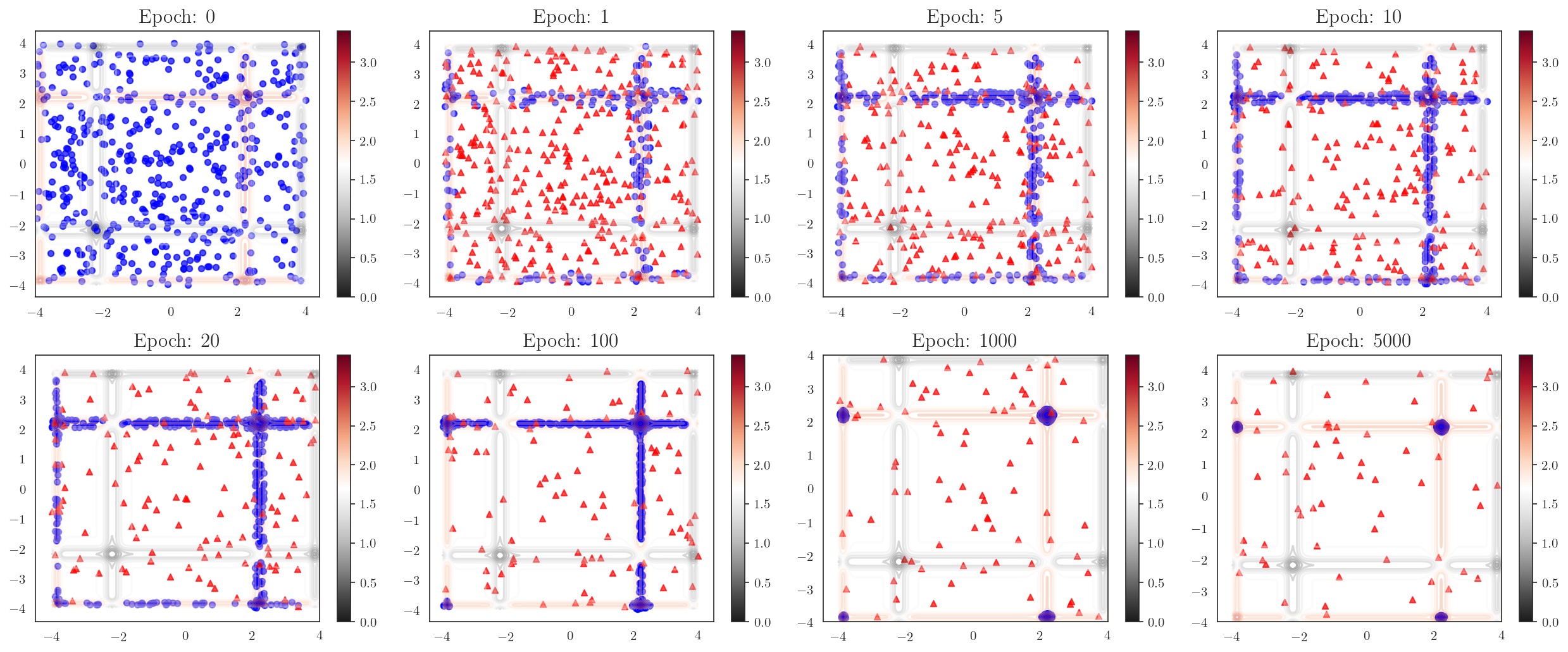}
    \caption{Demonstrating the evolution of the randomly initialized points while optimizing the Michalewicz function. The red triangles represent the re-sampled population at that epoch, and the blue dots represent the retained population at that epoch. The contour function of the objective function is shown in the background.}
    \label{fig:michalewicz_evosample}
\end{figure}

\begin{figure}
    \centering
    \includegraphics[width=0.35\textwidth]{fig/Appendix_Figures/Optimization_Figures/Michaelewicz_dynamic_Lp_norm.pdf}
    \caption{Illustrating the dynamic behavior of the R3 Sampling algorithm on Michalewicz Function using the $L^2$ Physics-informed Loss computed on the retained and re-sampled populations. The horizontal lines represent the $L^p$ Physics-informed Loss on a dense set of uniformly sampled collocation points (where $p=2,4,6,\infty$).}
    \label{fig:michalewicz_dynamic_lp}
\end{figure}

\begin{figure}
    \centering
    \includegraphics[width=0.35\textwidth]{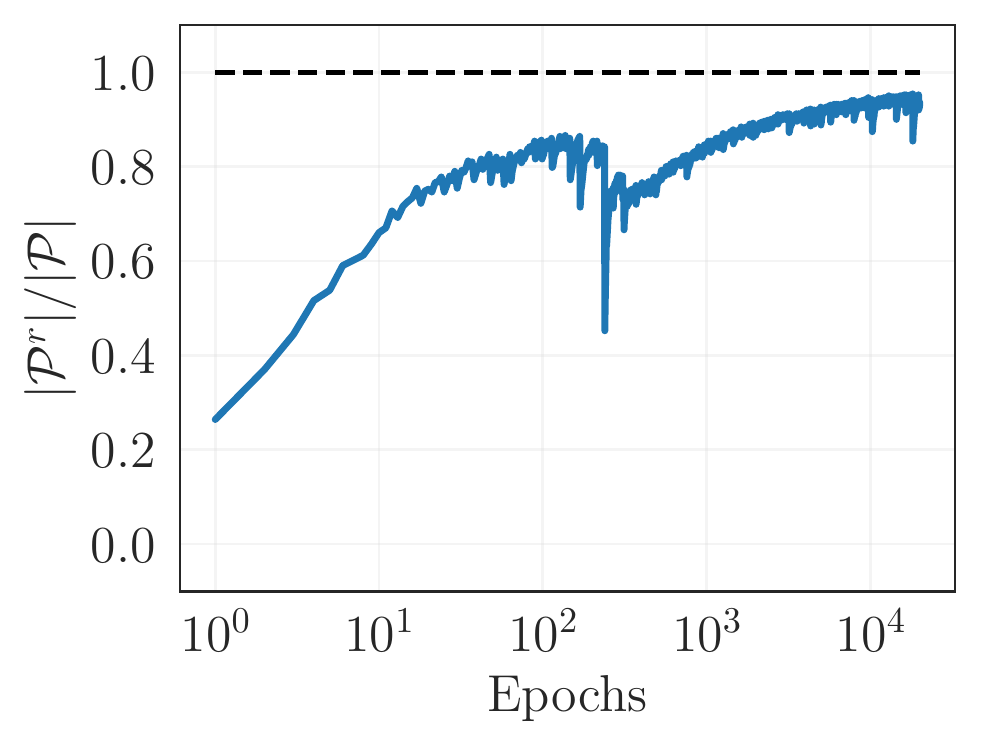}
    \caption{Demonstrating the dynamic evolution of the retained population size over epochs on the Michalewicz Function.}
    \label{fig:michalewicz_retained_size}
\end{figure}

\end{document}